\PassOptionsToPackage{pagebackref=true}{hyperref}

\documentclass[twoside,11pt]{article}

\usepackage{blindtext}

\usepackage[abbrvbib, preprint]{jmlr2e}

\usepackage{lastpage}

\ShortHeadings{The Standard Interpretable Model}{The Standard Interpretable Model}
\firstpageno{1}

\usepackage{array, tabularx}
\usepackage{multicol}
\usepackage{makecell}
\usepackage{booktabs}

\usepackage{amsmath}
\usepackage{amssymb}
\usepackage{mathtools}
\usepackage{amsthm}
\usepackage{cancel}
\usepackage[title]{appendix}
\usepackage{titletoc}
\usepackage[table]{xcolor}
\usepackage[normalem]{ulem}

\newcommand{\inlineimg}[1]{%
  \smash{\raisebox{-.2\height}{\includegraphics[height=.9em]{figs/#1.png}}}%
}

\usepackage{listings}
\usepackage{xcolor}

\definecolor{symmetriesYellow}{HTML}{FCDDB0}
\definecolor{architecturesRed}{HTML}{A8E6D4}
\definecolor{learningBlue}{HTML}{CDBFF5}
\newcommand{\symmetriesText}[1]{\textcolor[HTML]{B85C00}{#1}}
\newcommand{\architecturesText}[1]{\textcolor[HTML]{007A52}{#1}}
\newcommand{\learningText}[1]{\textcolor[HTML]{5B30C8}{#1}}

\theoremstyle{plain}
\newtheorem{theorem}{Theorem}[section]

\newtheorem{lemma}[theorem]{Lemma}

\theoremstyle{definition}
\newtheorem{example}{Example}
\theoremstyle{remark}

\newtheorem*{problem}{\textbf{Problem Statement}}

\usepackage{mdframed}

\newtheorem*{lemma*}{Lemma}
\newtheorem*{proposition*}{Proposition}
\newtheorem*{corollary*}{Corollary}
\newtheorem*{conjecture*}{Conjecture}
\newtheorem*{example*}{Example}

\newmdtheoremenv[
  backgroundcolor=blue!10,   
  linecolor=blue!50,         
  innertopmargin=8pt,
  innerbottommargin=2pt,
  skipabove=2pt,
  skipbelow=2pt
]{definition2}{Definition}

\newcounter{symmetry}

\newmdtheoremenv[
  backgroundcolor=symmetriesYellow,   
  innertopmargin=8pt,
  innerbottommargin=2pt,
  skipabove=2pt,
  skipbelow=2pt,
  nobreak=true
]{definition3}[symmetry]{Symmetry}

\newcounter{constraint}

\newmdtheoremenv[
  backgroundcolor=symmetriesYellow,   
  innertopmargin=8pt,
  innerbottommargin=2pt,
  skipabove=2pt,
  skipbelow=2pt,
  nobreak=true
]{definition4}[constraint]{Constraint}

\newcounter{architecture}

\newmdtheoremenv[
  backgroundcolor=architecturesRed,   
  innertopmargin=8pt,
  innerbottommargin=2pt,
  skipabove=2pt,
  skipbelow=2pt
]{definition5}[architecture]{Architecture}

\newtheorem*{definition6}{Lagrangian}
\surroundwithmdframed[
  backgroundcolor=symmetriesYellow,   
  innertopmargin=8pt,
  innerbottommargin=2pt,
  skipabove=2pt,
  skipbelow=2pt
]{definition6}

\newtheorem*{definition7}{Equations of motion}
\surroundwithmdframed[
  backgroundcolor=learningBlue,   
  innertopmargin=8pt,
  innerbottommargin=2pt,
  skipabove=2pt,
  skipbelow=2pt
]{definition7}

\newtheorem*{definition8}{Parameter update}
\surroundwithmdframed[
  backgroundcolor=learningBlue,
  innertopmargin=8pt,
  innerbottommargin=2pt,
  skipabove=2pt,
  skipbelow=2pt
]{definition8}

\newcommand{\myset}[1]{\myset{#1}}

\usepackage[capitalize,noabbrev]{cleveref}

\newif\ifcomments
\commentsfalse  

\usepackage{pifont}   
\definecolor{DeepGreen}{rgb}{0, 0.5, 0}

\usepackage{booktabs}
\usepackage{wrapfig}
\usepackage{adjustbox}
\usepackage{comment}

\usepackage{tikz}
\usepackage{tikz-cd}
\usetikzlibrary{positioning, arrows.meta, shapes}
\usetikzlibrary{arrows.meta, positioning, quotes, fit, shapes, backgrounds, calc,decorations.pathmorphing}

\tikzstyle{none}=[]
\tikzstyle{white dot}=[fill=white, draw=black, shape=circle, text width=.1cm, inner sep=.03cm]
\tikzstyle{black dot}=[fill=black, draw=white, shape=circle, draw=white, text width=.1cm, inner sep=.03cm]
\tikzstyle{red dot}=[fill={rgb,255: red,163; green,0; blue,2}, draw=black, shape=circle]
\tikzstyle{medium box}=[fill=white, draw=black, shape=rectangle, minimum width=2cm, minimum height=1.cm, text=black]
\tikzstyle{small box}=[fill=white, draw=black, shape=rectangle, minimum width=.5cm, minimum height=.5cm, text=black]
\tikzstyle{triangle}=[fill=white, draw=black, regular polygon, regular polygon sides=3, shape border rotate=90, minimum width=1.cm, text width=.3cm, inner sep=.0cm, align=center, anchor=center, text=black]
\tikzstyle{triangleflip}=[fill=white, draw=black, regular polygon, regular polygon sides=3, shape border rotate=-90, minimum width=1.cm, text width=.3cm, inner sep=.0cm, align=center, anchor=center, text=black]

\tikzstyle{new edge style 0}=[-, fill=none, draw={rgb,255: red,0; green,172; blue,187}]
\tikzstyle{thick edge}=[-, line width=.03cm]
\tikzstyle{norm}=[-, dashed, draw={rgb,255: red,46; green,126; blue,255}]
\tikzstyle{amax}=[-, dashed, draw={rgb,255: red,255; green,128; blue,55}]

\pgfdeclarelayer{nodelayer}
\pgfdeclarelayer{edgelayer}
\pgfsetlayers{nodelayer,background,main,edgelayer}

\tikzset{
    arrowstyledash/.style={->, thick, rounded corners, dash pattern=on 2pt off .6pt, gray},
}

\definecolor{symmetriesYellow}{HTML}{FCDDB0}
\definecolor{architecturesRed}{HTML}{A8E6D4}
\definecolor{learningBlue}{HTML}{CDBFF5}

\newcommand{\standardModelFlow}{
\begin{tikzpicture}[
    node distance=0.8cm and 0.5cm,
    block/.style={
        rectangle, 
        draw=gray!90, 
        thick,
        text width=3.2cm, 
        align=center, 
        minimum height=1cm, 
        rounded corners,
        font=\small
    },
    symBlock/.style={block, fill=symmetriesYellow},
    archBlock/.style={block, fill=architecturesRed},
    learnBlock/.style={block, fill=learningBlue},
    arrow/.style={
        thick, 
        -Stealth, 
        draw=gray!80,
        rounded corners=2pt
    }
]

\node[symBlock] (premises) {Interpretability premises};
\node[symBlock, right=of premises] (sym1) {Interpretability symmetries};
\node[symBlock, right=of sym1] (const) {Interpretability constraints};
\node[symBlock, right=of const] (land) {Interpretable Lagrangian};

\node[archBlock, below right=of land, yshift=1.2cm] (arch) {Interpretable architectures};
\node[archBlock, right=of arch] (archsol) {General solution};
\node[learnBlock, above right=of land, yshift=-1.2cm] (opt) {Trajectories towards interpretability};

\node[learnBlock, right=of opt] (grad) {Parameter update};

\draw[arrow] (premises) -- (sym1);
\draw[arrow] (sym1) -- (const);
\draw[arrow] (const) -- (land);

\draw[arrow] (land.east) -- ++(0.2,0) |- (arch.west);
\draw[arrow] (land.east) -- ++(0.2,0) |- (opt.west);

\draw[arrow] (arch) -- (archsol);
\draw[arrow] (opt) -- (grad);

\end{tikzpicture}
}

\newcommand{\prototypeEncoder}{
\begin{tikzpicture}
\begin{axis}[
    axis equal image,
    axis lines = left,
    ylabel = {$c_w$},
    xmin = 0, xmax = 5,
    ymin = 0, ymax = 1.9,
    xtick = {.8, 2, 2.9, 4.1},
    xticklabels = {$z_{(1)}$, \color{red}$z$, $z_{(2)}$, $z_{(3)}$},
    grid = major,
    grid style = {dashed, gray!30},
    thick
]

\addplot[only marks, mark=*, blue] coordinates {(.8, 0.5) (2.9, 0.8) (4.1, 1.7)};

\draw[ultra thick, blue, -Stealth] (axis cs:.8,0) -- (axis cs:.8,0.5) 
    node[midway, right] {$\theta_1$};

\draw[thick, blue!20] (axis cs:2.9,0) -- (axis cs:2.9,0.5);
\draw[ultra thick, blue, -Stealth] (axis cs:2.9,0.5) -- (axis cs:2.9,0.8) 
    node[midway, right] {$\theta_2$};

\draw[thick, blue!20] (axis cs:4.1,0) -- (axis cs:4.1,0.8);
\draw[ultra thick, blue, -Stealth] (axis cs:4.1,0.8) -- (axis cs:4.1,1.7) 
    node[midway, right] {$\theta_3$};

\addplot[red, samples=100, domain=0:5, smooth, opacity=0.3] {
    (exp(-(x-.8)^2)*0.5 + exp(-(x-2.9)^2)*0.8 + exp(-(x-4.1)^2)*1.7) / 
    (exp(-(x-.8)^2) + exp(-(x-2.9)^2) + exp(-(x-4.1)^2))
};

\pgfmathsetmacro{\valz}{(exp(-(2-.8)^2)*0.5 + exp(-(2-2.9)^2)*0.8 + exp(-(2-4.1)^2)*1.7) / (exp(-(2-.8)^2) + exp(-(2-2.9)^2) + exp(-(2-4.1)^2))}
\addplot[only marks, mark=*, red] coordinates {(2, \valz)};

\draw[dashed, red, thin] (axis cs:2,0) -- (axis cs:2,\valz);
\draw[dashed, red, thin] (axis cs:0,\valz) -- (axis cs:2,\valz);

\draw[blue, opacity=0.3] (axis cs:2.9,0) -- (axis cs:2.9,.8);
\draw[blue, opacity=0.3] (axis cs:4.1,0) -- (axis cs:4.1,1.7);

\node[anchor=south, red] at (axis cs:2, \valz) {\small $c_w(z)$};

\end{axis}
\end{tikzpicture}
}

\newcommand{\predictionConceptAlignment}{
\begin{tikzpicture}[
    x={(0.866cm,-0.3cm)}, y={(0.866cm,0.3cm)}, z={(0cm,1cm)}, 
    scale=2
]   
    \begin{scope}[
        shift={(.,.,.)}, 
        rotate around x=0,    
        rotate around y=0,     
        canvas is xy plane at z=0
    ]
        
        \fill[blue!10, opacity=0.7] (0,0) rectangle (2,2);
        \draw[step=0.5, blue!30, very thin] (0,0) grid (2,2);
        
        \draw[thick, blue!80!black] (0,0) rectangle (2,2);
        
        \node[blue, font=\small, inner sep=1pt, opacity=0.8, text opacity=1, rotate=20] at (0.2,0.9) {${\scriptstyle \text{span}\{\nabla_z c_1, \nabla_z c_2\}}$};
    \end{scope}
    
    \draw[->, >=stealth, thick, red] (1,1,0) -- (1.4, 1.6, .7) node[midway, left, yshift=14pt, xshift=14pt] {$\nabla_z f$};
    \draw[->, >=stealth, thick, blue] (1,1,0) -- (2.5, 1, 0) node[left, yshift=-5pt] {${\scriptscriptstyle \nabla_z c_1}$};
    \draw[->, >=stealth, thick, blue] (1,1,0) -- (1, 3, 0) node[left, yshift=5pt] {${\scriptscriptstyle \nabla_z c_2}$};

    \coordinate (O) at (1,1,0);
    \coordinate (V1) at (1.4, 1.6, 0.7); 
    \coordinate (V2) at (2.5, 1, 0);     
    \coordinate (V3) at (1, 3, 0);       

    \pgfmathsetmacro{\vx}{1.4-1} \pgfmathsetmacro{\vy}{1.6-1} \pgfmathsetmacro{\vz}{0.7-0}
    \coordinate (A) at (\vx, \vy, \vz); 
    \pgfmathsetmacro{\bx}{2.5-1} \pgfmathsetmacro{\by}{1-1} \pgfmathsetmacro{\bz}{0-0}
    \coordinate (B) at (\bx, \by, \bz);
    \pgfmathsetmacro{\cx}{1-1} \pgfmathsetmacro{\cy}{3-1} \pgfmathsetmacro{\cz}{0-0}
    \coordinate (C) at (\cx, \cy, \cz);

    \coordinate (BplusC) at ($(O)+(B)+(C)$);
    \coordinate (AplusB) at ($(O)+(A)+(B)$);
    \coordinate (AplusC) at ($(O)+(A)+(C)$);
    \coordinate (TopFar) at ($(O)+(A)+(B)+(C)$);

    \fill[gray!20, opacity=0.4] (O) -- ($(O)+(B)$) -- (BplusC) -- ($(O)+(C)$) -- cycle; 
    \fill[blue!10, opacity=0.3] ($(O)+(B)$) -- (BplusC) -- (TopFar) -- (AplusB) -- cycle; 
    \fill[red!10, opacity=0.3] ($(O)+(C)$) -- (BplusC) -- (TopFar) -- (AplusC) -- cycle; 

    \draw[gray!40, thin] (BplusC) -- (TopFar);
    \draw[gray!40, thin] (AplusB) -- (TopFar);
    \draw[gray!40, thin] (AplusC) -- (TopFar);
    \draw[gray!40, thin] ($(O)+(B)$) -- (BplusC) -- ($(O)+(C)$);

    \draw[gray!60, dashed] (V1) -- (AplusB);
    \draw[gray!60, dashed] (V1) -- (AplusC);
    \draw[gray!60, dashed] (V2) -- (AplusB);
    \draw[gray!60, dashed] (V3) -- (AplusC);
    
    \draw[->, >=stealth, thick, red, dashed] (1,1,0) -- (1.4, 1.6, 0) node[midway, below, xshift=14pt, yshift=3pt] {${\scriptscriptstyle (\nabla_z f).\mathfrak{g}_c}$};
    
    \draw[-, thick, red, dotted] (1.4,1.6,0) -- (1.4, 1.6, .7);

\end{tikzpicture}
}

\usepackage{pgfplots}
\pgfplotsset{compat=1.18}
\usetikzlibrary{3d}

\usepackage[most]{tcolorbox}
\usepackage{listings}
\usepackage{xcolor}
\usepackage{xfp} 

\definecolor{dracBackground}{HTML}{282a36}
\definecolor{dracCurrentLine}{HTML}{44475a}
\definecolor{dracForeground}{HTML}{f8f8f2}
\definecolor{dracComment}{HTML}{6272a4}
\definecolor{dracCyan}{HTML}{8be9fd}
\definecolor{dracGreen}{HTML}{50fa7b}
\definecolor{dracOrange}{HTML}{ffb86c}
\definecolor{dracPink}{HTML}{ff79c6}
\definecolor{dracPurple}{HTML}{bd93f9}
\definecolor{dracRed}{HTML}{ff5555}
\definecolor{dracYellow}{HTML}{f1fa8c}

\definecolor{macRed}{HTML}{FF5F56}
\definecolor{macYellow}{HTML}{FFBD2E}
\definecolor{macGreen}{HTML}{27C93F}

\lstdefinelanguage{draculaPython}{
    language=Python,
    basicstyle=\ttfamily\small\color{dracForeground},
    keywordstyle=\color{dracPink},       
    stringstyle=\color{dracYellow},     
    commentstyle=\color{dracComment},    
    identifierstyle=\color{dracForeground},
    keywordstyle=[2]\color{dracCyan},    
    keywords=[2]{Variable, Delta},
    emph={concepts, parents, distribution}, 
    emphstyle=\color{dracOrange},
    breaklines=true,
    showstringspaces=false,
}

\newtcblisting{mycodeeditor}[2]{
    listing engine=listings,
    listing options={
        language=draculaPython,
        basicstyle=\ttfamily\color{dracForeground}\fontsize{#2pt}{\fpeval{1.2*#2}pt}\selectfont,
        breaklines=true,
        columns=flexible,
        numbers=left,                    
        numberstyle=\tiny\color{dracComment}, 
        numbersep=-5pt,                  
        xleftmargin=5pt,                
    },
    listing only,
    enhanced,
    colback=dracBackground,
    colframe=dracBackground,
    arc=8pt,
    boxrule=0pt,
    top=0pt, 
    left=0pt,
    right=0pt,
    bottom=0pt,
}

\usepackage{tabularray}

\newcommand{\printappendixtoc}{%
  \begingroup%
    \setlength{\parskip}{3pt}%
    \newif\ifappendixmode\appendixmodefalse%
    \def\setcounter##1##2{\appendixmodetrue}%
    \def\contentsline##1##2##3##4{%
      \ifappendixmode%
        \noindent%
        \def\tempa{##1}\def\tempb{section}%
        \ifx\tempa\tempb\else\hspace{1.5em}\fi%
        ##2\dotfill ##3\par%
      \fi%
    }%
    \def\contentsfinish{}%
    \def\numberline##1{##1\hspace{0.5em}}%
    \InputIfFileExists{\jobname.toc}{}{}%
  \endgroup%
}

\begin{document}

\title{{\huge\textit{The Standard Interpretable Model}}\\
\vspace{0.125cm}
{\large A general theory of interpretable machine learning to deductively design interpretable methods using Lagrangian mechanics}}

\author{%
{\footnotesize
\begin{tabular}{ccc}
  \bf Pietro Barbiero\thanks{Primary author. Contact: \texttt{pietro.barbiero@ibm.com}.} & \bf Giovanni De Felice\thanks{Contributed to initial conceptualisation, technical discussions, writing process, and experiments.} & \bf Mateo Espinosa Zarlenga\thanks{Contributed to technical discussions and writing process.} \\
  \small\it IBM Research (CH) & \small\it Universit\`a della Svizzera Italiana (CH) & \small\it University of Oxford (UK) \\[1em]
  \bf Francesco Giannini$^\ddagger$ & \bf Filippo Bonchi$^\ddagger$ & \bf Mateja Jamnik$^\ddagger$ \\
  \small\it Universit\`a di Pisa (IT) & \small\it Universit\`a di Pisa (IT) & \small\it University of Cambridge (UK) \\[1em]
  \bf Giuseppe Marra\thanks{Provided theoretical, experimental, and writing oversight from conception to submission.} & \bf Ruggero Noris$^\S$ & \\
  \small\it KU Leuven (BE) & \small\it Institute of Physics of the Czech\\ & \small\it  Academy of Sciences (CZ) & \\
\end{tabular}
}
}


\maketitle

\begin{abstract}
As Artificial Intelligence models grow in complexity, interpretability has become an indispensable tool for understanding, debugging, and controlling their computations. 
However, \emph{interpretability lacks general theories to deductively design interpretable methods}. This gap between theories and methods results in a fragmented literature and inconsistent evaluation protocols.
To fill this gap, we introduce the \emph{Standard Interpretable Model} (SIM), 
a general theory grounded in Lagrangian mechanics that enables the deductive design of interpretable methods.
Specifically, the SIM summarises, in a set of \symmetriesText{premises}, what interpretability is for a target user. From these premises, the SIM systematically derives interpretability \symmetriesText{symmetries} and corresponding \symmetriesText{constraints}, which shape the landscape of a \symmetriesText{Lagrangian} whose minima correspond to optimal interpretable models. To reach the minima, one can either \learningText{update the parameter values of an opaque model} to make it more interpretable or
\architecturesText{compile constraints into an interpretable architecture}.
We empirically show that the SIM identifies and solves limitations of existing methods (including traditional, concept-based, and mechanistic interpretability), highlights underexplored research directions, and informs the design of core programming interfaces. Beyond being a research method, the deductive nature of the SIM offers pedagogical grounding for interpretability curricula and may shift the scientific community's perspective of a discipline that has long been fragmented.
\end{abstract}

\begin{keywords}
  Interpretable AI, concept-based interpretability, geometric deep learning, machine learning, Lagrangian mechanics
\end{keywords}

\begin{figure}[!h]
    \centering
    \resizebox{\columnwidth}{!}{\standardModelFlow}
    \caption{{\small Standard Interpretable Model (SIM) yielding operational interpretability theories.}}
    \label{fig:standard-model}
\end{figure}




\section{Introduction}
As Artificial Intelligence (AI) models grow in complexity, it is necessary to understand their computations to diagnose errors, steer predictions, ensure fairness, and attain legal compliance \citep{lee2021development,meng2022interpretability,richmond2024explainable}. To this end, interpretability research has produced a new generation of models \citep{senn,protopnet,cem,oikarinenlabel,steerling2026paper} whose decisions are easy for humans to understand and whose predictive performance is comparable to that of powerful opaque architectures such as Deep Neural Networks (DNNs).

While significant, we argue that interpretable AI currently lacks a general, systematic method for developing operational theories of interpretability.
We argue this, despite key foundational efforts by \citet{kim2016examples}, \citet{biran2017explanation} \citet{doshi2017towards}, \citet{lipton2018mythos}, \citet{miller2019explanation}, \citet{watson2021explanation}, \citet{facchini2021first}, \citet{giannini2024categorical},  \citet{tull2024towards}, and others (see Section~\ref{sec:related-works}), as previous works have failed to deductively translate interpretability principles into interpretable methods, architectures, loss functions, constraints, or metrics. This persistent gap between the development of interpretability theories and the design of interpretable methods results in a fragmented literature, inconsistent evaluation protocols, and methods that are difficult to systematically compare, reproduce, or build upon.

\paragraph{How modern theories analyse complex phenomena.}
To build a complete and operational theory of interpretability, that is, a theory that formally describes when models can be understood by humans, we gain inspiration from how previous scientific theories have successfully made complex phenomena intelligible. Across mathematics and physics, theories explaining a phenomenon of interest are often proposed by first identifying which \emph{properties remain invariant under transformations}. For instance, the gravitational field produced by a spherical body is independent of the angle of measurement/observation, revealing a fundamental property of gravitational fields.
The idea of studying phenomena by isolating their structural properties from contingent features traces back to~\citet{platotimaeus}'s theory of \emph{immutables}, where he argued that: ``That which is apprehended by intelligence and reason is always in the same state''. This idea had a significant influence on the history of mathematics and the sciences. For example, ~\citet{klein1893comparative} proposed rethinking the whole field of geometry as a theory of immutables, known there as \emph{symmetries}. 
A few decades later,~\citet{einstein1916grundlagen} used symmetries to formulate special and general relativity, and~\citet{noether1918invariante} formalised the connection between symmetries and conserved quantities, opening the way to what we know today as the ``standard model'', a unified theory of the fundamental interactions in our universe based on symmetries \citep{weyl1929electron,weinberg1967model}. 
More recently, this idea has found its way into AI, where symmetries have been used as an organising principle for ``geometric'' Deep Learning (DL) \citep{bronstein2021geometric}.

\paragraph{Contributions.}
Inspired by these theories, we propose (\emph{contribution I}) the \emph{Standard Interpretable Model} (SIM, Figure~\ref{fig:standard-model}), an interpretability theory based on ``immutables''. The SIM is a general theory of interpretable Machine Learning (ML) for deductively deriving interpretable methods using Lagrangian mechanics (Section~\ref{sec:sim}).
As an example, we use the SIM to develop (\emph{contribution II}) a theory of interpretability relative to a user endowed with a formal language and bounded computation (Table~\ref{tab:standard-model}, Section~\ref{sec:rational-sim}). We experimentally validate this theory in controlled settings and demonstrate that it can address limitations of large-scale models (Section~\ref{sec:experiments}).
Finally, we show (\emph{contribution III}) how the SIM can be used for (1) comparing interpretable architectures and identifying often overlooked limitations, and (2) guiding the design of the programming interfaces for interpretable ML (Section~\ref{sec:operationalisation}).

\section{Standard Interpretable Model}
\label{sec:sim}
The goal of the Standard Interpretable Model (SIM, Figure~\ref{fig:standard-model}) is:
\begin{problem}
\emph{Formalise when a ML model, represented by a function~$f$, is interpretable for a target entity~$h$, and provide a deductive method for constructing such~$f$.}
\end{problem}
\noindent
To build a general theory of interpretable ML, the SIM must fully capture what an interpretable ML model is and how it is constructed. To do this, it needs to specify the four components that characterise any ML model: the learnable \textit{parameters} $\theta \in \Theta$, the \textit{training dataset} $\mathcal{D}$, the \textit{objective function} $V(\mathcal{D}, \theta)$, and the \textit{parameter dynamics} $T$. The model's \emph{parameters} are the values determining how~$f$ behaves. The \emph{objective function} $V(\mathcal{D}, \theta)$ specifies the desired behaviour of~$f$ on training examples in $\mathcal{D}=\{(z_i,y_i)\}_{i=1}^d$. While in simple settings $V$ typically measures the error $\mathcal{L}$ of~$f$ on the target problem, in our context it must also capture the interpretability of~$f$.
Finally, the \emph{parameter dynamics} $T$ governs how $\theta$ changes over time (i.e., $\partial_t \theta$), driving~$f$ towards the parameter values $\theta^*$ that minimise $V$. The SIM summarises all these components in a single \emph{Lagrangian} \citep{lagrange1788mecanique}: 
\begin{equation}
    \label{eq:general_lagrangian}
    L(\mathcal{D},\theta) = \underbrace{T(\partial_t\theta)}_{\text{parameter dynamics}} - \underbrace{V(\mathcal{D},\theta)}_{\text{objective function}}
\end{equation}
where $V$ determines the \emph{interpretability landscape} of $f$ (Figure~\ref{fig:landscape}), that is, how the interpretability of~$f$ (and its prediction error $\mathcal{L}$) changes with respect to its parameters.
\begin{figure}[!h]
    \centering
    \includegraphics[width=0.7\linewidth]{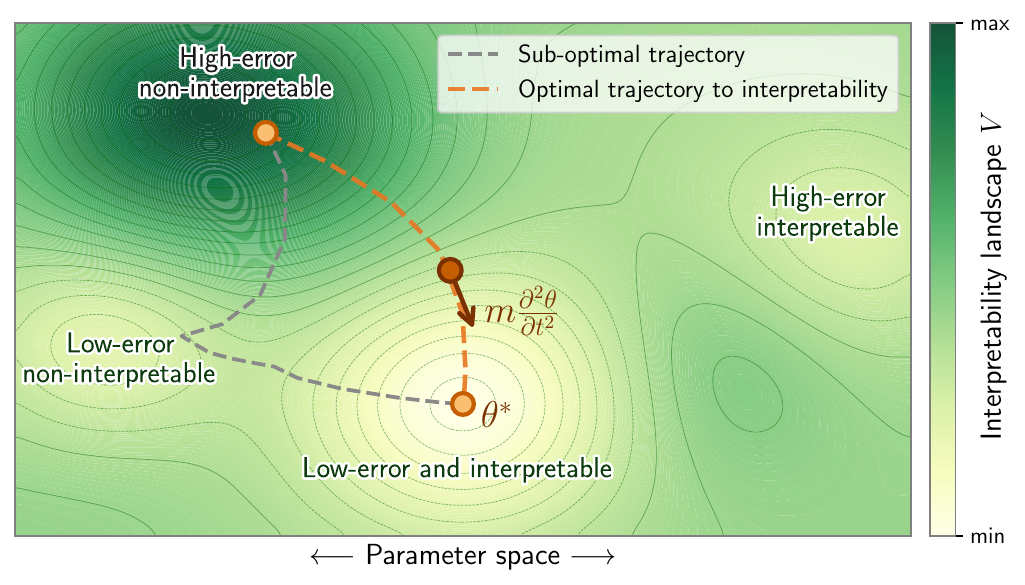}
    \caption{The \emph{Standard Interpretable Model} characterises interpretable ML models through a Lagrangian $L=T-V$. The \emph{interpretability landscape} $V$ measures a model's interpretability as a function of its parameters $\theta$, where lower values of $V$ correspond to more interpretable and accurate models. The \emph{parameter dynamics} $T$ dictates how $\theta$ changes over time, determining how the landscape is explored. Applying the principle of least action to $L$, we obtain a trajectory ($m \frac{\partial^2 \theta}{\partial t^2}$) towards the parameters $\theta^*$ corresponding to an accurate \textit{and} interpretable model.}
    \label{fig:landscape}
\end{figure} 

More specifically, the SIM operates in three phases. In \symmetriesText{Phase~I}, the SIM provides a deductive method for defining the interpretability landscape $V$ in terms of \textit{symmetries}. Then, in \learningText{Phase~II} and \architecturesText{Phase~III}, the SIM provides two (non-mutually exclusive) strategies to leverage the Lagrangian $L$ for building and training interpretable ML models. These phases can be further broken down into the following individual steps:
\begin{enumerate}
    \item \symmetriesText{\emph{Interpretability premises}} (Section~\ref{sec:premises}): First, one must describe what counts as ``interpretable'' for a \textit{target entity} $h$, the user relative to which interpretability is defined and assessed (e.g., an expert, a group of humans, an idealised user, other AIs, etc.). This makes $h$ an \emph{active} participant whose characteristics explicitly parametrise the theory. Since interpretability is inherently subjective \citep{doshi2017towards, miller2019explanation}, \emph{interpretability theories derived from the SIM are therefore user-aware}.
    \item \symmetriesText{\emph{Interpretability symmetries}} (Section~\ref{sec:symmetries}): Next, we formalise interpretability premises in the form of properties that should remain invariant under transformations~$\mathfrak{g}$ (i.e., \textit{symmetries})
    to guarantee human understanding of a model $f$. These symmetries yield a \emph{formal definition of interpretability} relative to the target entity $h$.
    \item \symmetriesText{\emph{Interpretability constraints}} (Section~\ref{sec:constraints}): We then translate interpretability symmetries into an \emph{operational form}. Specifically, each interpretability constraint $q(z) \leq 0$ serves as both a metric quantifying how much a model violates the symmetry and a building block for constructing architectures that satisfy the symmetry by design.
    \item \symmetriesText{\emph{Lagrangian of interpretable machine learning models}} (Section~\ref{sec:landscape}): We can now fully characterise an interpretable ML model by introducing interpretability constraints in the objective function $V$. This implies that $V$ will describe \emph{how the interpretability of a model~$f$ changes as a function of its parameters}, with its minima corresponding to parameters $\theta^*$ that make~$f$ interpretable. To compute these minima, the SIM provides two (non-mutually exclusive) options: find a trajectory towards the parameter values $\theta^*$ that make~$f$ interpretable, or build an interpretable architecture.
    \item \learningText{\emph{Trajectories towards interpretability}} (Section~\ref{sec:learning}): The first approach for finding the parameters $\theta^*$ that \emph{make a generic function~$f$ both accurate and interpretable} is through optimisation. For this, we can leverage our Lagrangian formulation and apply the principle of least action \citep{de1744accord,euler1744methodus} to find the stationary points of the integral of $L$ over time $\iint L \, \mathrm{d}z \, \mathrm{d}t$. Discretising the trajectories yields iterative algorithms (such as gradient descent) to update the parameters $\theta$ from their initial configuration to $\theta^*$. This approach can be applied to any function~$f$, but it only locally encourages interpretability symmetries, rather than guaranteeing them. 
    \item \architecturesText{\emph{Interpretable architectures}} (Section~\ref{sec:architecture}): A second approach ensuring a model $f$ satisfies an interpretability constraint is to \textit{compile} \citep{selman1991knowledge,selman1996knowledge} the constraint into the model's architecture. Rather than penalising parameter configurations that violate a constraint $q(z) \leq 0$, this process yields a function~$f$ whose structure guarantees $q(z) \leq 0$ for all $x \in \text{Dom}(f)$, even out of the training distribution.
    Since the architecture satisfies all interpretability constraints by construction, the Lagrangian's trajectories lead to parameters $\theta^*$ that make~$f$ accurate without requiring a term to enforce interpretability.
\end{enumerate}
This six-step process brings two key advantages over existing frameworks of interpretability:
\paragraph{A formalism for reconciling methods, architectures, losses, and metrics.}
The differential-geometric formalism adopted by our use of Lagrangian mechanics allows the SIM to reconcile the fragmentation of formalisms, and informal statements, used
in the interpretability literature. This formalism enables interpretability to be analysed \emph{locally}, by studying how interpretable properties vary under infinitesimal changes in the inputs, and \textit{globally}, when local interpretability holds everywhere in the domain. Furthermore, our use of Lagrangian mechanics provides a single language for describing metrics, losses, and architectures based on interpretability symmetries. We note that, while this formalism is continuous, discrete interpretable methods are subsumed as special cases, as discussed later.

\paragraph{A user-aware standard theory: different premises, different theories.}
Since what counts as interpretable depends on the target user, interpretability premises are not unique. The SIM captures this property of interpretability by providing a systematic procedure to derive symmetries, constraints, architectures, learning objectives, and metrics from any set of user-derived premises. In this sense, the SIM can be viewed as a template for constructing interpretability theories. Changing the premises yields different, but internally consistent, theories of interpretability, much as changing Euclid's postulates yields different geometries.


\medskip
\noindent
To more clearly describe our proposed framework, we illustrate the SIM procedure step-by-step on a simple use case. 

\paragraph{Dummy standard model.}
Let $\mathcal{L}(f(z;\theta),y)$ 
be a function capturing the error made by~$f$ when predicting a target $y$, and let $T=\frac{1}{2} m \partial_t \theta^\top \partial_t \theta, m \in \mathbb{R}$ be the parameter dynamics:
\begin{enumerate}
    \item 
    \symmetriesText{\emph{Premise}}: We first specify what $f$ should satisfy. For our dummy model this is: ``The output of $f$ must remain the same regardless of the order of its input features''.
    \item 
    \symmetriesText{\emph{Symmetry}}: Next, we formalise the premise via the transformation $\mathfrak{g}: z \mapsto \pi(z)$, which permutes the input features (e.g., from $[2, 3]$ to $[3, 2]$). Any function respecting this premise must satisfy the invariance condition $f([2, 3]) = f(\mathfrak{g}.[2, 3])$.
    \item 
    \symmetriesText{\emph{Constraint}}: We can translate the symmetry into a constraint $\|f(z) - f(\mathfrak{g}.z)\| = 0$.
    \item 
    \symmetriesText{\emph{Lagrangian}}: Once we have a constraint, we write the Lagrangian that characterises the ML model (data, parameters, objective, and optimisation) in a single equation: 
    $$L(z,y,\theta) = \frac{1}{2} m \partial_t \theta^\top \partial_t \theta - \mathcal{L}(f(z;\theta),y) - \lambda \| f(z;\theta) - f(\mathfrak{g}.z;\theta)\|$$
    \item 
    \learningText{\emph{Trajectory}}: Next, we find the parameters $\theta^*$ that minimise $V$ by applying the principle of least action to find the stationary points of $\iint L \, \mathrm{d}z \, \mathrm{d}t$. Discretising this trajectory yields parameter updates that encourage permutation invariance: 
    $$\theta_{t+1} = \theta_t + (\theta_t - \theta_{t-1}) - \frac{(\Delta t)^2}{m} \nabla_\theta \left(\mathcal{L}(f(z;\theta),y) + \lambda \| f(z;\theta) - f(\mathfrak{g}.z;\theta)\| \right)$$
    \item 
    \architecturesText{\emph{Architecture}}: Alternatively, we can compile the constraint directly into the model structure, for example using a Deep Set \citep{zaheer2017deep} $f(z) = f_2 \left(\sum f_1(z)\right)$  (where $f_i$ are auxiliary functions), guaranteeing that the constraint always holds. In this case, the machine learning model is characterised by the Lagrangian:
    $$L(z,y,\theta_1,\theta_2) = \sum_i \frac{1}{2} m \partial_t \theta_i^\top \partial_t \theta_i - \mathcal{L}(f_2 (\sum f_1(z; \theta_1); \theta_2),y )$$
\end{enumerate}


\section{Standard Interpretable Model for Bounded and Formal Entities}
\label{sec:rational-sim}
Having described the SIM and illustrated its use to yield a simple model, we show how to use it to derive a theory of interpretability relative to a target entity endowed with formal language and bounded computation.
We do this by (1) clarifying and defining our target entities,  (2) stating the interpretability premises of our framework, and (3) translating these premises into symmetries, constraints, architectures, and learning procedures.

\begin{table}[!h]
\centering
\makebox[\textwidth][c]{
\resizebox{1.3\textwidth}{!}{
\begin{tblr}{
  colspec = {Q[c,m] Q[c,m] Q[c,m] Q[c,m]}, 
  row{1,2,7,10} = {font=\bfseries\upshape}, 
  cell{2-6}{2-4} = {symmetriesYellow},
  cell{11-13}{2-4} = {architecturesRed},
  cell{8-9}{2-4} = {learningBlue},
  hline{1,2,7,8,10,11,13} = {0.6pt},
  vline{3,4} = {2-5}{symmetriesYellow!200, 0.5pt},    
  vline{3,4} = {8}{learningBlue!200, 0.5pt},
  vline{3,4} = {11}{architecturesRed!200, 0.5pt},
  column{1} = {font=\itshape},
  rowsep = 7pt,
}
& \SetCell[c=3]{c} Interpretability symmetries & & \\
Premise & Shared concept semantics & Prediction-concept dependency & Bounded reasoning \\
Transformation & 
$\displaystyle \{\mathfrak{g}_w \mid s_1 < s_2 \implies \mathfrak{g}_w(s_1) < \mathfrak{g}_w(s_2)\}$ & 
$\displaystyle \left\{\mathfrak{g}_c \mid \mathfrak{g}_c^2 = \mathfrak{g}_c,\ \text{im}(\mathfrak{g}_c) \subseteq \text{col}(\nabla_z c) \right\}$ & 
$\displaystyle \left\{ \mathfrak{g}_\phi \mid \mathfrak{g}_\phi\big(\phi(c)\big) = \beta\big(c, \phi(\alpha(c))\big) \right\}$ \\
Invariance & $\displaystyle \chi(c_w) = \chi(\mathfrak{g}_w \circ c_w)$ & $\displaystyle \exists\mathfrak{g}_{c}: \; \mathfrak{g}_{c} . (\nabla_z f)^\top = (\nabla_z f)^\top$ & 
$\displaystyle 
K^{[h]}(\mathfrak{g}(\phi),\dots, \nabla^{(\nu)}\mathfrak{g}(\phi)) = \mu(c,\phi) K^{[h]}(\phi, \dots, \nabla^{(\nu)}_c \phi) = 0
$ \\
Constraint & $\displaystyle \mathbb{I}_{\Delta c_w^{[h]} > 0} \cdot \gamma(-\Delta c_w) = 0$ & 
$\displaystyle 1 - \frac{\|Q_c^\top Q_f\|_F^2}{\mathrm{rank}(\nabla_z f)} = 0$
& $\displaystyle K^{[h]}(\phi, \dots, \nabla_c^{(n)} \phi) = 0$ \\ 
Lagrangian  & \SetCell[c=3]{c} 
$\displaystyle \begin{array}{c}
    L(\theta_{f,c,\phi}, \mathcal{D}, y, K^{[h]}) = T - V = \underbrace{\sum_{i \in \{f, c, \phi\}} \frac{1}{2} m (\partial_t \theta_i)^T (\partial_t \theta_i)}_{\text{parameter dynamics}} - \underbrace{\mathcal{L}(f(z;\theta_f), y)}_{\text{prediction error}}\\[7ex]
    - \underbrace{\lambda_1 \sum_{w=1}^m \sum_{z_i \in \mathcal{D}} \mathbb{I}_{\Delta c_w^{[h]}(z, z_i) > 0} \cdot \gamma(-\Delta c_w(z, z_i;\theta_c))}_{\text{Constraint~\ref{constraint:1}}} 
    - \underbrace{\lambda_2 \left(1 - \frac{\|Q_c^\top Q_f\|_F^2}{\mathrm{rank}(\nabla_z f)}\right)}_{\text{Constraint~\ref{constraint:2}}}
    - \underbrace{K^{[h]}(\phi(c(z;\theta_c);\theta_\phi), \dots, \nabla_c^{(n)} \phi(c(z;\theta_c);\theta_\phi))}_{\text{Constraint~\ref{constraint:3}}}
\end{array}$ &  &  \\ 

& \SetCell[c=3]{c} Trajectories towards interpretability & & \\
{Equation\\of\\motion} & 
$\begin{aligned}
m \frac{\partial^2 \theta_f}{\partial t^2} = &-\nabla_{\theta_f} \mathcal{L}(f(z;\theta_f), y)\\
&- \lambda_2 \, \left(1 - \frac{\|Q_c^\top Q_f\|_F^2}{\mathrm{rank}(\nabla_z f)}\right)
\end{aligned}$ &
$\displaystyle 
\begin{aligned}
    m \frac{\partial^2 \theta_c}{\partial t^2} = 
    &- \lambda_1 \sum_{\substack{w=1, \dots, m \\ z_i \in \mathcal{D}}} \mathbb{I}_{\Delta c_w^{[h]} > 0} \nabla_{\theta_c} \gamma(-\Delta c_w(z,z_i;\theta_c)) \\
    &- \lambda_2 \, \nabla_{\theta_c} \left(1 - \frac{\|Q_c^\top Q_f\|_F^2}{\mathrm{rank}(\nabla_z f)}\right) \\
    &- \lambda_3 \, \nabla_{\theta_c} K^{[h]}(\phi(c;\theta_\phi), \dots, \nabla_c^{(n)}\phi(c;\theta_\phi))
\end{aligned}
$  & 
$\displaystyle \begin{aligned}
m \frac{\partial^2 \theta_\phi}{\partial t^2} = - \lambda_3 \, \nabla_{\theta_\phi} K^{[h]}(\phi(c;\theta_\phi), \dots, \nabla_c^{(n)}\phi(c;\theta_\phi))
\end{aligned}$ \\
{Parameter\\update} & \SetCell[c=3]{c} $\displaystyle \theta_{t+1} = \theta_{t} + \underbrace{(\theta_{t} - \theta_{t-1})}_{\text{momentum}} + \underbrace{\frac{(\Delta t)^2}{m}}_{\text{learning rate}} \underbrace{F(\theta_{t})}_{\text{total gradient force}}$ &  &  \\ 

& \SetCell[c=3]{c} Interpretable architectures & & \\
Architecture & $\displaystyle c_w(z) = \sum_{i=1}^N \beta_i(z) \left( \sum_{k=1}^N \theta_k \mathbb{I}_{c_w^{[h]}(z_i) \geq c_w^{[h]}(z_k)} \right)$ & $\displaystyle f = \phi(c_1, \ldots, c_l)$ & $\displaystyle \phi_\theta(c) = \sum_{k=1}^{n} \theta_k \psi_k(c), \quad \phi_\theta(c) \in \ker(K^{[h]})$ \\ 
{General \\ Solution} & \SetCell[c=3]{c} $\displaystyle \underbrace{f = \phi_\theta^{[h]}}_{\text{Constraint~\ref{constraint:2}}} \left(\left[\underbrace{\sum_{i=1}^N \beta_i(z) \left( \sum_{k=1}^N \theta_k \mathbb{I}_{c_w^{[h]}(z_i) \geq c_w^{[h]}(z_k)} \right)}_{\text{Constraint~\ref{constraint:1}}}\right]_{w=1}^l\right), \quad \underbrace{\phi_\theta^{[h]}\in \ker(K^{[h]})}_{\text{Constraint~\ref{constraint:3}}}$ &  &  \\ 
\end{tblr}
}}
\caption{\textbf{Standard Interpretable Model for bounded and formal entities:} A theory of interpretability based on three premises. These premises characterise, relative to a target entity~$h$, the interpretability of a model~$f$, concept maps~$c$, and concept compositions~$\phi$ in terms of invariances. These invariances yield constraints that form the interpretability landscape of a Lagrangian \symmetriesText{{(Top, Phase~I)}}. To move towards interpretable and accurate models, we can either obtain the equations of motion of the parameters $\theta_f, \theta_c, \theta_\phi$ that turn any ML model into an interpretable one \learningText{{(Middle, Phase~II)}}, or compile the constraints to obtain interpretable architectures which satisfy the invariance properties by design \architecturesText{{(Bottom, Phase~III)}}.} 
\label{tab:standard-model}
\end{table}

\subsection{Premises of interpretability for bounded and formal entities}
\label{sec:premises}

In this section, we consider target entities~$h$ that have (i)~a fixed vocabulary of symbols (\textit{syntax}), (ii)~a mechanism for assigning meanings to each symbol (\textit{semantics}), and (iii)~time or computational limits. This allows us to describe what it means for a model~$f$ to be interpretable for a \textit{realistic} user $h$ by considering which concepts matter for $h$, which semantics are assigned to different symbols by $h$, and which forms of reasoning are admissible by $h$. Hence, the model derived below captures the notion that two users may assign different semantics to a word (e.g., ``kernel''), or that they may differ in their reasoning ability.

With such target entity $h$ in mind, we use the SIM to derive a theory of interpretability from three key premises that capture common notions attributed to interpretability:

\paragraph{Premise I: shared concept semantics (Section~\ref{sec:symmetryI}).}
Interpretability requires shared semantics. In logic, semantics is assigned using an interpretation, or a \emph{concept map}, i.e., a function that assigns meanings (values) to symbols \citep{tarski1953undecidable,tarsky1956concept}. Following this view, a model's use of a symbol is interpretable if and only if that use preserves the semantics assigned to the symbol by the target entity~$h$. For example, if a model uses the word \texttt{red}, then the model's use of \texttt{red} should agree with $h$'s understanding of ``redness''.

\paragraph{Premise II: prediction-concept dependency (Section~\ref{sec:symmetryII}).}
Predictions must depend exclusively on shared concepts. Hence, a prediction is interpretable if and only if the factors responsible for the prediction are expressed in terms of concepts interpretable to $h$.
If a model's prediction depends on hidden features, latent directions, or internal variables that cannot be expressed in terms of shared concepts, then the target entity cannot fully interpret the prediction. Thus, interpretability requires not only that individual symbols share a common semantics, but also that only these symbols are used to make predictions.

\paragraph{Premise III: bounded reasoning (Section~\ref{sec:symmetryIII}).}
Even when predictions depend exclusively on shared symbols, the way these symbols are used must be understandable and causally identifiable for the target entity~$h$. This has two consequences. First, among all (possibly infinite) admissible relations between concepts and predictions, the model must expose the concepts that caused the predictions, ruling out spurious alternatives. Second, when $h$ is a human, we need to account for the bounded nature of human reasoning
 \citep{simonm1947administrative,simon1956rational,miller1956magical}. This limits the kinds of relations among symbols that a person can practically understand, as emphasised in recent works \citep{rudin2019stop,rudin2022interpretable}. Therefore, the relation between concepts and predictions cannot be arbitrary. Instead, it must belong to a hypothesis class that the target entity can tractably reason about.


\subsection{Interpretability symmetries}
\label{sec:symmetries}

The above premises underlie three mathematical structures that depend on the target entity~$h$ and must be preserved by~$f$: the semantics of symbols, the dependency of predictions on concepts, and the admissible forms of reasoning over concepts. In this section, we formalise the preservation of these properties as symmetries.


\paragraph{Setup.} 
Let $f: Z \to Y$ be a model mapping $n$-dimensional object representations $z \in Z \subseteq \mathbb{R}^n, n \in \mathbb{N}$ (e.g., embeddings, pixels, tokens) to outputs $y \in Y \subseteq \mathbb{R}^v, v \in \mathbb{N}$ (e.g., labels, pixels, token embeddings). Moreover, let $\mathcal{L}(f(z;\theta_f),y)$ be a generic objective function measuring how poorly~$f$ predicts a target $y$ (e.g., cross-entropy loss), and let $\mathcal{D} = \{(z_i, y_i)\}_{i=1}^d$ be a dataset of $d\in \mathbb{N}$ examples we can use to compute $\mathcal{L}$. A symmetry of~$f$ is described by a group of transformations $\mathfrak{g} \in \mathfrak{G}$ acting on $Z$ via the group action $\mathfrak{g} . z$. We say that~$f$ is \emph{invariant} under $\mathfrak{G}$ if $f(\mathfrak{g}.z) = f(z)$ for all $\mathfrak{g} \in \mathfrak{G}$ and $z \in Z$, that is, if the output of~$f$ is unchanged by the group action. For instance, the transformation $\mathfrak{g}: z \mapsto -z$ is an invariance of $f(z) = |z|$, since $f(\mathfrak{g}.z) = f(-z) = |-z| = |z| = f(z)$.
As discussed above, the following theory is parametrised by the characteristics of the target entity~$h$, which we introduce gradually as needed and summarise in Table~\ref{tab:interpreter-structure} for reference. Throughout, we adopt the standard assumptions that the examples in $\mathcal{D}$ are i.i.d.\ and that the data-generating process, $h$, and $f$ are time-invariant and deterministic. 
\begin{table}[!h]
\centering
\begin{tabularx}{.85\textwidth}{Xc}
\toprule
\textbf{Characteristic of the target entity~$h$ (e.g., human)} & \textbf{Notation} \\
\midrule
Vocabulary of symbols & $w \in W^{[h]}$  \\
Concept maps assigning semantics to each symbol & $c_w^{[h]}: Z\to \mathbb{R}$ \\
Operator defining class of admissible concept compositions & $K^{[h]}: \mathcal{F} \to \mathcal{F}$ \\
\bottomrule
\end{tabularx}
\caption{Characteristics of the target entity~$h$ that explicitly parametrise the theory.}
\label{tab:interpreter-structure}
\end{table}

\noindent
Without loss of generality, we refer to the target entity~$h$ as a \emph{human}.
To construct a function~$f$ satisfying our premises above, we gradually introduce some auxiliary functions, summarised in Table~\ref{tab:model-structure}, that make properties capturing our premises explicit in $f$'s structure.
\begin{table}[!h]
\centering
\begin{tabularx}{.85\textwidth}{Xc}
\toprule
\textbf{Auxiliary functions of the model~$f$} & \textbf{Notation} \\
\midrule
Concept maps assigning semantics to each symbol & $c_w: Z \to \mathbb{R}$ \\
Concept composition/formula & $\phi: \mathbb{R}^l \to \mathbb{R}$ \\
\bottomrule
\end{tabularx}
\caption{Auxiliary functions to construct the interpretable model~$f$.}
\label{tab:model-structure}
\end{table}

\subsubsection{Symmetry~I: Concept invariance under monotonic maps}
\label{sec:symmetryI}

The first premise requires that a model preserves the semantics assigned to symbols by $h$. In other words, if the model uses a symbol from the human's vocabulary, then its use is interpretable only if it preserves the semantics $h$ assigns to that symbol.

Let \(W^{[h]}\) denote the vocabulary of the target human \(h\) (see Table~\ref{tab:interpreter-structure}), that is, the set of symbols available to \(h\) for describing objects, properties, or relations. For example, in a vision task, this vocabulary may contain symbols such as \texttt{red}, \texttt{wheel}, or \texttt{striped}; in a medical task, it may contain symbols such as \texttt{lesion}, \texttt{age}, or \texttt{high risk}.

For each symbol $w \in W^{[h]}$, let $c_w^{[h]} : Z \to \mathbb{R}$ be a \emph{human concept map}  (see Table~\ref{tab:interpreter-structure}).
This map assigns to each object's representation $z \in Z$ a score measuring the degree to which~$w$ applies to $z$ according to $h$. Hence, the score $c_w^{[h]}(z)$ is not itself the semantics of $w$, but rather a score used to establish a preorder in~$Z$. For instance, an object may be more ``\texttt{red}'' than another, or a person may be more ``\texttt{old}'' than another. This ordering does not depend on the absolute scores assigned to objects, but only on their relative order. Therefore, the semantics of $w$ for a human~$h$ is given by the total strict preorder induced by $c_w^{[h]}$:
\begin{equation}
    \chi(c_w^{[h]})
    =
    \{(z_i,z_j) \in Z^2 \mid c_w^{[h]}(z_i) < c_w^{[h]}(z_j)\}.
\end{equation}
\noindent
This immediately induces a symmetry: any order-preserving transformation of the scores changes the numerical scale of the concept map, but not the ordering it induces. Let
\begin{equation}
    \mathfrak{G}_w
    =
    \{\mathfrak{g}_w : \mathbb{R} \to \mathbb{R}
    \mid \forall s_1, s_2 \in \mathbb{R}, \quad s_1 < s_2 \implies \mathfrak{g}_w(s_1) < \mathfrak{g}_w(s_2)\}
\end{equation}
be the class of order-preserving (monotonic) 
transformations of concept scores.
If the model~$f$ uses the symbol $w$, then let $c_w : Z \to \mathbb{R}$ be its concept map for $w$ (see Table~\ref{tab:model-structure}). The model $f$ preserves~$h$'s semantics of $w$ when its concept map~$c_w$ induces the same total strict preorder as the human's concept map $c_w^{[h]}$, up to monotonic reparametrisations \(\mathfrak{g}_w \in \mathfrak{G}_w\):

\medskip
\begin{definition3}
    \label{invariance:symm_1}
    Let $c_w^{[h]}: Z \to \mathbb{R}$ be the concept map associated with symbol $w$ by a target human $h$ and $c_w: Z \to \mathbb{R}$ be the model's corresponding concept map. The model preserves $h$'s semantics of $w$ if, for any strictly monotonic transformation $\mathfrak{g}_w \in \mathfrak{G}_w$,
    \[
        \chi(c_w)
         = \chi(\mathfrak{g}_w \circ c_w^{[h]}) = \chi(c_w^{[h]}) .
    \]
    \vspace{-1mm}
\end{definition3}

Symmetry~\ref{invariance:symm_1} states that $f$ is not required to reproduce $h$'s concept scores exactly. It only needs to preserve the ordering that gives those scores their meaning. Thus, the model $f$ and the human $h$ may use different scales as long as they induce the same object ordering.

\begin{example}
\label{ex:symm_1}
Let $c_w^{[h]}(\inlineimg{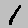}) = -0.9$ and $c_w^{[h]}(\inlineimg{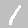}) = 0.8$. The map $\mathfrak{g}_w(x) = x^3$ is strictly monotonic, so $c_w(\inlineimg{onedark}) = -0.729 < c_w(\inlineimg{onebright}) = 0.512$, so $\chi(c_w) = \chi(c_w^{[h]})$. In contrast, $\mathfrak{g}_w(x) = x^2$ is not strictly monotonic, giving $c_w(\inlineimg{onedark}) = 0.81 > c_w(\inlineimg{onebright}) = 0.64$, hence $\chi(c_w) \neq \chi(c_w^{[h]})$.
\end{example}

\paragraph{Consequence: concept interpretability is a one-way street.}
An interesting consequence of this definition of concept semantics is that it renders concept interpretability a transitive, but not a symmetric property. It is transitive, because if $c_w^{[A]} = \mathfrak{g}_w^{[A]} \circ c_w^{[h]}$ and $c_w^{[B]} = \mathfrak{g}_w^{[B]} \circ c_w^{[A]}$, then necessarily $c_w^{[B]} = \mathfrak{g}_w^{[B]} \circ \mathfrak{g}_w^{[A]} \circ c_w^{[h]} = \mathfrak{g}_w \circ c_w^{[h]}$. This is best illustrated by the ``broken telephone'' game: because person 3 understands person 2, and person 2 understands person 1, person 3 effectively understands the original message from person 1. However, concept interpretability is not commutative, as directionality matters (if $A$ understands $h$, it does not follow that $h$ understands $A$). This is usually due to a difference in granularity: consider a scenario where a ``student'' interpreter $h$ has not yet learnt to distinguish hues of \texttt{grey}, while a ``teacher'' interpreter $A$ can. Here, $h$ may view two inputs as identical $c_w^{[h]}(\inlineimg{onedark}) = c_w^{[h]}(\inlineimg{onebright})$, while $A$ may see them differently (e.g., $c_w^{[A]}(\inlineimg{onedark}) < c_w^{[A]}(\inlineimg{onebright})$).
Hence, $A$ preserves the preorder given by $h$, but $h$ cannot preserve $A$'s preorder as it lacks the resolution to make $A$'s distinctions.
This suggests interpretability can flow through multiple entities, but cannot be generally reversed.

Note that Symmetry~\ref{invariance:symm_1} extends classical formal notions of concepts \citep{goguen2005concept,barbiero2025foundations} to continuous/fuzzy semantics \citep{hajek2001metamathematics,marra2019lyrics}. In our setting, a concept score does not simply indicate whether an object satisfies a concept. It may also express degrees of membership to a concept class, such as one object being more \texttt{red} or more \texttt{old} than another. In this context, classical Boolean concepts arise as a special case, where the concept map has the form $c_w^{[h]}: Z \to \{0,1\}$. There, $Z$ is divided into two classes: objects satisfying $w$ and objects that do not. Since there are only two possible scores, preserving the human semantics means preserving this partition exactly. Thus, in the Boolean case, the symmetry group $\mathfrak{G}_w$ collapses to the identity transformation, and an interpretable model must reproduce the human's concept map exactly.


\subsubsection{Symmetry~II: Model invariance under concept projection}
\label{sec:symmetryII}
Our second premise requires that a model $f$'s predictions depend only on symbols whose semantics are shared with the target human $h$. Intuitively, this means that whenever the model's prediction changes, this change can be explained by a change in the shared concepts. Conversely, if we change the input representation in a way that does not affect any shared concept, then the model's prediction should not change.

To formalise this idea, let $c : Z \to \mathbb{R}^l$ be the vector of concept maps, collecting one map $c_w : Z \to \mathbb{R}$ for each symbol $w$ in the vocabulary shared between $h$ and $f$. Around a point $z \in Z$, the gradients $\nabla_z c_1,\ldots,\nabla_z c_l$ describe the directions in representation space along which the shared concepts change. Similarly, the gradients $\nabla_z f_1,\ldots,\nabla_z f_v$ describe the directions along which the model's outputs change.

Our second premise requires that every local change in the model output can be expressed in terms of local changes in the concepts. Equivalently, the output gradients of $f$ should lie in the subspace spanned by the concept gradients:

\begin{equation}
    \mathrm{span}\bigl\{\nabla_z f_1,\,\ldots,\,\nabla_z f_v\bigr\} \;\subseteq\; \mathrm{span}\bigl\{\nabla_z c_1,\,\ldots,\,\nabla_z c_l\bigr\}
\end{equation}

If this holds, then $f$ has no local sensitivity to changes in the directions of concepts not shared with $h$. Hence, near $z$, the model's behaviour is determined only by the concepts $c$.
We can express this condition as a symmetry. Let $\mathfrak{G}_c$ be the set of projection operators that keep only directions lying in the concept-gradient subspace:

\begin{equation}
    \mathfrak{G}_c = \left\{ \mathfrak{g}_c : Z \to Z \;\middle|\; \mathfrak{g}_c^2 = \mathfrak{g}_c,\quad \text{im}(\mathfrak{g}_c) \subseteq \text{span}(\nabla_z c) \right\}
\end{equation}
Projecting onto this subspace removes all directions that are not expressible through the shared concepts (Figure~\ref{fig:symmetryII}).
Thus, if the model is unchanged by such a projection, then the model's output is independent of changes to concepts outside the shared set.

\begin{figure}[!h]
    \centering
    \resizebox{.6\columnwidth}{!}{\predictionConceptAlignment}
    \caption{An example of a function~$f$ where $\nabla_z f$ is not invariant to projections onto $\nabla_z c$.}
    \label{fig:symmetryII}
\end{figure}

\medskip
\begin{definition3}
    \label{invariance:symm_2}
    Let $\mathfrak{G}_c$ be the set of projections whose image is contained in the span of the concept gradients. A function $f$ is interpretable with respect to the concept maps $c$ if its local output variation is invariant under at least one such projection:

    \begin{equation}
        \exists \mathfrak{g}_c \in \mathfrak{G}_{c} \text{ such that } \mathfrak{g}_{c} . (\nabla_z f)^\top = (\nabla_z f)^\top
    \end{equation}

\vspace{-1mm}
\end{definition3}

\begin{example}
    \label{ex:symm_2}
    Let $z = (z_1, z_2, z_3)$, and assume the shared concepts $c(z) = (z_1, z_2)$ depend  only on $z_1$ and $z_2$ (i.e., changes in $z_3$ do not affect $c$). Consider the projection $\mathfrak{g}_c = \begin{psmallmatrix}1&0&0\\0&1&0\\0&0&0\end{psmallmatrix}$ onto the concept subspace.
    For $f(z) = z_1 + z_2$, we have $\nabla_z f = \begin{psmallmatrix}1&1&0\end{psmallmatrix}$, and hence
    \[
        \mathfrak{g}_c(\nabla_z f)^\top
        =
        \begin{psmallmatrix}
        1&0&0\\
        0&1&0\\
        0&0&0
        \end{psmallmatrix}
        \begin{psmallmatrix}
        1\\1\\0
        \end{psmallmatrix}
        =
        \begin{psmallmatrix}
        1\\1\\0
        \end{psmallmatrix}
        =
        (\nabla_z f)^\top.
    \]
    Thus, $f$ is interpretable with respect to $c$: its output only changes along directions captured by the concepts.
    In contrast, for $f(z) = z_1 + z_2 + z_3$, we have $\nabla_z f = \begin{psmallmatrix}1&1&1\end{psmallmatrix}$, and
    \[
    \mathfrak{g}_c(\nabla_z f)^\top
    =
    \begin{psmallmatrix}
    1&0&0\\
    0&1&0\\
    0&0&0
    \end{psmallmatrix}
    \begin{psmallmatrix}
    1\\1\\1
    \end{psmallmatrix}
    =
    \begin{psmallmatrix}
    1\\1\\0
    \end{psmallmatrix}
    \neq
    \begin{psmallmatrix}
    1\\1\\1
    \end{psmallmatrix}
    =
    (\nabla_z f)^\top.
    \]
    Thus, $f$ is not interpretable with respect to $c$, because its output also depends on $z_3$, a direction that is not represented by the shared concepts.
\end{example}

\paragraph{Consequence: interpretable models are steerable through concepts.}
The second premise requires $\text{span}(\nabla_z f)$ to lie in $\text{span}(\nabla_z c)$ at every point~$z$. Since any change in~$f$ can then be expressed as a linear combination of changes in the concept maps, this symmetry enables us to \emph{control}~$f$ by acting on~$c$. This property, often known as \emph{intervenability} \citep{stochastic_cbms} or \emph{steerability} \citep{turner2023steering}, allows us to identify changes or interventions in the concept activations that produce desired changes in~$f$'s output. Under this view, when Symmetry~\ref{invariance:symm_2} holds at a single point $z_0$, intervenability is \emph{local}: one can steer~$f$ via~$c$ only in a neighbourhood of $z_0$ (e.g., using local surrogate models \citep{lime}). When it holds everywhere, intervenability becomes \emph{global}, and the model is steerable through its concepts for all inputs, as in Concept Bottleneck Models \citep{cbm}. Practically, this distinction implies that models that enforce Symmetry~\ref{invariance:symm_2} through optimisation rather than by architectural design may satisfy the symmetry approximately, or only for a subset of inputs.
Hence, a model trained may appear locally steerable on the training set, while exhibiting unexpected steered behaviours at test time.

\subsubsection{Symmetry~III: Hypothesis  invariance under composition transformation}
\label{sec:symmetryIII}

The final premise requires bounding the hypothesis space of concept compositions, e.g.,  $\phi_1 \circ \phi_2$ for  $\phi_1, \phi_2 \in \Phi = \{\phi: \mathbb{R}^l \to \mathbb{R}^u \mid l,u \in \mathbb{N}\}$, to functions that the target entity $h$ can tractably reason about (e.g., sparse linear functions).

A general way to characterise a specific class of functions is to study the transformations $\mathfrak{g}_\phi$ that map solutions of a differential equation $K^{[h]}$ to other solutions, that is, the symmetries of the solution space of $K^{[h]}$ \citep{lie1880integration,lie1891vorlesungen,olver1993applications}.
Specifically, let $\mathfrak{g}_\phi: (c,\phi(c)) \to (\alpha(c), \beta(c,\phi(c)))$ be a generic transformation that maps concepts $c$ and their compositions $\phi(c)$ into transformed concepts $\alpha(c)$ and outputs $\beta(c,\phi(c))$, where $\alpha: \mathbb{R}^l \to \mathbb{R}^l$ and $\beta: \mathbb{R}^l \times \mathbb{R}^u \to \mathbb{R}^u$. By analysing how infinitesimal transformations act on $\phi$, we can analytically derive a differential operator $K^{[h]}(\phi, \dots, \nabla^{(\nu)}_c \phi)$ that uniquely characterises the hypothesis space, that is, the structure of a function $\phi$ does not change under the action of $\mathfrak{g}_\phi$ if and only if when we apply $K^{[h]}$ on $\phi$ we get $K^{[h]}(\phi, \dots, \nabla^{(\nu)}_c \phi) = 0$. 

\begin{example}
The group of transformations $\mathfrak{g}_\phi: (c,\phi(c)) \to (\theta c, \theta \phi(c))$ characterises linear maps. To test this, we can apply this transformation to the scalar function $\phi(c) = \theta_1 c$ and verify that the transformed function $\mathfrak{g}_\phi(\phi(c)) = \theta_2 (\theta_1 (\theta_2 c)) = \theta_2^2 \theta_1 c$ is still linear in $c$. The corresponding differential operator $K^{[h]}=\nabla_c^2$ annihilates all and only linear maps e.g., $\nabla_c^2 (\phi(c)) = \nabla_c^2(\theta_1 c) = 0$. Conversely, if we consider the exponential function $\phi(c) = e^{\theta_1 c}$, the transformation yields $\mathfrak{g}_\phi \big(\phi(c)\big) = \theta_2 e^{\theta_1 (\theta_2 c)}$ which is not a linear map. Using the linear differential operator confirms the violation $\nabla_c^2 \big( e^{\theta_1 c} \big) = \theta_1^2 e^{\theta_1 c} \neq 0$ whenever $\theta_1 \neq 0$. 
\end{example}

Generally, a hypothesis space of concept compositions is identified by all functions $\phi$ such that the differential operator $K^{[h]}$ annihilates them $K^{[h]}(\mathfrak{g}(\phi(c)),\dots, \nabla^{(\nu)}\mathfrak{g}(\phi(c))) = \mu(c,\phi(c)) K^{[h]}(\phi, \dots, \nabla^{(\nu)}_c \phi) = 0$ for any $\mu: \mathbb{R}^l \times \mathbb{R}^u \to \mathbb{R}$.

\medskip
\begin{definition3}
    \label{invariance:symm_3}
    Let $\mathfrak{G}_\phi = \left\{ \mathfrak{g}_\phi: \Phi \to \Phi \mid \mathfrak{g}_\phi\big(\phi(c)\big) = \beta\big(c, \phi(\alpha(c))\big) \right\}$ be a group of transformations of concept compositions.
    A concept composition $\phi_\theta$ is interpretable if it is a solution of the differential operator $K^{[h]}$ under all transformations $\mathfrak{g}_\phi \in \mathfrak{G}_\phi$:
    \begin{equation}
        K^{[h]}(\mathfrak{g}(\phi(c)),\dots, \nabla^{(\nu)}\mathfrak{g}(\phi(c))) = \mu(c,\phi(c)) K^{[h]}(\phi, \dots, \nabla^{(\nu)}_c \phi) = 0
    \end{equation}
    
    \vspace{-1mm}
\end{definition3}

\subsection{Interpretability constraints}
\label{sec:constraints}
Having translated each premise into a symmetry, we can now express them as constraints that can be operationalised to measure interpretability and build interpretable architectures.

\subsubsection{Constraint~I: Shared concept semantics}
Symmetry~\ref{invariance:symm_1} requires that~$f$ assigns to each symbol $w$ the same semantics assigned by $h$. In practice, however, we do not have direct access to the human concept map $c_w^{[h]}$ (e.g., the parameters of the function computed in the human brain). The only information we can realistically collect is a set of rankings over a finite collection of objects. Given such observations, Symmetry~\ref{invariance:symm_1}  holds if and only if the model concept map $c_w$ respects the same pairwise ordering as $c_w^{[h]}$: whenever $h$ judges object $z_i$ to score higher than $z_j$ on concept $w$, the model must agree. Let $\Delta c_w = c_w(z_i) - c_w(z_j)$ and $\Delta c_w^{[h]} = c_w^{[h]}(z_i) - c_w^{[h]}(z_j)$ denote the pairwise differences of the model and human concept maps, respectively, for samples $z_i, z_j$. We use the indicator function $\mathbb{I}_{\Delta c_w^{[h]}>0}$ to consider only ordered pairs $(i,j)$ of objects $z_j \prec z_i$ (avoiding repeated computations for $(j,i)$). To preserve human semantics, we introduce a non-negative, monotone increasing function $\gamma: \mathbb{R} \to \mathbb{R}_{0}^+$ such that, assuming $\Delta c_w^{[h]}>0$, the penalty $\gamma(-\Delta c_w)$ grows large when $\Delta c_w<0$, strongly penalising violations of the ordering, and $\gamma(-\Delta c_w)\approx 0$ when the ordering is respected ($\Delta c_w > 0$).
This leads to a constraint corresponding to Symmetry~\ref{invariance:symm_1} (full derivation is in \Cref{app:constraintI}):

\medskip
\begin{definition4}
    \label{constraint:1}
    Let $\Delta c_w = c_w(z_i) - c_w(z_j)$ and $\Delta c_w^{[h]} = c_w^{[h]}(z_i) - c_w^{[h]}(z_j)$ denote the pairwise differences of the model and human concept maps, respectively, for any pair of samples $z_i, z_j$.
    The invariance $\chi(c_w) = \chi(\mathfrak{g}_w . c_w^{[h]})$ holds on the observed samples iff:
    \begin{equation}
        \mathbb{I}_{\Delta c_w^{[h]} > 0} \cdot \gamma(-\Delta c_w) = 0
    \end{equation}

\vspace{-1mm}    
\end{definition4}
Constraint~\ref{constraint:1} guarantees that, on observed samples, the model concept map respects the preorder of the human concept map, thus inducing the same preorder to each symbol $w$.

\begin{example}
Recalling Example~\ref{ex:symm_1}, $\Delta c_w^{[h]} = c_w^{[h]}(\inlineimg{onebright}) - c_w^{[h]}(\inlineimg{onedark}) = 0.8 - (-0.9) = 1.7 > 0$, so $\mathbb{I}_{\Delta c_w^{[h]} > 0} = 1$. For the monotone transformation  $\mathfrak{g}_w(x) = x^3$, $\Delta c_w = c_w(\inlineimg{onebright}) - c_w(\inlineimg{onedark}) = 0.512 - (-0.729) = 1.241 > 0$, so $\gamma(-1.241) = 0$ and $1 \cdot \gamma(-1.241) = 0$: the condition holds. In contrast, Constraint~\ref{constraint:1} fails for the non-monotone transformation: $\mathfrak{g}_w(x) = x^2$, $\Delta c_w = c_w(\inlineimg{onebright}) - c_w(\inlineimg{onedark}) = 0.64 - 0.81 = -0.16 < 0$, so $\gamma(0.16) > 0$ and $1 \cdot \gamma(0.16) \neq 0$.\looseness-1
\end{example}

\subsubsection{Constraint~II: Prediction-concept dependency}

The second symmetry requires that the gradients $\nabla_z f$ lie in the span of the concept gradients $\nabla_z c$. Let $Q_c \in \mathbb{R}^{n \times \mathrm{rank}(\nabla_z c)}$ denote the matrix whose columns form an orthonormal basis for $\mathrm{span}\bigl\{\nabla_z c_1,\,\ldots,\,\nabla_z c_l\bigr\}$, and let $Q_f \in \mathbb{R}^{n \times \mathrm{rank}(\nabla_z f)}$ denote the matrix whose columns form an orthonormal basis for $\mathrm{span}\bigl\{\nabla_z f_1,\,\ldots,\,\nabla_z f_v\bigr\}$. Using $Q_c$ and $Q_f$, we can operationalise Symmetry~\ref{invariance:symm_2} by considering the similarity matrix $Q_c^\top Q_f \in \mathbb{R}^{\mathrm{rank}(\nabla_z c) \times \mathrm{rank}(\nabla_z f)}$ between the orthonormal bases, whose entries are inner products between basis vectors of the respective subspaces. If an orthonormal basis vector $q \in Q_f$ lies entirely within $\mathrm{span}(Q_c)$, then it can be expressed as a linear combination of the columns of $Q_c$, and consequently the sum of squared inner products $\|Q_c^\top q\|^2 = 1$. If this holds for all basis vectors $q \in Q_f$, then summing over columns of $Q_f$ gives $\|Q_c^\top Q_f\|_F^2 = \mathrm{rank}(\nabla_z f)$ (full derivation is in Appendix~\ref{app:constraintII}):

\medskip
\begin{definition4}
    \label{constraint:2}
    Let $Q_c$ and $Q_f$ be orthonormal bases for the row span of $\nabla_z c$ and $\nabla_z f$, respectively. The invariance $\mathfrak{g}_{c} . (\nabla_z f)^\top = (\nabla_z f)^\top$ holds iff:
    \begin{equation}
        1 - \frac{\|Q_c^\top Q_f\|_F^2}{\mathrm{rank}(\nabla_z f)} = 0
    \end{equation}

\vspace{-1mm}
\end{definition4}

\begin{example}
Recalling Example~\ref{ex:symm_2}, let $Q_c = \begin{psmallmatrix}1&0\\0&1\\0&0\end{psmallmatrix}$ be an orthonormal basis for the concept subspace. For $f(z) = z_1 + z_2$, $\nabla_z f = \begin{psmallmatrix}1&1&0\end{psmallmatrix}$ has rank 1 and $Q_f = \tfrac{1}{\sqrt{2}}\begin{psmallmatrix}1\\1\\0\end{psmallmatrix}$, so $Q_c^\top Q_f = \tfrac{1}{\sqrt{2}}\begin{psmallmatrix}1\\1\end{psmallmatrix}$ and $1 - \tfrac{\|Q_c^\top Q_f\|_F^2}{\mathrm{rank}(\nabla_z f)} = 1 - \tfrac{1}{1} = 0$, so Constraint~\ref{constraint:2} holds, as~$f$ depends on $z$ only through $c_1 + c_2$. However, Constraint~\ref{constraint:2} fails for $f(z) = z_1 + z_2 + z_3$, which depends on $z_3$, as $\nabla_z f = \begin{psmallmatrix}1&1&1\end{psmallmatrix}$ has rank 1 and $Q_f = \tfrac{1}{\sqrt{3}}\begin{psmallmatrix}1\\1\\1\end{psmallmatrix}$, so $Q_c^\top Q_f = \tfrac{1}{\sqrt{3}}\begin{psmallmatrix}1\\1\end{psmallmatrix}$ and $1 - \tfrac{\|Q_c^\top Q_f\|_F^2}{\mathrm{rank}(\nabla_z f)} = 1 - \tfrac{2}{3} = \tfrac{1}{3} \neq 0$.
\end{example}

\subsubsection{Constraint~III: Bounded reasoning}

Symmetry~\ref{invariance:symm_3} requires concept formulae $\phi$ to lie in the solution space of the partial differential equation (PDE) $K^{[h]}(\phi, \dots, \nabla_c^{(n)} \phi) = 0$. This leads to the following constraint:

\medskip
\begin{definition4}
    \label{constraint:3}
    \vspace{-4mm}
    Symmetry~\ref{invariance:symm_3} holds if and only if $K^{[h]}(\phi, \dots, \nabla_c^{(n)} \phi) = 0$.
 \vspace{1mm}
\end{definition4}



\subsection{Lagrangian of interpretable machine learning models}
\label{sec:landscape}

Given the constraints formalising our interpretability premises, we can characterise a model's \emph{interpretability landscape}~$V$ (see Figure~\ref{fig:landscape}), that is, how the model's interpretability changes as a function of its parameters $\theta$. More precisely, for each $\theta$, the landscape~$V$ returns a score reflecting how well the model fits the data, and how far it lies from the space of interpretable functions. As long as the data admit a large enough set of almost-equally-accurate models, the interpretability landscape contains models that are both accurate and interpretable with high probability \citep{semenova2022existence}. Just as loss landscapes have valleys corresponding to good predictors, the interpretability landscape has valleys corresponding to accurate and interpretable models. From this landscape, we can construct a Lagrangian that characterises any interpretable  ML model under our theory (see Appendix~\ref{app:lagrangian} for details):

\medskip
\begin{definition6}
    The Lagrangian characterising an interpretable model~$f$ is given by:
    \begin{equation}
        \label{lagrangian:1}
        \begin{aligned}
        L(\theta_{f,c,\phi}, \mathcal{D}, y, K^{[h]}) = T - V = T & - \mathcal{L}(f(z;\theta_f), y) \\
        & - \lambda_1 \sum_{w=1}^m \sum_{z_i \in \mathcal{D}} \mathbb{I}_{\Delta c_w^{[h]}(z, z_i) > 0} \cdot \gamma(-\Delta c_w(z, z_i;\theta_c)) \\
        & - \lambda_2 \left(1 - \frac{\|Q_c^\top Q_f\|_F^2}{\mathrm{rank}(\nabla_z f)}\right) \\
        & - \lambda_3 K^{[h]}(\phi(c(z;\theta_c);\theta_\phi), \dots, \nabla_c^{(n)} \phi(c(z;\theta_c);\theta_\phi))
        \end{aligned}
    \end{equation}
    where $T$ is the parameter dynamics, $V$ is the interpretability landscape, $\mathcal{L}(\cdot, \cdot)$ is a task-specific loss, $\|\cdot\|_F$ is the Frobenius norm, $\lambda_i \in \mathbb{R}_{>0}$ are Lagrangian multipliers, and $\{\theta_f, \theta_c, \theta_\phi\}$ are the parameters of the model, the concept maps, and the formulae.

\vspace{2mm}    
\end{definition6}
To explore the interpretability landscape and find parameter values that make~$f$ both accurate and interpretable, we have two possible strategies. The first is to start from a generic function~$f$ and optimise its parameters towards such values of $\theta^*$ that make~$f$ both accurate and interpretable. The second is to compile the interpretability constraints into the architecture~$f$ itself, ensuring that the interpretability symmetries hold structurally, and then optimise the parameters $\theta^*$ of this interpretable architecture~$f$ for accuracy.

\subsection{Trajectories towards interpretability: constrained optimisation}
\label{sec:learning}
In some cases, one may want to find parameters $\theta$ that make an opaque architecture~$f$ more interpretable. For instance, one may have access only to an opaque pre-trained model, or building a new architecture may be infeasible. In these cases, we can use the Lagrangian to efficiently convert an opaque architecture into an interpretable one. To do so, we need to specify the parameter dynamics by writing a differential equation describing how we plan to update parameters over time. Once this term is made explicit, we can derive the trajectories towards the parameters $\theta^*$ that make an opaque model more interpretable.

\subsubsection{Parameter dynamics}
Given an initial parameter configuration $\theta$, there are infinitely many ways to change the parameters to minimise $V$. However, different paths may reach the same destination with vastly different efficiency (see Figure~\ref{fig:landscape}). For example, consider two optimisation trajectories starting from the same initial parameters $\theta_0$. One trajectory might take a longer path through parameter space, passing through intermediate values $\theta_0 \to \theta_1 \to \theta_2 \to \theta_3$, while another reaches the same destination more directly via $\theta_0 \to \theta_1 \to \theta_3$. Moreover, even when trajectories eventually converge, they do not necessarily improve the model at every step. A given path through parameter space may pass through regions where interpretability or predictive performance temporarily worsens before eventually improving. 

We can characterise a desired set of trajectories by choosing the Lagrangian term $T$ to describe how the interpretability landscape may be explored. Concretely, $T$ is a dynamical term  involving the temporal derivatives of the parameters $\partial_t \theta$, describing how parameters may change over time. Hence, different choices of $T$ correspond to different optimisation algorithms \citep{guo2025physics} (e.g., exact, greedy, heuristic, and metaheuristic algorithms \citep{nocedal2006numerical,kochenderfer2019algorithms}). However, since the admissible optima are determined by $V$, rather than by $T$, the choice of $T$ does not change the interpretability constraints themselves, although it does affect the trajectories used to reach them.

Here, our goal is to show how, given a chosen equation $T$, one can derive the trajectories that make a model more interpretable. As an illustrative example, we consider a differential equation underlying modern gradient-based algorithms commonly used to train large-scale machine learning models \citep{guo2025physics}:
\begin{equation}
    T = \sum_{i \in \{f, c, \phi\}} \frac{1}{2} m (\partial_t \theta_i)^\top (\partial_t \theta_i), \quad m \in \mathbb{R}^+
\end{equation}
This equation penalises the rate of change $\partial_t \theta_i$ of the parameters over time, with $m$ controlling the strength of this resistance to change and favour smooth, gradual updates.


\subsubsection{Equations of motion of interpretable models}

Using the terms in the Lagrangian in Eq.~(\ref{lagrangian:1}), we can analytically derive the equations of the trajectories through parameter space that make a ML model $f$ both accurate \textit{and} interpretable (see Figure~\ref{fig:landscape}). To find and follow these trajectories, we apply the principle of least action \citep{de1744accord,euler1744methodus} to $L$. This allows us to find the stationary points of the integral of the Lagrangian $L$ over time $\iint L \, \mathrm{d}z \, \mathrm{d}t$, representing the total cost of a trajectory from $\theta_0$ to $\theta^*$. The minima of $\iint L \, \mathrm{d}z \, \mathrm{d}t$ correspond to the \textit{equations of motion of the parameters} (borrowing the term from physics) which govern how $\theta$ must change to transform an opaque model into a more interpretable one (derivation in Appendix~\ref{app:equations-of-motion}):

\medskip
\begin{definition7}
    The parameter-space trajectory that transforms an opaque model into an interpretable model that minimises the action $\iint L \, \mathrm{d}z \, \mathrm{d}t$ is given by the equations of motion of the parameters $\theta_f, \theta_c, \theta_\phi$:
    \[
        \begin{cases}
            m \frac{\partial \theta_f^2}{\partial t^2} = -\nabla_{\theta_f} \mathcal{L}(f(z;\theta_f), y) - \lambda_2 \, \nabla_{\theta_f} \left(1 - \frac{\|Q_c^\top Q_f\|_F^2}{\mathrm{rank}(\nabla_z f)}\right) \\
        
            \begin{array}{cl}
                m \frac{\partial \theta_c^2}{\partial t^2} = 
                &- \lambda_1 \sum_{w=1}^m \sum_{z_i \in \mathcal{D}} \mathbb{I}_{\Delta c_w^{[h]} > 0} \cdot \nabla_{\theta_c} \gamma(-\Delta c_w(z,z_i;\theta_c)) \\
                &- \lambda_2 \, \nabla_{\theta_c} \left(1 - \frac{\|Q_c^\top Q_f\|_F^2}{\mathrm{rank}(\nabla_z f)}\right) \\
                &- \lambda_3 \, \nabla_{\theta_c} K^{[h]}(\phi(c(z;\theta_c);\theta_\phi), \dots, \nabla_c^{(n)}\phi(c(z;\theta_c);\theta_\phi))
        \end{array}\\
    
        m \frac{\partial \theta_\phi^2}{\partial t^2} = - \lambda_3 \, \nabla_{\theta_\phi} K^{[h]}(\phi(c(z;\theta_c);\theta_\phi), \dots, \nabla_c^{(n)}\phi(c(z;\theta_c);\theta_\phi))
    \end{cases}
\]
\vspace{1mm}   
\end{definition7}

\paragraph{Gradient descent for interpretable models.}
The equations of motion above describe parameter trajectories with respect to continuous time. To obtain a corresponding discrete optimisation algorithm, we discretise time with step size $\Delta t$, approximating the second time derivative of $\theta$ via the central difference scheme $\partial_t^2 \theta \approx (\theta_{t+1} - 2\theta_t + \theta_{t-1}) / (\Delta t)^2$. Substituting into Newton's second law $m \partial_t^2 \theta = F(\theta_t)$ and rearranging yields an explicit expression for $\theta_{t+1}$ in terms of the two previous iterates. Rewriting $2\theta_t - \theta_{t-1}$ as $\theta_t + (\theta_t - \theta_{t-1})$ makes the role of each term explicit (see Figure~\ref{fig:landscape}; derivation in Appendix~\ref{app:gradient-descent}):

\medskip
\begin{definition8}
    The optimisation algorithm that transforms an opaque model into an interpretable one, updating the parameters $\theta_f, \theta_c, \theta_\phi$, is given by:
    $$\theta_{t+1} = \theta_{t} + \underbrace{(\theta_{t} - \theta_{t-1})}_{\text{momentum}} + \underbrace{\frac{(\Delta t)^2}{m}}_{\text{learning rate}} \underbrace{F(\theta_{t})}_{\text{total gradient force}}$$
    where $F(\theta_{t})$ represents the force pushing towards interpretable solutions (given by the right-hand side in the equations of motion).

    \vspace{2mm}   
\end{definition8}
This update rule takes the form of gradient descent with momentum \citep{polyak1964some}. The momentum term $\theta_t - \theta_{t-1}$ carries forward the velocity from the previous step, allowing parameters to build up speed along consistent directions and dampening oscillations. The effective learning rate $(\Delta t)^2 / m$ controls how strongly the force $F(\theta_t)$ accelerates the parameters: a large mass $m$ resists rapid changes, while a large step size $\Delta t$ amplifies them. The force $F(\theta_t)$ itself encodes both the task objective and the interpretability constraints, so the entire optimisation trajectory is governed by the interpretability symmetries.

\subsection{Interpretable architectures: constraint compilation}

\label{sec:architecture}
Instead of exploring the interpretability landscape to find interpretable parameter configurations, an alternative is to embed interpretability directly into the model's architecture. This can be framed as a \emph{constraint compilation problem} \citep{selman1991knowledge,selman1996knowledge}, where interpretability constraints are built into the model, ensuring that any solution produced is interpretable by construction without needing to optimise for those constraints separately.

\subsubsection{Architecture I: Instance-based concept maps}

To compile the first constraint, we need to express $c_w$ as a function of observable data, since we assume no access to the human concept map $c_w^{[h]}$. We assume a typical setting where we have $N$ training samples $\{z_1, \ldots, z_d\}$ with known human-assigned scores $c_w^{[h]}(z_i)$. To each instance $z_i$ we assign a score $s(z_i) := \sum_{k \text{ s.t. } c_w^{[k]}(z_k) \leq c_w^{[k]}(z_i)} \theta_k$, defined as the sum of learnable parameters $\theta_k \in \mathbb{R}_{>0}$ over all instances $z_k$ ranked no higher than $z_i$ under $c_w^{[h]}$. Because the $\theta_k$ are strictly positive, this cumulative sum grows with rank, ensuring the scores preserve the ordering imposed by $c_w^{[h]}$. For example, given two instances $z_{1} \prec z_{2}$, the lower-ranked instance receives score $\theta_1$, while the higher-ranked receives $\theta_1 + \theta_2 > \theta_1$. Finally, the score of an unseen sample $z$ is obtained as a convex combination of these instance scores, weighted by the normalised distance-based weights $\beta_i(z)$ from $z$ to each instance (Figure~\ref{fig:constraintII}; derivation in Appendix~\ref{app:architectureI}). 
\begin{figure}[!h]
    \centering
    \resizebox{.5\columnwidth}{!}{\prototypeEncoder}
    \caption{Concept map architecturally implementing Constraint~\ref{constraint:1}: $c_w$ is a monotone transformation of the partially ordered set $\{z_{(1)}, z_{(2)}, z_{(3)}\}$ defined by the map $c_w^{[h]}$.}
    \label{fig:constraintII}
\end{figure}

\begin{definition5}
    \label{architecture:1}
    The architecture compiling Symmetry~\ref{invariance:symm_1}
    (i.e., $\chi(c_w) = \chi(\mathfrak{g}_w . c_w^{[h]})$)
    is:
    \begin{equation}
        c_w(z) = \sum_{i=1}^N \beta_i(z) \left( \sum_{k=1}^N \theta_k \mathbb{I}_{c_w^{[h]}(z_i) \geq c_w^{[h]}(z_k)} \right)
    \end{equation}
    
\vspace{1mm}   
\end{definition5}
By adjusting $\theta_k$, Architecture~\ref{architecture:1} can yield all concept maps $c_w$ that preserve $c_w^{[h]}$'s semantics.

\begin{example}
    Let $N=3$ with $c_w^{[h]}(z_{(1)}) \leq c_w^{[h]}(z_{(2)}) \leq c_w^{[h]}(z_{(3)})$ and $\theta = \{1.3, 0.5, 0.9\}$. The instance scores are $c_w(z_{(1)}) = 1.3$, $c_w(z_{(2)}) = 1.3 + 0.5 = 1.8$, $c_w(z_{(3)}) = 1.3 + 0.5 + 0.9 = 2.7$, preserving the ordering imposed by $c_w^{[h]}$. For an input $z$ with $\beta(z) = (0.2, 0.5, 0.3)$ (e.g., $z_{(2)}$ is the closest sample to $z$), $c_w(z) = 0.2 \cdot 1.3 + 0.5 \cdot 1.8 + 0.3 \cdot 2.7 = 0.26 + 0.90 + 0.81 = 1.97$.
\end{example}
Notice that this architecture is closely related to fuzzy clustering \citep{dunn1973fuzzy}, where selected instances $z_i$ act as medoids \citep{kaufman1990partitioning,krishnapuram2001low,schubert2021fast}. In the interpretability literature, prototypical DNNs \citep{protopnet, ma2024looks} adopt a similar approach to ground the semantics of the model~$f$ using patches of input images as instances $z_i$, or concept-level prototypes \citep{colamonaco2026prototype}.
Viewed as a form of fuzzy clustering, if we drop the condition of following human semantics $\mathbb{I}_{c_w^{[h]}(z_i) \geq c_w^{[h]}(z_k)}$, this architecture also subsumes concept discovery approaches \citep{ace}. There, alignment with human semantics can no longer be guaranteed by construction, and must instead be verified post-hoc, as in mechanistic interpretability methods \citep{olah_zoom_circuits,conmy2023towards}. 

\subsubsection{Architecture II: Concept-based model}

To compile the second constraint, we need~$f$ to only change when at least one concept map $c_i$ changes, implying that~$f$ must be a function of concept maps alone. Symmetry~\ref{invariance:symm_2}  therefore intuitively requires $\nabla_z f$ to lie in the row span of the concept map Jacobian $\{\nabla_z c_1, \ldots, \nabla_z c_l\}$. This means that any movement in the input space $z$ that leaves all concept maps unchanged must also leave~$f$ unchanged (i.e., if $c_1, \ldots, c_l$ are all constant along some direction,~$f$ cannot vary along that direction either). Thus,~$f$ carries no information beyond what is already captured by the concept maps, which is implemented by the following architecture (see derivation in Appendix~\ref{app:architectureII}):

\medskip
\begin{definition5}
    \label{architecture:2}
    The architecture compiling Symmetry~\ref{invariance:symm_2}
    (i.e., $\mathfrak{g}_{c} . (\nabla_z f)^\top = (\nabla_z f)^\top$)
    is:
    \begin{equation}
        f = \phi(c_{1}, \ldots, c_{l})
    \end{equation}
\vspace{1mm}   
\end{definition5}
In the interpretability literature, this architectural family is known as Concept Bottleneck Models \citep{cbm} or, more generally, as concept-based models \citep{poeta_ciravegna_concept_learning_survey}.

\subsubsection{Architecture III: Concept formulae in human kernel}\label{sec:architecture-III}

Compiling Constraint~\ref{constraint:3} requires finding a function $\phi$ that lies in the kernel of the differential operator $K^{[h]}$. A standard approach to this problem is to decompose the $n$-th order operator into a system of first-order equations $\nabla_c u_i = u_{i+1}$ and integrate each equation to obtain
\begin{equation}
    u_i = \int u_{i+1} \, dc
\end{equation}
Each integration step introduces additional degrees of freedom, represented here by learnable coefficients. Hence, the general description of a model satisfying Constraint~\ref{constraint:3} is a linear combination of $n$ independent basis functions $\psi_k(c)$ whose specific form is determined by $K^{[h]}$:

\medskip
\begin{definition5}
    \label{architecture:3}
    Let $\{\psi_1(c), \ldots, \psi_n(c)\}$ be a basis of $\ker(K^{[h]})$, the space of functions annihilated by the differential operator $K^{[h]}$. The general model compiling Symmetry~\ref{invariance:symm_3} (i.e., $K^{[h]}(\mathfrak{g}(\phi(c)),\dots, \nabla^{(\nu)}\mathfrak{g}(\phi(c))) = \mu(c,\phi(c)) K^{[h]}(\phi, \dots, \nabla^{(\nu)}_c \phi) = 0$) is:
    \begin{equation}
        \phi_\theta(c) = \sum_{k=1}^{n} \theta_k \psi_k(c), \quad \phi_\theta(c) \in \ker(K^{[h]})
    \end{equation}
\vspace{1mm}
\end{definition5}
Notice that, in this formulation, the integration constants $\{\theta_k\}_{k=1}^n$ can be considered learnable parameters of the model. In Table~\ref{tab:pdes} we summarise PDEs whose general solutions yield common architectures used in the interpretability literature.
\begin{table}[h]
\centering
\renewcommand{\arraystretch}{1.5}
\resizebox{.8\linewidth}{!}{%
\begin{tabular}{lccc}
\hline
\textbf{Case} & \textbf{PDE} $K^{[h]}$ & \textbf{1st Order PDE System} & \textbf{General Solution} \\ \hline
\textbf{Linear} & $\nabla_c^2 \phi = 0$ & $\begin{cases} \nabla_c \phi = u_2 \\ \nabla_c u_2 = 0 \end{cases}$ & $\phi(c) = \theta_1^\top c + \theta_0$ \\
\textbf{Exponential} & $\nabla_c \phi - \theta^\top \phi = 0$ & $\nabla_c (\ln \phi) = \theta^\top$ & $\phi(c) = e^{\theta^\top c + \theta_0}$ \\
\textbf{Piece-wise Constant} & $\nabla_c \phi = \sum_i \theta_i \delta(c - c_i)$ & $\nabla_c \phi = \sum_i \theta_i \delta(c - c_i)$ & $\phi(c) = \sum_i \theta_i \sum_{j=1}^l H(c_j - c_{i,j}) + \theta_0$ \\
\textbf{Periodic} & $\nabla_c^2 \phi + \omega^2 \phi = 0$ & $\begin{cases} \nabla_c \phi = u_2 \\ \nabla_c u_2 = -\omega^2 \phi I_{m \times m} \end{cases}$ & $\phi(c) = \theta_1 \cos(k^\top c) + \theta_0 \sin(k^\top c)$ \\ \hline
\end{tabular}%
}
\caption{Summary of PDEs whose general solutions yield common architectures used in the interpretability literature. $\nabla_c^2 \phi$ indicates the Hessian matrix and $\|k\|^2=\omega^2$.}
\label{tab:pdes}
\end{table}

\subsubsection{A general solution for interpretable architectures}

The symmetries above constrain different components of a model's architecture. Symmetry~\ref{invariance:symm_1} constrains the structure of concept maps $c_w$. Symmetry~\ref{invariance:symm_2} enforces that the predictions of~$f$ depend on inputs $z$ only through concept maps $c_w$. Finally, Symmetry~\ref{invariance:symm_3} restricts the form of that dependence. Satisfying all symmetries yields a general interpretable model:

\medskip
\begin{definition5}
    \label{architecture:4}
    The architecture compiling all interpretability invariances is:
    \begin{equation}
        f(z) = \phi_\theta^{[h]}\left(\left[\sum_{i=1}^N \beta_i(z) \left( \sum_{k=1}^N \theta_k \mathbb{I}_{c_w^{[h]}(z_i) \geq c_w^{[h]}(z_k)} \right)\right]_{w=1}^l\right) 
        \text{where} \ \phi_\theta^{[h]} \in \ker(K^{[h]})
    \end{equation}
    \vspace{1mm}
\end{definition5}
\smallskip
This gives a general solution for interpretable architectures given the initial premises. Therefore, any model satisfying all three interpretability symmetries must take this form, and any model of this form satisfies them. The learnable components $\theta$ (i.e., the prototype parameters and the parameters of the kernel function) are the only degrees of freedom left once the symmetries are imposed. The Lagrangian in this case becomes:

\vspace{-2mm}
{\small
\begin{equation}
    L(\theta_{c,\phi}, \mathcal{D}, y, K^{[h]}) = T - \mathcal{L}\left(\phi_\theta^{[h]}\left(\left[\sum_{i=1}^N \beta_i(z) \left( \sum_{k=1}^N \theta_k \mathbb{I}_{c_w^{[h]}(z_i) \geq c_w^{[h]}(z_k)} \right)\right]_{w=1}^l\right), y\right), \ \phi_\theta^{[h]} \in \ker(K^{[h]})
\end{equation}
}
As all symmetries are structurally satisfied, this Lagrangian does not require any constraints in its objective function, unlike concept-based models or mechanistic interpretability. 

These results show that the Standard Interpretable Model does not just describe what interpretability is, but yields (1)~a concrete recipe for building interpretable models (premises $\to$ symmetries $\to$ constraints $\to$ Lagrangian $\to$ parameter update/architecture), and (2)~a formal criterion for verifying model interpretability. Given a target entity~$h$ and its associated premises, one can systematically derive the corresponding constraints and read off the admissible architecture class directly. Conversely, given an arbitrary model, one can audit it against the same constraints to deductively determine whether it is interpretable for $h$.

\section{Empirical evaluation}
\label{sec:experiments}

Having described an instance of a Standard Interpretable Model, we now empirically validate the theoretical framework in controlled experimental settings. We then leverage the theory to identify limitations and improve the interpretability of large-scale models.

\subsection{Research questions}

In this section, we analyse the following research questions:
\begin{itemize}
    \item \textbf{Empirical validation of the Standard Interpretable Model:} Are standard performance metrics (e.g., mean error) suitable for measuring interpretability? Does compiling interpretability constraints into the architecture yield different behaviour compared to optimising for them? Do we benefit from enforcing an interpretability symmetry in both the architecture's structure and the optimisation process?
    \item \textbf{Interpretability of large-scale concept-based models:} Do large-scale concept-based models learn concept maps that match known concept semantics? Can broken concept semantics be fixed in these models without training? To what extent are these models' predictions driven by the concepts they predict?
\end{itemize}
Note that the purpose of these experiments is not to claim state-of-the-art results on any given benchmark, but to empirically validate our theoretical analysis and demonstrate its utility in identifying and fixing interpretability vulnerabilities in large-scale models.

\subsection{Empirical validation of the Standard Interpretable Model}

To validate the proposed Standard Interpretable Model, we compared the behaviour of three models.
The base model is a generic DNN without any interpretability inductive biases (\texttt{DNN}). The second model uses the same architecture as the base model but introduces interpretability constraints during training (\texttt{DNN+L}). The third model instead, adds layers to the DNN that enforce interpretability constraints (\texttt{DNN+A}).


\paragraph{Validating Symmetry~I (Figure~\ref{fig:val-symI}).}
To validate Symmetry~\ref{invariance:symm_1}, we sample points from $Z \times C \subseteq \mathbb{R}^2$, randomly assigning a ground-truth concept label $c_w^{[h]}(z)$ to each $z \in Z$. Next, we train concept maps $c_w$, in the three configurations \texttt{DNN}, \texttt{DNN+L}, and \texttt{DNN+A}, to predict the ground-truth concept scores. The \texttt{DNN} model is trained via a standard $L_1$ loss, while we train the \texttt{DNN+L} model by adding  Constraint~\ref{constraint:1} to the training objective and construct the \texttt{DNN+A} model by compiling Constraint~\ref{constraint:1} into the model (hence avoiding any training). 
We assess the learned concept maps in two complementary ways: (1)~by measuring mean absolute error against the human-assigned scores, which evaluates numerical fit, and (2)~by measuring violations of Constraint~\ref{constraint:1}, which evaluates whether the model preserves the preorder induced by the human concept semantics.

Our results in Figure~\ref{fig:val-symI} show that fitting human-assigned scores is not the same as preserving human concept semantics. In the first row, \texttt{DNN} closely matches the absolute ground-truth scores and therefore achieves the lowest mean absolute error, as expected from its $L_1$ training objective. However, under Symmetry~\ref{invariance:symm_1}, semantics are defined by the preorder induced by these scores, not by their precise numerical values.

The second row makes this distinction visible by sorting samples according to the human ground-truth ordering. A semantics-preserving concept map should be monotone along this ordering. Although \texttt{DNN} has low MAE, its sorted predictions contain several decreases, showing that it reverses some pairwise human orderings. \texttt{DNN+L} reduces these violations, while \texttt{DNN+A} preserves the ordering most faithfully because Constraint~\ref{constraint:1} is built into the architecture.

This is confirmed by the Constraint~\ref{constraint:1} violation rate, which measures the proportion of observed pairs that violate the human-induced preorder. While \texttt{DNN} performs best under MAE, it performs worst under this semantic metric; conversely, \texttt{DNN+A} achieves the lowest violation rate. These results show that \textbf{standard performance metrics}, such as mean absolute error, \textbf{are inappropriate proxies for interpretability}: a model can fit human-assigned scores while failing to preserve the semantic ordering those scores induce. Interpretability metrics should therefore be derived from the relevant semantic symmetry.
\begin{figure}[t]
    \centering
    \includegraphics[width=0.22\linewidth]{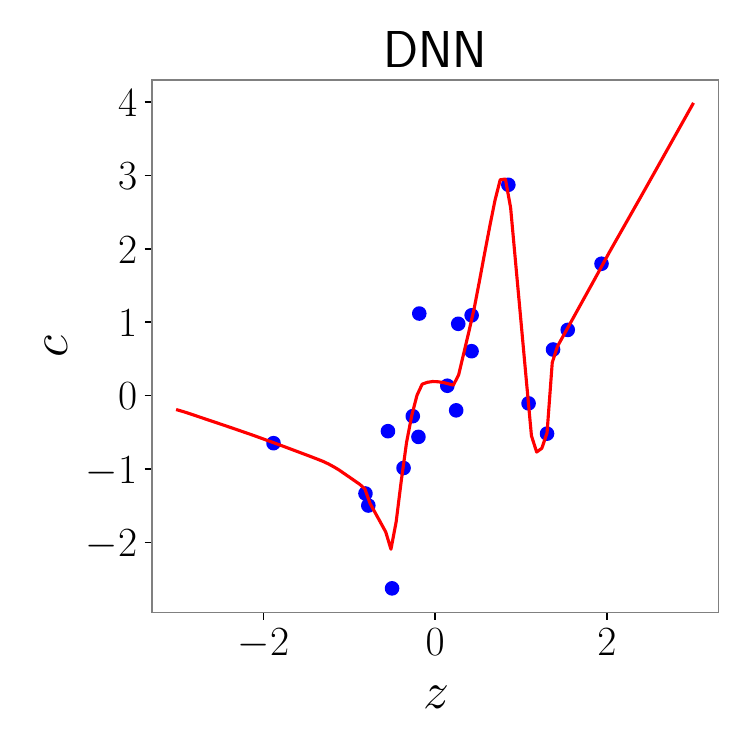}
    \includegraphics[width=0.22\linewidth]{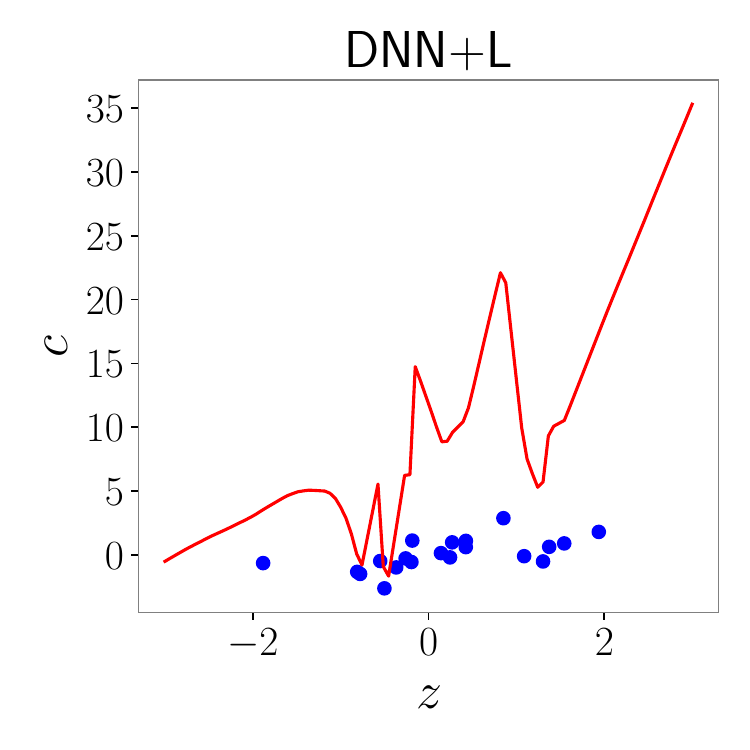}
    \includegraphics[width=0.22\linewidth]{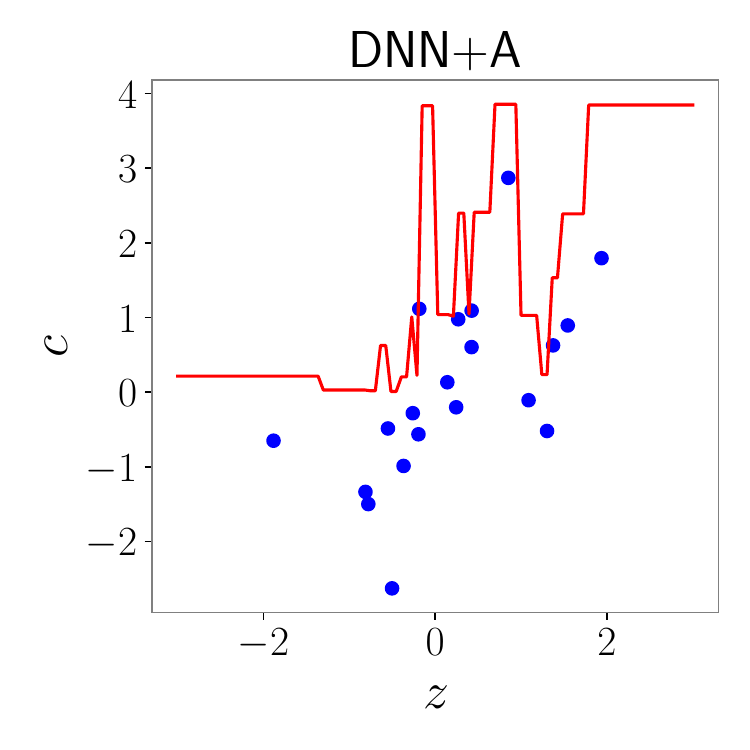}
    \includegraphics[width=.22\linewidth]{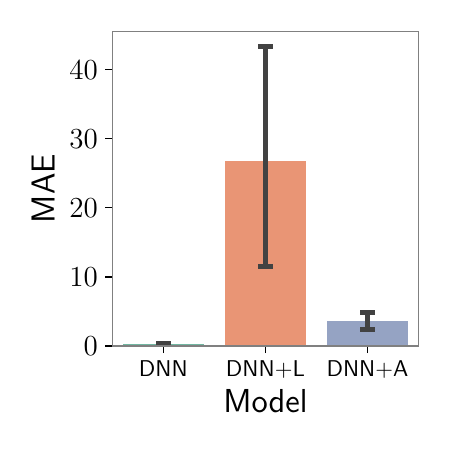}\\
    \includegraphics[width=0.22\linewidth]{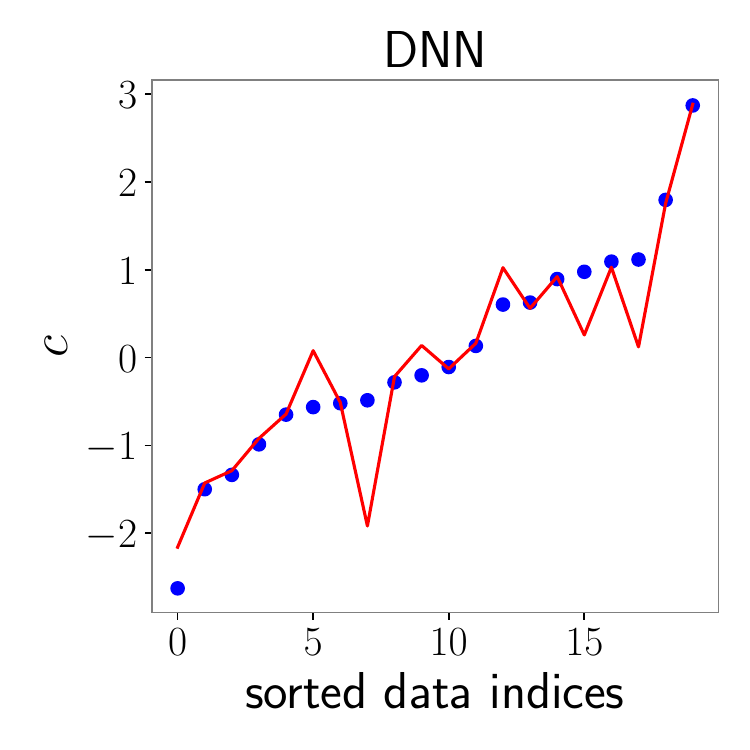}
    \includegraphics[width=0.22\linewidth]{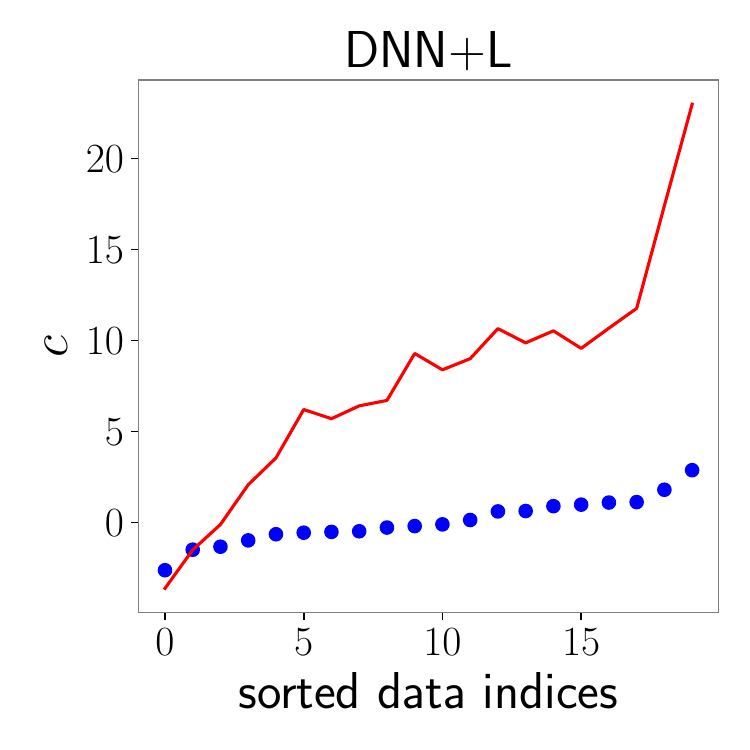}
    \includegraphics[width=0.22\linewidth]{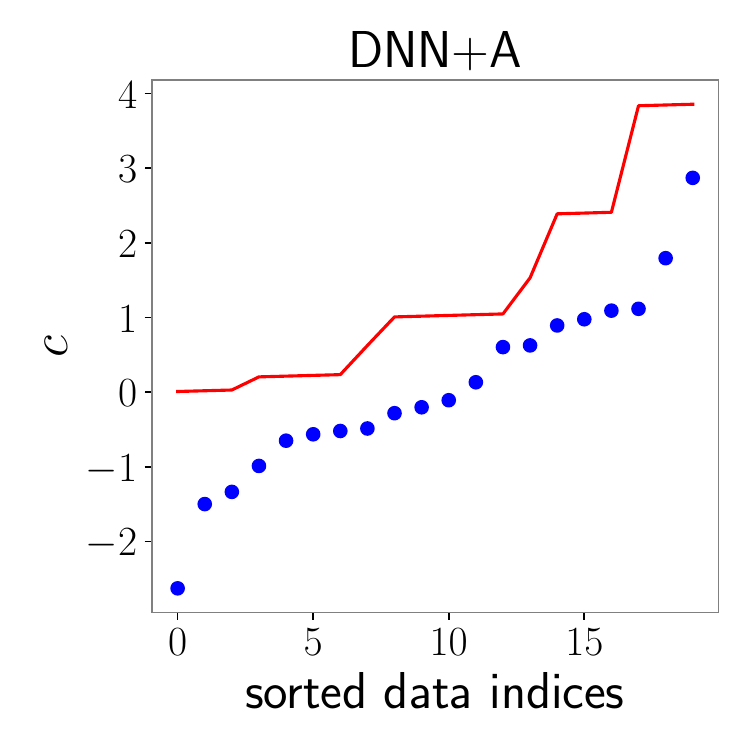}
    \includegraphics[width=.22\linewidth]{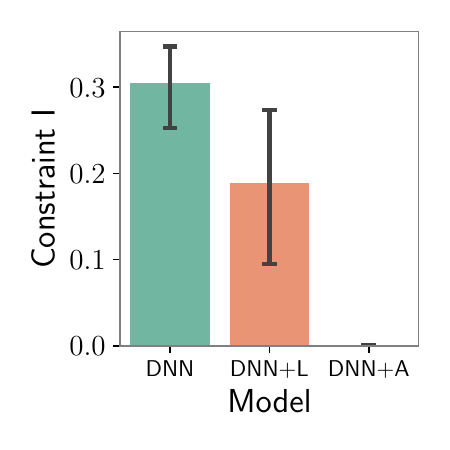}
\caption{\textbf{Validation of Symmetry~I.} Top row: learned concept maps fitted to human-assigned scores. Bottom row: the same predictions sorted by the human-induced ordering, where semantic preservation requires monotonicity. Right: MAE favours score fitting, while Constraint~\ref{constraint:1} reveals violations of the human preorder.  Low prediction error alone does not imply preserved concept semantics.\looseness-1}
    \label{fig:val-symI}
\end{figure}

\paragraph{Validating Symmetry~\ref{invariance:symm_2} (Figures~\ref{fig:val-symII-bis},~\ref{fig:val-symII}).}
To validate Symmetry~\ref{invariance:symm_2}, we sample $4$ points from $Z^2 \times C \times Y$, using tuples $(z, c) \in Z^2 \times C$ to train concept maps $c: Z^2 \to C$ and tuples $(z, y) \in Z^2 \times Y$ to train task predictors $f: Z^2 \to Y$. Both $c$ and $f$ are trained \textit{jointly} for each configuration (\texttt{DNN}, \texttt{DNN+L}, \texttt{DNN+A}). Here, \texttt{DNN+L} enforces Constraint~\ref{constraint:2} only locally, at $z=(0.7, 0.8)$, encouraging the gradients $\nabla f$ and $\nabla c$ to align at that point. By contrast, \texttt{DNN+A} compiles Constraint~\ref{constraint:2} directly into the architecture, so that $f$ is constrained to depend on $z$ only through $c$.

\begin{figure}[!t]
    \centering
    \includegraphics[width=0.5\linewidth]{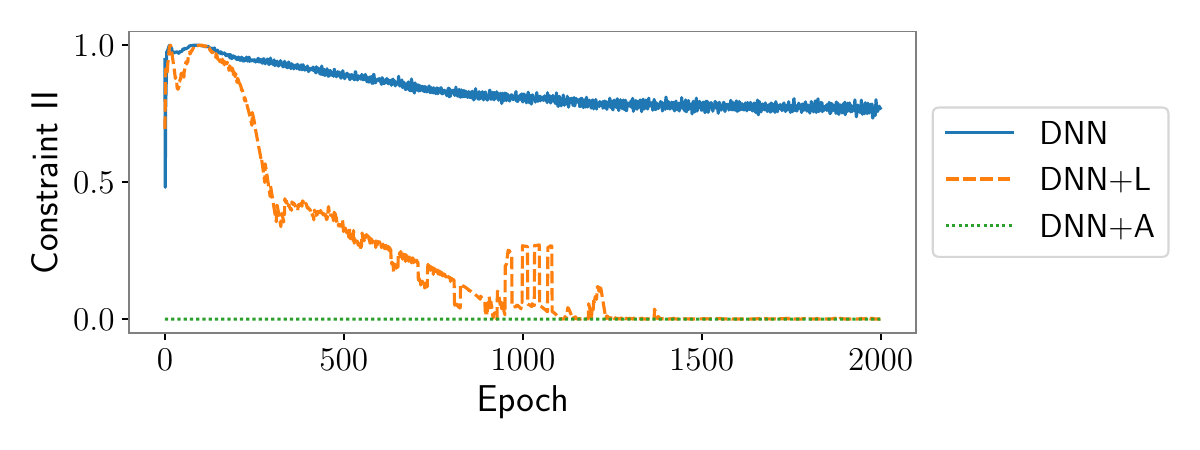}
\caption{\textbf{Validation of Symmetry~\ref{invariance:symm_2}.} Constraint~\ref{constraint:2} violation during training, measuring misalignment between the prediction gradients $\nabla f$ and concept gradients $\nabla c$. Optimising the constraint can reduce local gradient misalignment, while architectural compilation satisfies the dependency by construction.}
    \label{fig:val-symII-bis}
\end{figure}
Figure~\ref{fig:val-symII-bis} shows the violation of Constraint~\ref{constraint:2} during training. Initially, \texttt{DNN+L} behaves similarly to \texttt{DNN}, indicating that the predictor $f$ is not yet aligned with the concept map~$c$. After approximately $1000$ epochs, the constraint violation decreases and \texttt{DNN+L} reaches behaviour comparable to \texttt{DNN+A} at the constrained point. This shows that optimisation can enforce Symmetry~\ref{invariance:symm_2} locally when the corresponding constraint is included in the loss.

\begin{figure}[!t]
    \centering
    \includegraphics[width=.32\linewidth]{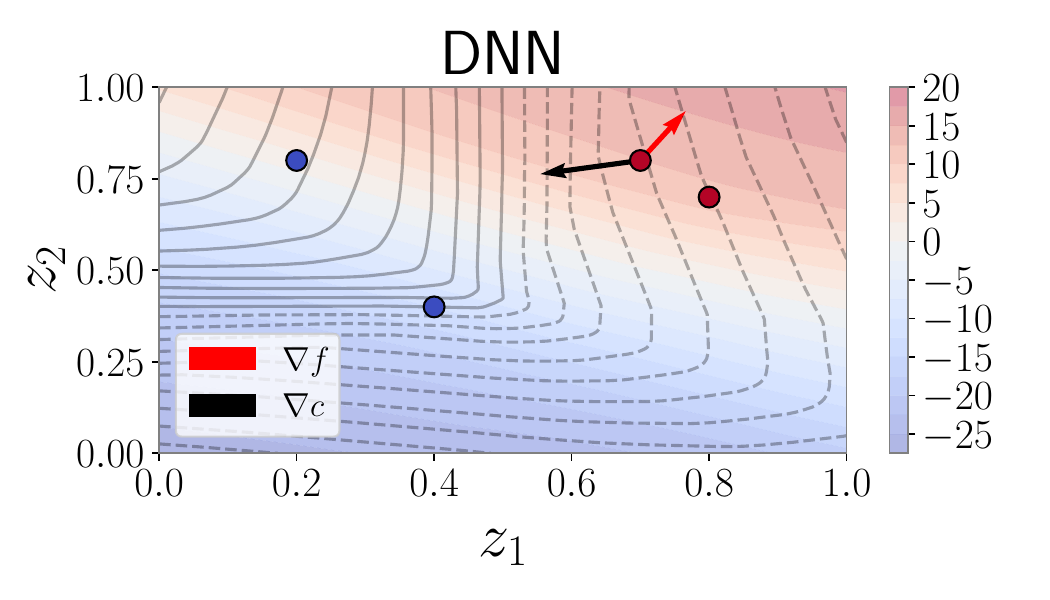}
    \includegraphics[width=.32\linewidth]{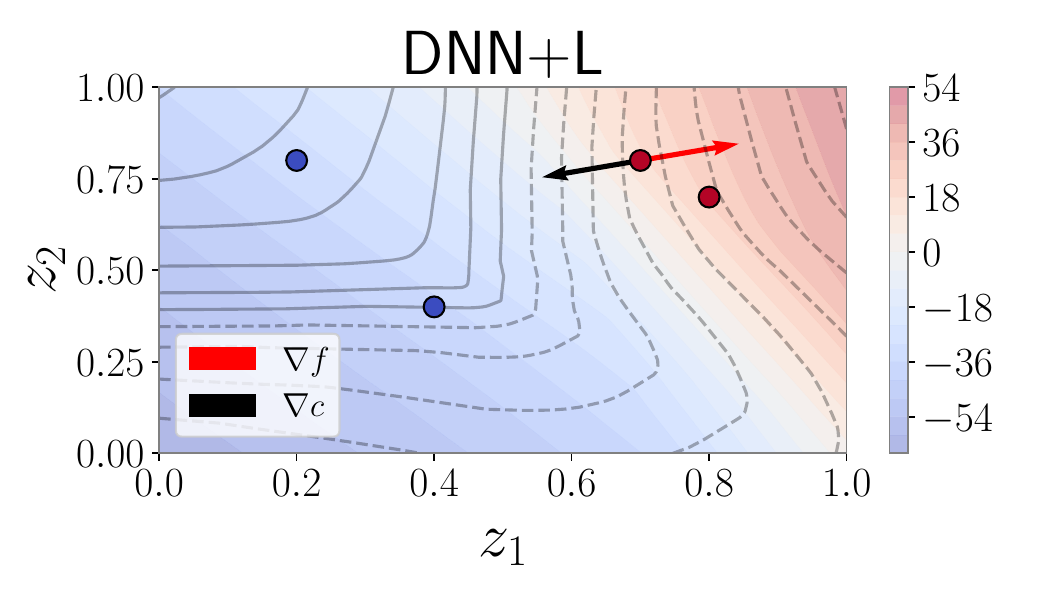}
    \includegraphics[width=.32\linewidth]{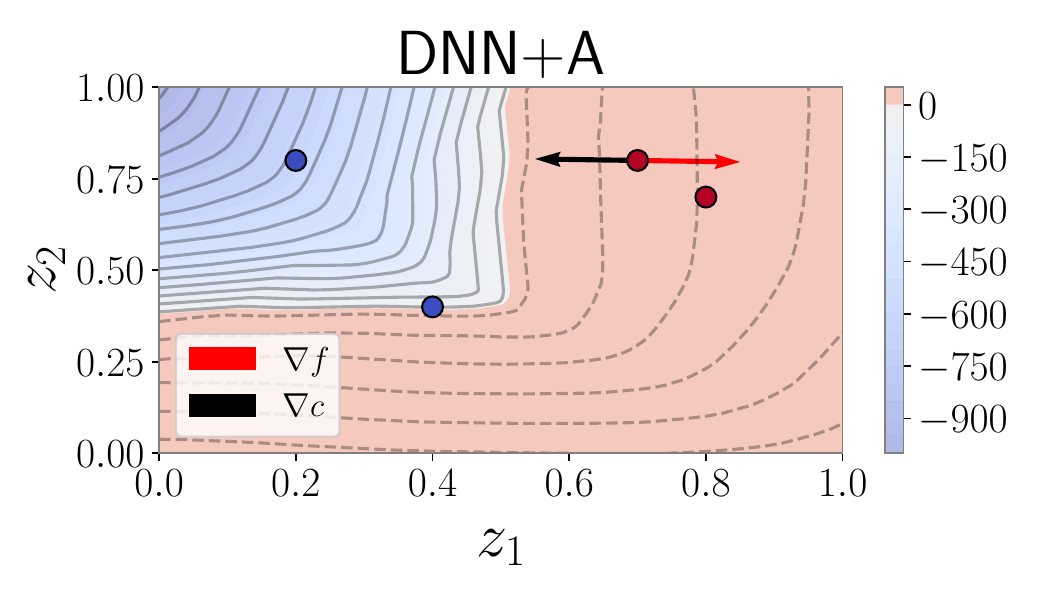}\\
    \includegraphics[width=.32\linewidth]{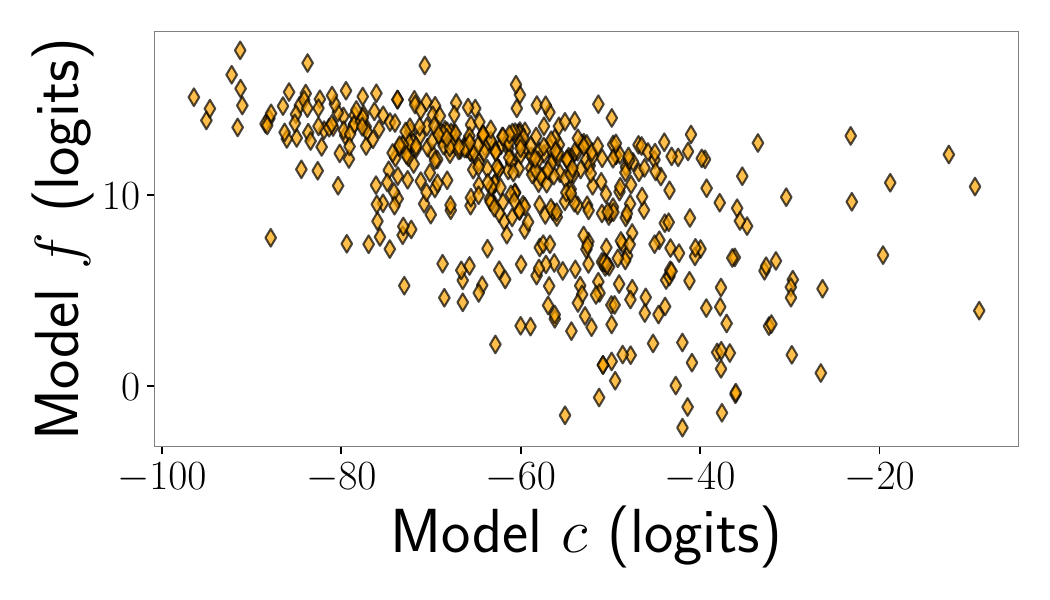}
    \includegraphics[width=.32\linewidth]{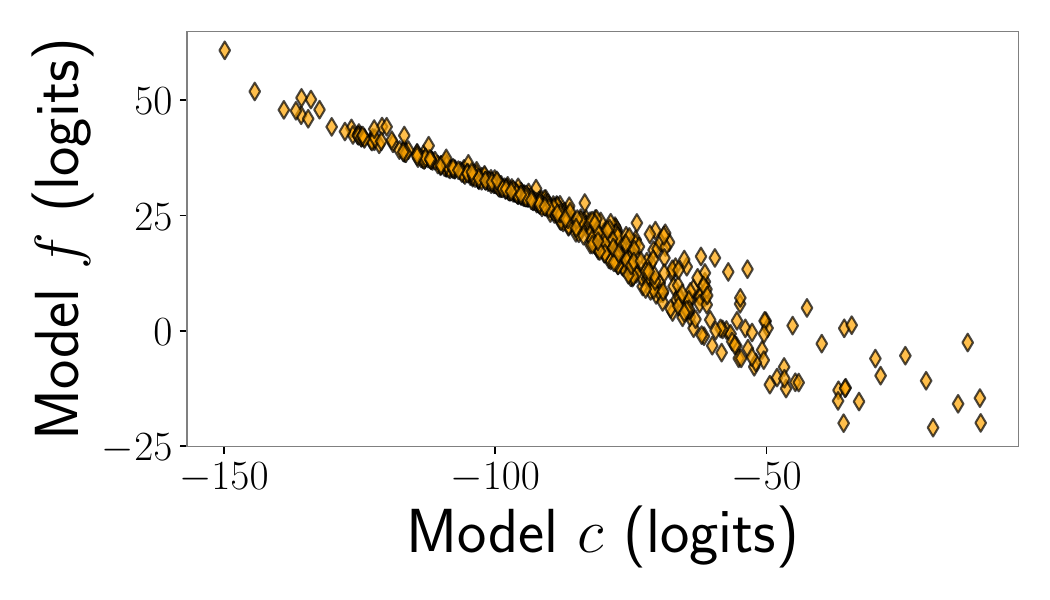}
    \includegraphics[width=.32\linewidth]{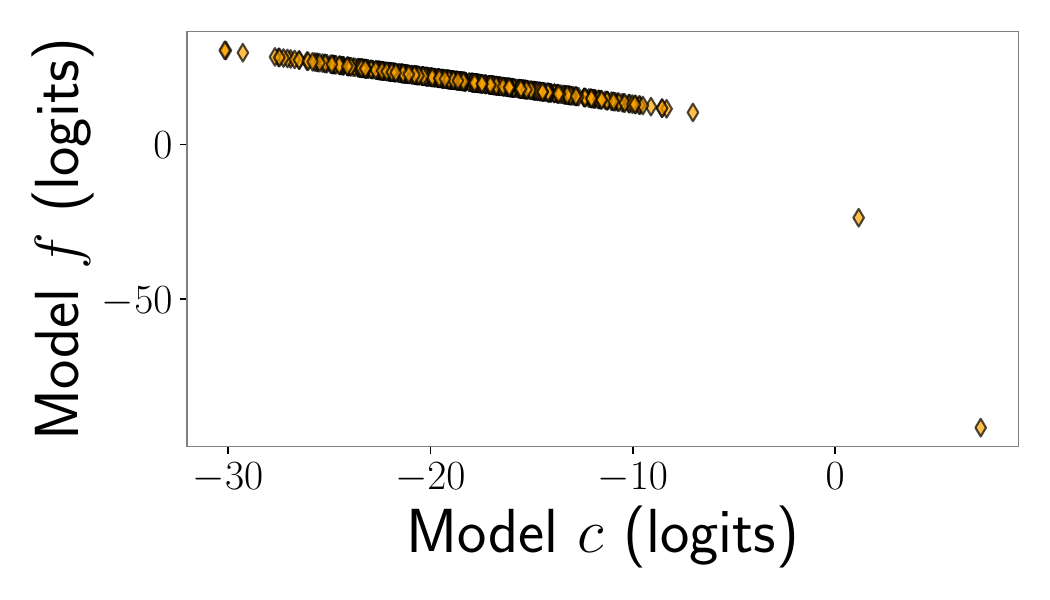}
    \caption{\textbf{Validation of Symmetry~\ref{invariance:symm_2}.} Top row: alignment between the predictor $f$ and the concept map~$c$. Coloured dots represent training samples. Dark curves show level sets of~$c$, while colour gradients show level sets of~$f$. If~$f$ depends only on~$c$, these level sets should be aligned. Bottom row: relationship between predicted task values~$f(z)$ and concept value~$c(z)$. Local constraint optimisation can align $f$ and $c$ near selected points, but architectural compilation is needed to guarantee concept-mediated predictions globally.}
    \label{fig:val-symII}
\end{figure}
Figure~\ref{fig:val-symII} shows why this local enforcement is not enough to guarantee global interpretability. In the top row, the level sets of the concept map $c$ are shown by dark curves, while the level sets of the predictor $f$ are represented by colour gradients. If $f$ depends only on $c$, then changes in $f$ should occur only along directions in which $c$ changes; equivalently, the gradients of $f$ and $c$ should be aligned. For \texttt{DNN}, the level sets of $f$ and $c$ are misaligned almost everywhere, showing that the predictor uses directions in $z$ that are not captured by the concept map. For \texttt{DNN+L}, the alignment improves near the point where Constraint~\ref{constraint:2} is enforced, but this local alignment does not extend across the whole input space. For \texttt{DNN+A}, the alignment holds globally by construction.

The bottom row confirms the same conclusion from the relationship between the predicted task value $f(z)$ and the concept value $c(z)$. In \texttt{DNN}, there is no clear functional dependency between $f$ and $c$, meaning that knowing the concept value is insufficient to explain the model prediction. In \texttt{DNN+L}, a local dependency emerges around the constrained region, but it is not guaranteed everywhere. In \texttt{DNN+A}, the relationship is exact because the architecture forces $f$ to be a function of $c$. Together, these results show that \textbf{without Constraint~\ref{constraint:2}, task predictions may not actually depend on concepts}. Moreover, enforcing the constraint only through optimisation can make this dependency local or approximate. To guarantee that predictions are interpretable and controllable through concepts for all inputs, Constraint~\ref{constraint:2} must be compiled into the architecture.

\paragraph{Validating Symmetry~\ref{invariance:symm_3} (Figure~\ref{fig:val-symIII}).}
To validate Symmetry~\ref{invariance:symm_3}, we sample data points from $C \times Y \subseteq \mathbb{R}^2$, where the target $y$ is generated as a noisy polynomial of~$c$. We then train \texttt{DNN+L} with Constraint~\ref{constraint:3} in its loss objective, sweeping $\lambda_3 = [10^{-5}, 10]$.
Figure~\ref{fig:val-symIII} illustrates how Constraint~\ref{constraint:3} controls the complexity of the concept formula $\phi$. When $\lambda_3$ is small, the model behaves more like an unconstrained \texttt{DNN}, allowing highly curved formulae that fit the noisy target flexibly. As $\lambda_3$ increases, violations of the operator constraint are penalised more strongly, and the learned formula becomes progressively closer to the hypothesis space specified by $K^{[h]}$. In the architectural case \texttt{DNN+A}, this restriction is enforced by construction, yielding formulae that remain within the compiled function class.


These results show that $\lambda_3$ acts as a knob for controlling the inductive bias of the concept-formula hypothesis space. Practically, this allows Symmetry~\ref{invariance:symm_3} to be used flexibly: one may compile a stricter operator into the architecture to guarantee bounded reasoning, while also using a softer operator penalty in the objective to encourage simpler formulae when supported by the data.
\begin{figure}[h]
    \centering
    \includegraphics[width=0.35\linewidth]{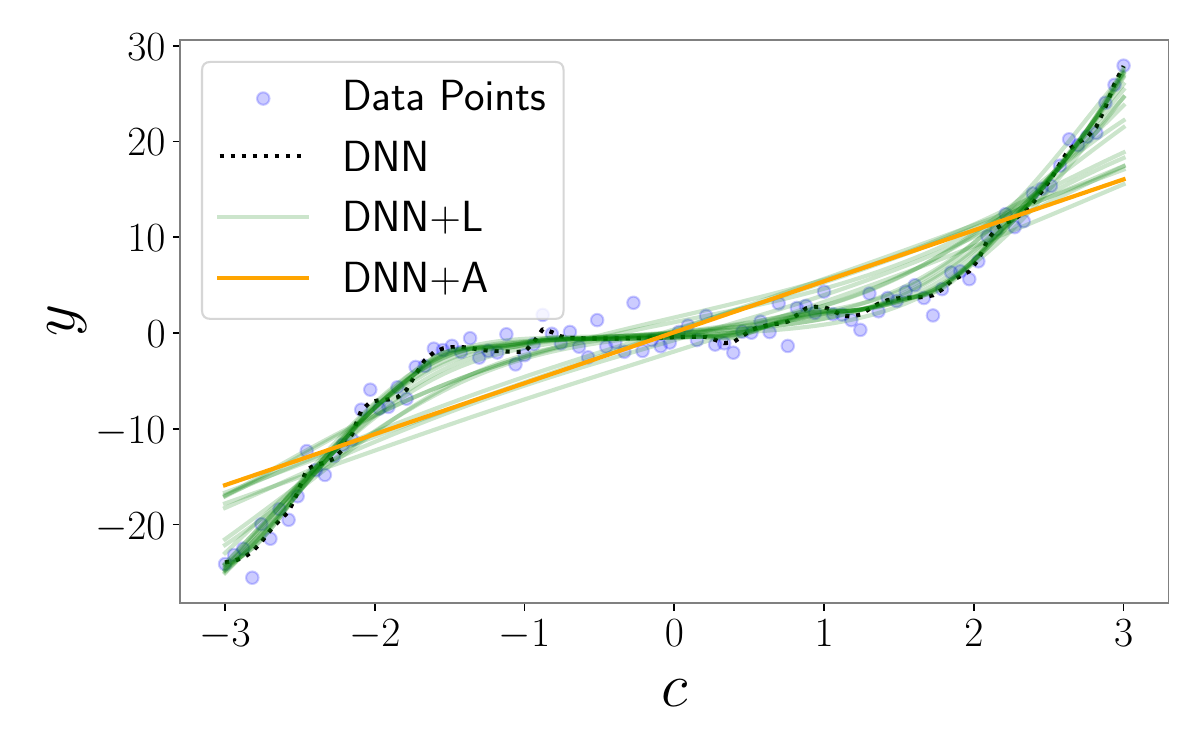}
    \caption{\textbf{Validation of Symmetry~\ref{invariance:symm_3}.} Increasing the weight of Constraint~\ref{constraint:3} restricts the curvature of the learned concept formula, interpolating between flexible unconstrained fitting and the architecturally compiled hypothesis space. The SIM makes reasoning complexity both tunable through optimisation and enforceable by design.}
    \label{fig:val-symIII}
\end{figure}

\subsection{Applications to large-scale models}

Having validated the key symmetries of the SIM, we now turn to large-scale models. Specifically, we first study the interpretability of three Vision-Language Models (VLMs) commonly used in the literature to generate label-free concept annotations (e.g., \citealt{oikarinenlabel} and \citealt{labo}), namely CLIP \citep{radford2021learning}, Moondream2 \citep{moondream2}, and Qwen2 \citep{bai2025qwen3}. We then examine the interpretability of Steerling-8B \citep{steerling2026paper,steerling2026github}, the largest concept-based model currently available.

\subsubsection{Concept semantics in vision-language models}

\paragraph{Label-free concept maps may have broken concept semantics (Figure~\ref{fig:symI-vlm}).}
One common way to obtain concept annotations at scale is to generate them using pre-trained multimodal models \citep{yuksekgonul2022post,oikarinenlabel,feng2026bayesian}. Here, we examine \emph{to what extent pre-trained multimodal models satisfy Symmetry~\ref{invariance:symm_1}}. In particular, we ask whether the semantics of the concept \texttt{red}, as encoded by vision-language models (VLMs), corresponds to the known ground truth ordering.

To this end, we take an image from the MNIST dataset \citep{mnist} and produce a sequence of copies with progressively increasing red intensity. For CLIP, we test the induced semantics by computing the dot product between each image embedding and the text embedding of the prompt ``red''.
For Moondream2 and Qwen2, we use direct pairwise prompting. Given two images where the second has strictly higher red intensity than the first, we prompt these two VLMs: \textit{``You are shown two images side by side, labelled~A~(left) and~B~(right). Which image shows a stronger degree of red? Reply with a single letter: A or B.''} To improve robustness, we sample each response $3$ times and take the majority vote as the final answer.
Figure~\ref{fig:symI-vlm} shows the full prediction matrix for each model, alongside the expected ground truth.

Our results show that the three models disagree in most cases and significantly deviate from the ground truth. In particular, Moondream2 and Qwen's mistakes are asymmetric: comparing image $A$ against image $B$ can give a different result from comparing image $B$ against image $A$. Thus the inferred ordering depends on the presentation order, which violates a basic consistency requirement for any concept ranking.
These results show that relying on pre-trained models to annotate concepts carries an inherent risk: their concepts can be internally inconsistent and misaligned with the intended human or ground-truth ordering, thereby failing  to satisfy Symmetry~\ref{invariance:symm_1}.
\begin{figure}[h]
    \centering
    \includegraphics[width=0.25\linewidth]{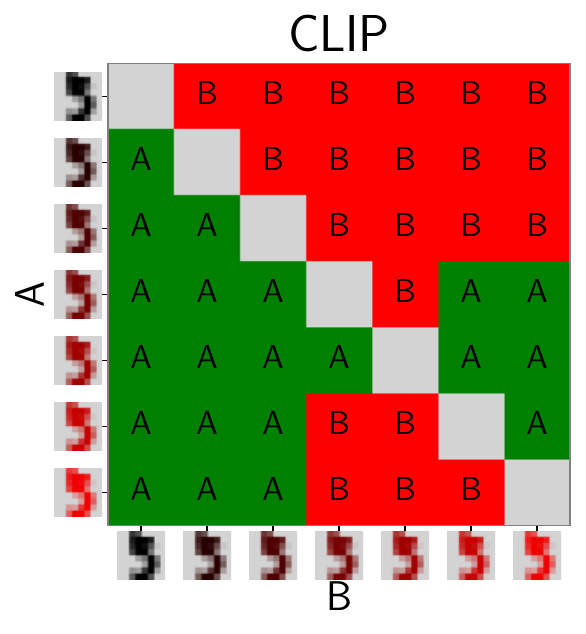}
    \includegraphics[width=0.25\linewidth]{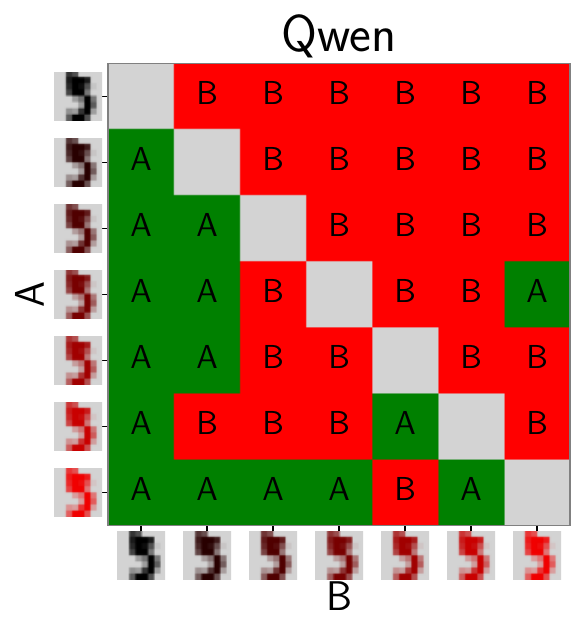}
    \includegraphics[width=0.25\linewidth]{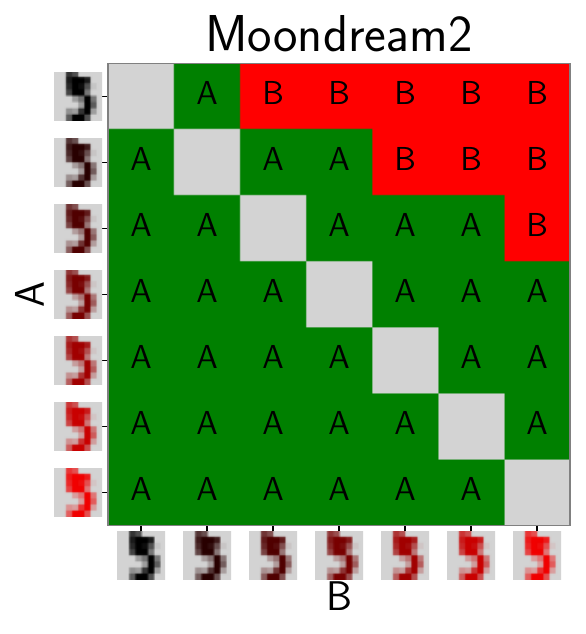}
    \caption{\textbf{Vision-language concept maps.} Heatmaps show all pairwise comparisons of images with increasing red intensity. $B$/red ($A$/green) cells indicate that the model judges image $B$ ($A$) as more red than image $A$ ($B$). A semantics-preserving model should mark the upper triangle as $B$/red and the lower triangle as $A$/green. Pre-trained VLMs can produce inconsistent pairwise rankings, showing that label-free concept annotations need not preserve concept semantics.}
    \label{fig:symI-vlm}
\end{figure}

\paragraph{Label-free concept maps with broken concept semantics can be fixed without training (Figure~\ref{fig:symI-vlm-bis}).}
Having established that the concept semantics of pre-trained models can be unreliable, we next ask whether such misaligned semantics can be repaired \textit{without} fine-tuning. We focus on Moondream2, which made the most errors in the previous experiment, and attempt to correct its semantics for the concept \texttt{red}. To do so, we introduce an architectural bias based on ordered prototypes. We generate a set of prototypical images spanning increasing red intensities, sort them by intensity, and extract their latent representations from the Moondream2 image encoder. Each test image is then assigned to its nearest prototype in latent space (Figure~\ref{fig:symI-vlm-bis}, left), and pairs of test images are ranked according to the ordering of their assigned prototypes (Figure~\ref{fig:symI-vlm-bis}, right). The recovered ranking matches the ground truth perfectly. This suggests that the Moondream2 image encoder contains enough information to recover the correct semantics of \texttt{red}, and that the earlier failure likely arises from how this information is decoded into pairwise judgements.
\begin{figure}[h]
    \centering
    \includegraphics[width=0.48\linewidth]{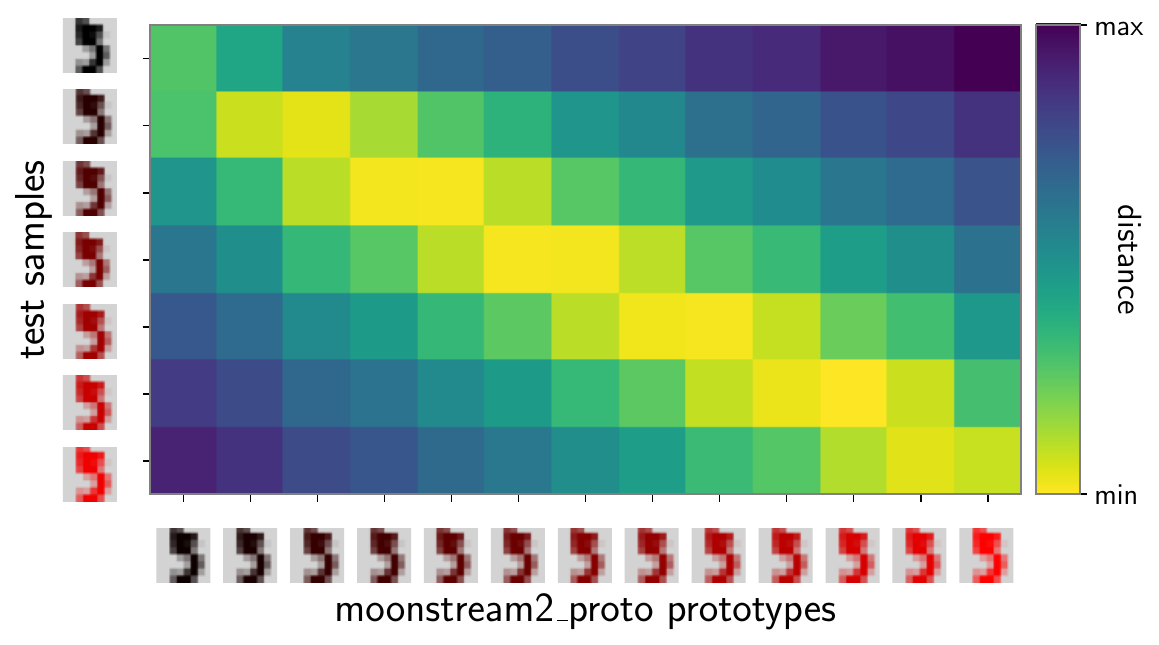}
    \includegraphics[width=0.25\linewidth]{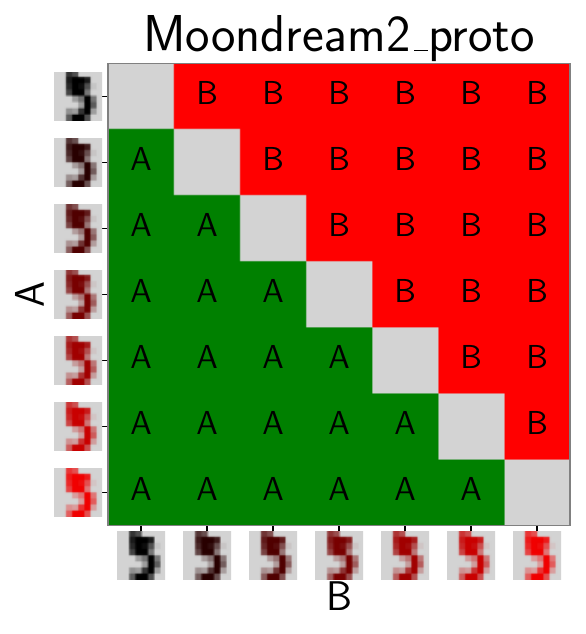}
    \caption{\textbf{Label-free vision-language concept semantics can be perfectly fixed without training.} Left: similarity matrix between test samples and prototypical examples ordered by red intensity. Right: pairwise ranking of test samples obtained by assigning each test sample to its nearest prototype. A semantics-preserving ranking should mark the upper triangule as $B$/red and the lower triangle as $A$/green. Ordered prototypes can recover the correct concept semantics without fine-tuning the VLM.}
    \label{fig:symI-vlm-bis}
\end{figure}

\subsubsection{Prediction-concept dependency in large language models}

\paragraph{Chain-of-thought explanations are not interpretable.}
Chain-of-thought (CoT) \citep{wei2022chain,jie2024interpretable} explanations are natural language sentences produced by a model alongside its predictions, often presented as intermediate reasoning steps leading to the answer. However, under Symmetry~\ref{invariance:symm_2}, an explanation is interpretable only if the prediction actually depends on it. We therefore  test the corresponding constraint by measuring whether the prediction Jacobian $\nabla f$ lies in the subspace spanned by the CoT Jacobian $\nabla c$ (we normalise the score so that $0$ means that the symmetry holds, and $1$ means it is maximally violated). 

We repeatedly prompt Qwen2.5 to produce a prediction together with a CoT explanation, and find consistently high constraint violation scores (above $0.8$). This indicates that the information used to generate the next token is largely independent of the information encoded in the CoT explanation. Thus, although CoT may provide plausible natural-language rationale, it does not necessarily provide an interpretable explanation in the sense of Symmetry~\ref{invariance:symm_2}. This supports prior observations that CoT explanations can be unfaithful \citep{turpin2023language,barez2025chain}, and provides evidence against trating CoT alone as sufficient for model interpretability.

\paragraph{Predictions in large-scale concept-based models depend on a small fraction of concepts (Figure~\ref{fig:steerling}).}

Finally, we ask whether the concept–prediction dependency required by Symmetry~\ref{invariance:symm_2} is better satisfied in large-scale concept-based models. We study Steerling-8B \citep{steerling2026github}, a  publicly available concept-based LLM whose internal representations are organised into $\sim33{,}000$ supervised and $\sim101{,}000$ unsupervised concepts. We restrict our analysis to the supervised concepts, excluding the unsupervised ones from the computation, and evaluate how many concepts the next-token prediction effectively depends on. 

To answer this, we measure the violation of Constraint~\ref{constraint:2} related to Symmetry~\ref{invariance:symm_2} as a function of the concept bottleneck size. Unlike in the CoT experiment above, where the explanation was external to the model computation, Steerling contains and explicit  concept-based hidden state $\bar{h}$ (see the Steerling white paper \citep{steerling2026paper}). We use $\bar{h}$ as a proxy for the next-token prediction $f$, since the two coincide up to a linear map and computing the Jacobian $\nabla_z \bar{h}$ is substantially more efficient. Specifically, we compute the prediction Jacobian $\nabla_z \bar{h}$ and project it onto the column space of the concept Jacobian $\nabla_z c^{(K)}_{\text{sup}}$, obtained by retaining only the top-$K$ activated concepts\footnote{We approximate $\mathrm{rank}(\nabla_z \bar{h})$ by the $99.9\%$-energy effective rank.}. 

\begin{figure}[t]
    \centering
    \includegraphics[width=0.6\linewidth]{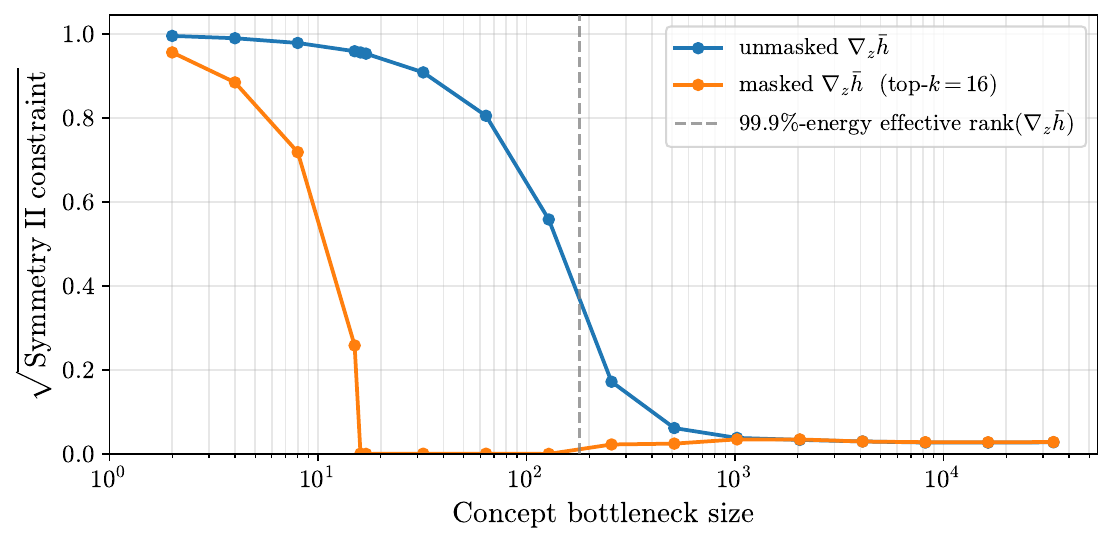} \quad
    \caption{\textbf{In large-scale concept-based models, predictions depend on a small fraction of concepts.} Blue: violation of Constraint~\ref{constraint:2} as the number of retained supervised concepts increases. Predictions in Steerling depend almost exclusively on $\sim500$ concepts. Orange: the same analysis after masking all but the top-$16$ concept activations at test time. The concept dependence of Steerling predictions can be sparsified at test time without retraining or fine-tuning. 
    }
    \label{fig:steerling}
\end{figure}

In our experiment, we progressively increase $K$ (Figure~\ref{fig:steerling}, blue), and observe that the constraint violation rapidly decreases. In particular, the violation drops below $0.1$ at $K \approx 500$, indicating that next-token predictions depend almost entirely on $\sim 1.5\%$ of the supervised concepts. 
This contrasts sharply with the CoT result above: whereas CoT explanations were largely independent of the prediction computation, large-scale concept-based model's predictions are substantially mediated by human-supervised concepts. 

However, even $\sim500$ relevant concepts remain difficult for a human to inspect, compose, or steer directly. We therefore test whether the dependency can be made more sparse at inference time. As a feasibility check, we mask all but the top-$16$ concept activations and recompute $\nabla_z \bar{h}$ and its projections. In this setting, the constraint drops to the numerical floor and, for $K \le 16$, follows $\sqrt{1 - K/16}$ exactly (Figure~\ref{fig:steerling}, orange), quantitatively confirming that the modified prediction structurally depends on exactly $16$ concepts.

This tight concept bottleneck degrades predictive performance, so we do not propose it is a practical deployment setting. Rather, it is illustrates two points. First, large-scale concept-based models can make predictions genuinely depend on supervised concepts, unlike post-hoc rationales that may not mediate the computation. Second, these architectures are amenable to test-time sparsification, enabling fine-grained analyses of which concepts drive a prediction.  A key challenge for future concept-based models is therefore to preserve predictive performance while making this concept dependency sparse enough for human inspection and steering.

\section{Operationalisation}
\label{sec:operationalisation}
The SIM provides a framework for embedding interpretability directly into any AI model through constraints that target distinct interpretability premises. By making specific choices for these premises, the framework subsumes well-known families of interpretable models as special cases. In this section, we show how this flexibility can be used to (1)~expose gaps in existing interpretable models (Section~\ref{sec:instantiations}), and (2)~design a compositional software library for building arbitrary interpretable models (Section~\ref{sec:pyc}).

\subsection{Limitations and opportunities in interpretability research}
\label{sec:instantiations}
The SIM subsumes well-known families of interpretable models as special cases, enabling a systematic comparison of their interpretability properties. This comparison exposes limitations that are sometimes overlooked in the original formulations and points to under-explored research directions. We discuss some of these connections and limitations below, and provide an overview of representative model families from the literature in Table~\ref{tab:instantiations}.
\begin{table}[h]
    \centering
    \citestyle{plain} 
    \resizebox{\textwidth}{!}{
    \begin{tabular}{lcccccc}
        \hline
        Model & \multicolumn{2}{c}{Symmetry~\ref{invariance:symm_1}} & \multicolumn{2}{c}{Symmetry~\ref{invariance:symm_2}} & \multicolumn{2}{c}{Symmetry~\ref{invariance:symm_3}} \\
         & Architecture & Optimisation & Architecture & Optimisation & Architecture & Optimisation \\ \hline
        CBM \citep{cbm} & $c_w(z) = \theta^\top z$ \cellcolor{orange!25} & $\min \mathcal{L}(c(z), c^{[h]}(z))$ \cellcolor{orange!25} & $f(z) = \phi(c(z))$ \cellcolor{green!25} & \cellcolor{gray!25} & $\nabla_c^2 \phi=0$ \cellcolor{orange!25} & \cellcolor{gray!25} \\ 
        SENN \citep{senn} & $c_w(z) = \text{sim}(z, p_w(z))$ \cellcolor{orange!25} & \cellcolor{gray!25} & $f(z) = \phi(c(z))$ \cellcolor{green!25} & \cellcolor{gray!25} & $(\nabla_c^2 \phi) \mathbb{I}_{\mathcal{N}(z)}=0$ \cellcolor{orange!25} & \cellcolor{gray!25} \\
        ProtoPNet \citep{protopnet} & $c_w(z) = \text{sim}(z, p_w(z))$ \cellcolor{orange!25} & \cellcolor{gray!25} & $f(z) = \phi(c(z))$ \cellcolor{green!25} & \cellcolor{gray!25} & $\nabla_c^2 \phi=0$, $\phi(0)=0$ \cellcolor{orange!25} & \cellcolor{gray!25} \\
        SAE \citep{cunningham2023sparse} & $c_w(z) = \theta^\top z$, $\frac{\text{dim}(C)}{\text{dim}(Z)} \gg 1$ \cellcolor{orange!25} & $\min \lambda \|\theta\|_1$, $\lambda\geq 0$ \cellcolor{orange!25} & \cellcolor{gray!25} & $\min \|z - d(c(z))\|^2$ \cellcolor{orange!25} & \cellcolor{red!25} & \cellcolor{red!25} \\
        Decision tree \citep{breiman1984classification} & $c_w(z) = z$ \cellcolor{red!25} & \cellcolor{red!25} & $f(z) = \phi(c(z))$ \cellcolor{green!25} & \cellcolor{gray!25} & $\nabla_c \phi = \sum_i \theta_i \delta(c - c_i)$ \cellcolor{green!25} & \cellcolor{gray!25} \\ 
        NAM \citep{agarwal2021neural} & $c_w(z) = z$ \cellcolor{red!25} & \cellcolor{red!25} & $f(z) = \phi(c(z))$ \cellcolor{green!25} & \cellcolor{gray!25} & $\frac{\partial^2 \phi}{\partial c_i \partial c_j} = 0 \quad \text{for } i \neq j$\cellcolor{orange!25} & \cellcolor{gray!25} \\
        \makecell[l]{Feature attribution~\\ \citep{grad_cam,lime,shap,integrated_gradients}} & $c_w(z) = z$ \cellcolor{red!25} & \cellcolor{red!25} & $f(z) = \phi(c(z))$ \cellcolor{green!25} & \cellcolor{gray!25} & \cellcolor{red!25} & \cellcolor{red!25} \\
        \hline
    \end{tabular}
    }
    \caption{Several interpretable models can be obtained by making specific choices for each symmetry of the Standard Interpretable Model. \textcolor{red}{Red}: symmetry not considered. \textcolor{orange}{Orange}: symmetry under- or over-constrained. \textcolor{green}{Green}: symmetry fully constrained.}
    \label{tab:instantiations}
    \citestyle{authoryear} 
\end{table}
\paragraph{Traditional methods: incompatible with non-semantic inputs.}
Neural additive models \citep{agarwal2021neural}, decision trees \citep{breiman1984classification,hu2019optimal}, and feature attribution \citep{grad_cam,lime,shap,integrated_gradients} assume that their inputs reside in an aligned concept space (i.e., they assume identity concept maps). However, in practice, this rarely holds, meaning that   Symmetry~\ref{invariance:symm_1} is generally violated by these models. As a result, these methods are restricted to domains where input features already correspond to human concepts, a strong assumption that excludes many settings with high-dimensional, unstructured representations (e.g., vision tasks).\looseness-1

Moreover, feature attribution techniques do not constrain Symmetry~\ref{invariance:symm_3}. This has two consequences. First, they implicitly assume that humans understand arbitrarily complex relations between concepts and model predictions. Second, even when predictions depend exclusively on concepts, those concepts can be combined in infinitely many ways to produce almost identical predictions, making it difficult to identify which combination corresponds to the true mechanism underlying the model's prediction. This shows that satisfying Symmetry~\ref{invariance:symm_2} is necessary but not sufficient for interpretability: without exposing the concept formula used for prediction and constraining it through Symmetry~\ref{invariance:symm_3}, interpretability methods can be unreliable \citep{adebayo2018sanity,bilodeau2024impossibility}.

\paragraph{Sparse autoencoders: concepts lack intrinsic human semantics, and concept-prediction dependence is arbitrarily complex.} 
Sparse autoencoders (SAEs)  \citep{ranzato2006efficient,makhzani2013k, cunningham2023sparse} and related dictionary-based approaches \citep{yeh2020completeness} encourage concept disentanglement through Symmetry~\ref{invariance:symm_1}. However, in the absence of a shared concept semantics constraint, the discovered concepts are not guaranteed to correspond to human-interpretable notions, and their semantics must be validated post-hoc. This has concrete consequences beyond interpretability: without fixed concept semantics, each new model version may produce an entirely different set of concepts, rendering previous alignments invalid and requiring the alignment process to restart. Since post-hoc alignment is time-consuming, this directly limits the scalability of SAEs.
Moreover, Symmetry~\ref{invariance:symm_2} is enforced only indirectly through a reconstruction loss, which does not guarantee that the model outputs depend exclusively on the learned concept maps. This can lead to limited steerability \citep{wu2025axbench}, since interventions on cocenpts may not reliably control predictions, and may also degrade predictive accuracy \citep{luo2026_skop}. Finally, because SAEs do not constrain   Symmetry~\ref{invariance:symm_3}, they inherit the same issues as feature attribution methods: even when relevant concepts are identified, the way those concepts compose into predictions may remain arbitrarily complex.

\paragraph{Prototypical and self-explaining neural nets: concepts lack intrinsic human semantics and concept-prediction dependence is either over- or under-constrained.} 
Prototypical-based models address Symmetry~\ref{invariance:symm_1} by defining concepts through similarity scores to learned prototypes. However, since prototypes are generated by the network itself without any constraints enforcing shared human semantics, there is no guarantee that they correspond to human-interpretable concepts. Prototypical neural nets \citep{protopnet}, and their variants \citep{ma2024looks}, typically impose a strong  Symmetry~\ref{invariance:symm_3} constraint, which can substantially restrict expressivity. In contrast, self-explaining neural networks \citep{senn} impose only a local constraint on the hypothesis space of concept formulae,  which may be insufficient to guarantee globally interpretable concept–prediction dependence.

\paragraph{Bottleneck models: concept semantics is approximate and over-constrained.} 
Concept Bottleneck Models (CBMs) \citep{cbm} address semantic alignment by supervising the concept map with human concept annotations, satisfying Symmetry~\ref{invariance:symm_1} more robustly than in prototype models or SAEs. However, standard CBMs typically require predicted concept scores to match annotated concept scores directly, rather than only preserving the ranking or semantic ordering induced by those annotations. This can over-constrain the concept maps, reducing flexibility and degrading predictive performance when annotations are noisy \citep{preference_optimization_cbms} or incomplete \citep{cem}. Moreover, in standard CBM formulations, the concept-prediction map is often highly restricted, for example, by using a linear predictor over concepts. Thus, Symmetry~\ref{invariance:symm_3} may also be over-constrained, limiting the expressivity of the resulting model.

\paragraph{Research opportunities.} 
Two patterns emerge from this comparison. First, there is consensus in the literature on how to constrain Symmetries~\ref{invariance:symm_2} and \ref{invariance:symm_3} through architectural choices, whereas Symmetry~\ref{invariance:symm_1} remains an open research area. In particular, existing methods often introduce concept maps, but do not guarantee that their semantics align with those of the target human. Second, optimisation-based constraint enforcement of interpretability constraints remains comparatively unexplored, particularly for Symmetries~\ref{invariance:symm_2} and~\ref{invariance:symm_3}. These gaps identify concrete research opportunities: developing architectures that guarantee shared concept semantics, and designing optimisation procedures that can enforce interpretability constraints without requiring them to be hard-coded into the model structure.\looseness-1

\subsection{PyTorch Concepts: from theory to code}
\label{sec:pyc}
We next show the SIM can be translated into design choices for a software library for interpretable DNNs. We illustrate this through \textit{PyTorch Concepts} (PyC)$\,$\footnote{PyC documentation is available at: \url{https://pytorch-concepts.readthedocs.io/}.}, an open-source Python library built on top of PyTorch \citep{paszke2019pytorch} and inspired by the SIM.

\subsubsection{Interface of core modules and functionals}
The theory developed above imposes a set of core design requirements. In PyC, these requirements are reflected in a modular low-level API composed of layers, loss terms, and metrics, each corresponding to a theoretical object or constraint. Table~\ref{tab:theory_to_pyc} summarises this correspondence.

\begin{table}[t]
\centering
\caption{Correspondence between components of the SIM and their implementation in PyC.}
\label{tab:theory_to_pyc}
    \resizebox{\textwidth}{!}{
    \begin{tabular}{ll}
        \toprule
        \textbf{Theory} & \textbf{PyTorch Concepts (PyC)} \\
        \midrule
        Concepts $c_w$ & \texttt{BaseEncoder(in\_latent, out\_concepts)} \\
        Model~$f$ & \texttt{BasePredictor(in\_latent, in\_concepts, ..., out\_concepts)} \\
        Architecture I & \texttt{BasePrototypeEncoder(in\_latent, out\_concepts)} \\
        Architecture II 
        & \texttt{BaseConceptPredictor(in\_concepts, out\_concepts)} \\
        Architecture III & \texttt{BaseConstrainedPredictor(in\_concepts, out\_concepts, pde)} \\
        Constraint~\ref{constraint:1} & \texttt{ConceptSemanticLoss(c\_pred, c\_target)} \\
        Constraint~\ref{constraint:2} & \texttt{JacobianProjectionLoss(jacobian\_c, jacobian\_y)} \\
        Constraint~\ref{constraint:3} & \texttt{BoundedReasoningLoss(pde, phi, jacobian\_phi, \dots)} \\
        \bottomrule
    \end{tabular}
    }
\end{table}

\paragraph{Layers.}
First, the codebase must expose abstract base layer classes corresponding to the fundamental functional components of the SIM. To this end, PyC's \texttt{BaseEncoder} class defines the interface for concept maps $c_w: Z \to \mathbb{R}^l$, while the \texttt{BasePredictor} class defines the interface for predictive models $f: Z \to \mathbb{R}$. Both abstractions are implemented as atomic \texttt{torch.nn.Module}s. Subclasses can therefore be used to implement layers with guaranteed properties, corresponding to the compilation of specific interpretability constraints into the architecture.

For concept encoders, \texttt{BasePrototypeEncoder} enforces Symmetry~\ref{invariance:symm_1} by construction, and implements Architecture~I by constructing concept maps from similarity to ordered prototypes. For concept predictors, \texttt{BaseConceptPredictor} provides an abstraction for layers whose inputs are restricted to concepts, so that $f = \phi(c_1, \dots, c_l)$. Lastly, \texttt{BaseConstrainedPredictor}, a subclass of \texttt{BaseConceptPredictor}, implements Architecture~III by constraining the admissible concept formulae through the choice of the operator $K^{[h]}$, specified in the library via the \texttt{pde} argument. Different choices of this operator yield different families of interpretable predictors, such as linear, periodic, or monotonic formulae, as discussed in Section~\ref{sec:architecture-III}.

\paragraph{Losses and metrics.}
Any library supporting SIM-like models should support enforcing interpretability constraints via optimisation (Section~\ref{sec:learning}). To this end, PyC provides functionals that both (a)~return a scalar measure quantifying each constraint violation, and (b)~can be used directly as training losses. Specifically, PyC includes (1)~\texttt{ConceptSemanticLoss}, which penalises violations of Constraint~\ref{constraint:1}; (2)~\texttt{JacobianProjectionLoss}, which penalises misalignment between the Jacobians of two representations (e.g., the learned concepts and a downstream prediction); and (3)~\texttt{BoundedReasoningLoss}, which penalises deviations from the solution space of the provided \texttt{pde}, evaluated on the concept formula $\phi$ and its derivatives. Since these functions measure constraint satisfaction, they can also serve as metrics for assessing the extent to which an arbitrary model satisfies the corresponding interpretability constraints.

\paragraph{Trade-offs between architectural and optimisation constraints.}
PyC also supports layers that balance interpretability guarantees against expressivity. A fully constrained layer, such as a linear concept predictor, uses the same concept-based rule for every input. This gives a global guarantee, but it can be too restrictive when the role of each concept depends on the input. PyC therefore also provides layers that preserve the desired structure locally, for each fixed representation $z$ \citep{senn, debot2024interpretable, de2025causally}. For example, \texttt{HyperlinearConceptExogenousToConcept} is linear in the concepts for each fixed representation $z$, but its coefficients are produced by a nonlinear \emph{hypernetwork} conditioned on $z$. Thus, the model remains interpretable with respect to the concepts at each input, while the concept-to-output rule can change across the input space. This recovers part of the flexibility lost by imposing a single global linear predictor.

\paragraph{Interventions.}
A key property of Symmetry~\ref{invariance:symm_2} is that an interpretable model's outputs can be controlled by intervening on its internal components, thereby enabling steering, causal analysis, and debugging. PyC supports such interventions via a context manager that temporarily modifies the layers being intervened on during execution. This enables interventions both at the concept activation level, by modifying a layer's output, and at the model structure level, by modifying the layer itself. PyC provides three intervention strategies: \texttt{GroundTruthIntervention}, which replaces predictions with ground truth values; \texttt{DoIntervention}, which sets concepts to constants; and \texttt{DistributionIntervention}, which samples concepts from a given distribution. Intervention policies \citep{closer_look_at_interventions} can also be used to decide the intervention order (e.g., based on uncertainty). 

\subsubsection{Advanced interpretable models with a minimal interface}

Beyond the core mapping between theory and implementation, PyC's high-level API provides ready-to-use models corresponding to possible instantiations of the SIM. For example, \texttt{ConceptBottleneckModel} enforces the bottleneck $f = \phi(c)$ and adopts a linear hypothesis space for $\phi$ \citep{cbm}; \texttt{ConceptEmbeddingModel} relaxes this constraint by allowing $\phi$ to also operate on continuous embeddings \citep{cem}; \texttt{SteerlingModel} provides an out-of-the-box implementation of the Steerling-8B LLM \citep{steerling2026paper}.

The same interface can also combine hard structural constraints with soft optimisation objectives. For example, one could train a predictor with a loss term that penalises high polynomial degree, while using the architecture that enforces a hard upper bound on the admissible degree. This illustrates how PyC can instantiate different points in the SIM design space, ranging from fully compiled interpretability constraints to softer optimisation-based constraints.

\section{Discussion}

\subsection{Falsifiability and evaluation protocols}

The SIM makes interpretability claims falsifiable along three distinct axes:
\begin{itemize}
    \item \textbf{Analytical:} Claims such as ``$f$ satisfies symmetry $S$'' can be disproven analytically by checking whether the corresponding invariance or constraint holds.
    \item \textbf{Empirical:} Claims that a ``model satisfying constraint $C$ cannot solve a task $T$ or represent a function $F$'' can be empirically falsified (e.g., through controlled benchmark tests such as Weisfeiler-Leman-style evaluations).
    \item \textbf{Epistemic:} Claims of the form ``a human can correctly predict or intervene on this model's behaviour on unseen inputs'' can in principle be falsified via user studies.
\end{itemize}

\paragraph{Evaluation protocols.}
A direct consequence of the proposed theoretical framework and the SIM is the explicit separation between the mathematical properties of interpretability and the user-dependent choices surrounding them. Given a fixed set of premises, the extent to which a model is interpretable is a formal property: it is fully and objectively determined by the corresponding interpretability symmetries and constraints. By contrast, which premises are adopted in the first place, and how the resulting model behaviour visualised or communicated, are user-dependent choices that may vary across users, tasks and domains. 

This separation has a direct and important consequence for evaluation protocols. User-dependent components should be assessed through user studies or qualitative data analysis, since they are inherently tied to human judgment and context. In contrast, \emph{formal properties of interpretability} are user-agnostic and must therefore be evaluated either analytically or quantitatively (i.e., they do not necessitate a user study) \citep{freiesleben2023dear}.


\subsection{Related works}
\label{sec:related-works}

\paragraph{Interpretability as constrained optimisation.}
The perspective that interpretability can be understood through constrained optimisation has been discussed by~\citet{rudin2019stop, rudin2022interpretable}. However, these works leave two gaps that the SIM aims to address. First, they do not provide a unified methodology for deriving interpretability constraints from underlying symmetries, nor for translating those constraints into architectures or learning procedures. Second, they identify \emph{sparsity} as a central interpretability principle and constraint. While sparsity can be useful in many setting, it is not a universal requirement for interpretability. Indeed, as noted by \citet{rudin2019stop}, ``in some domains, sparsity is useful, and in others is it not''. Sparsity should therefore be treated as one possible modelling constraint, rather than a fundamental invariant principle underlying all interpretable methods.

\paragraph{Constraining machine learning models.}
A long line of work incorporates prior knowledge into ML models by adding constraint-violation terms to their training objectives. In these contexts, constraints act as soft preferences: the learner is encouraged, but not guaranteed, to satisfy them. Influential examples include posterior regularisation \citep{ganchev2010posterior}, which restricts posterior distributions through expectation-based constraints, and semantic-based regularisation \citep{diligenti2017semantic, xu2018semantic, marra2019lyrics, fischer2019dl2, marra2024statistical}, which compiles logical knowledge into differentiable penalties. 
Constraint penalties are also central to physics-informed neural networks (PINNs) \citep{raissi2017physics}, and to Lagrangian-based constrained empirical risk minimisation \citep{narasimhan2018learning, cotter2019training, kervadec2022constrained}. Nevertheless, in all of these cases, unless the penalty is optimised exactly and is well behaved, the learned predictor may still violate the desired specification at test time.

A complementary line of work enforces constraints by construction, changing the class of functions the model can represent to guarantee the satisfiability of a constraint. This can be done by: (1)~appending differentiable layers that map raw predictions into the feasible set while preserving end-to-end trainability \citep{marquez2017imposing, min2024hardnet, yang2023safe};
(2)~embedding optimisation problems directly as neural layers, allowing backpropagation through the solution map \citep{amos2017optnet, agrawal2019differentiable, gould2021deep}; or (3)~imposing structural biases, as in monotonic networks \citep{sill1997monotonic}, deep lattice networks \citep{deep_lattice_networks}, and deep sets \citep{zaheer2017deep}.

The SIM incorporates both approaches: ``soft'' constraint enforcement during optimisation in \learningText{Phase~II}, and ``hard'' architectural enforcement in \architecturesText{Phase~III}. The main difference is that, in this work, constraints are not arbitrary domain specifications, but are derived from interpretability premises via symmetries. The SIM therefore provides a method for designing constraints that capture properties traditionally associated with interpretable models, and then proposes two distinct, but complementary paths to enforce those constraints in practice.


\paragraph{Geometric and physics-inspired machine learning.}
Geometric \citep{bronstein2021geometric} and physics-inspired \citep{guo2025physics} machine learning frameworks align closely with the methodology proposed in this work, since they also organise model design around invariances, symmetries and variational principles. In particular, both frameworks seek to identify a shared set of common principles, or symmetries, from which which model classes, learning objectives, and architectures can be derived. However, these frameworks primarily target general machine learning, rather than the specific requirements of interpretability. They tehrefore do not address the central requirement considered here: aligning model computation with human-understandable concepts, ensuring that predictions depend on those concepts, and accounting for the bounded nature of human reasoning when designing or training models.

\paragraph{General interpretability theories.}

Foundational works in Explainable AI (XAI) \citep{xai_darpa, xai_darpa_2} have sought to clarify what interpretability is and how it should be evaluated. However, most of these efforts remain primarily conceptual, without offering a formal or measurable account of interpretability. For example, early work by \citet{doshi2017towards} argues that interpretability claims must be operationalised and matched to an appropriate evaluation level, but stops short of providing a formal definition that can be tested directly. Follow-up work by \citet{lipton2018mythos} argues that interpretability is an underspecified umbrella term that conflates distinct desiderata such as simulatability, decomposability, and algorithmic transparency, and that these properties can compete rather than coexist. In contrast, we treat several of these desiderata as core components of interpretability (e.g., simulatability and decomposability are captured through Symmetry~\ref{invariance:symm_3}). Other works, such as \citet{miller2019explanation}, draw on philosophy, cognitive science, and social psychology to argue that explainable AI should be grounded in the ways humans actually explain to one another, a component the SIM captures via its target entity $h$ \citep{watson2021explanation}. Finally, more recent works, such as those by \citet{giannini2024categorical} and \citep{ bordt2025rethinking}, provide formal frameworks for capturing particular aspects of interpretability. However, they do not show how such frameworks can be used to derive concrete interpretable architectures, as can be done within the SIM. 

More recently, there have been calls (e.g., by~\citealt{rudin2019stop} and \citealt{rudin2022interpretable}) to design architectures that are interpretable by construction \citep{senn, protopnet, agarwal2021neural, ma2024looks, steerling2026github}. These calls came amidst results suggesting that popular post-hoc explainability methods, such as feature importance methods \citep{lime, anchors, shap, vstrumbelj2014explaining} or saliency methods \citep{og_saliency, grad_cam, integrated_gradients}, can be fragile and misleading \citep{fragile_saliency_maps, adebayo2018sanity, saliency_manipulation, unreliability_saliency, molnar2020general}. As a result, concept-based interpretable models have received increasing attention \citep{concept_whitening, cbm, cem, intcem, glancenets, xu_energy_based_cbm_iclr_24, yuksekgonul2022post, labo, oikarinenlabel, barbiero2024relational, dominici2024causal, almudevar2025there, feng2026bayesian}. Our framework and the SIM directly support research in this direction by introducing formal, operational mechanisms for compiling interpretability premises into architectural components (Section~\ref{sec:architecture}), and by framing concepts as first-class objects of study in interpretability (e.g., Symmetry~\ref{invariance:symm_1}).

\paragraph{Pragmatic and actionable interpretability.} 
A more recent line of work argues that progress in interpretability has been stalled by a lack of ``purpose'', and calls for interpretability research to be tied to concrete downstream outcomes. For example, \citet{nanda2025pragmatic} advocate a pragmatic turn in mechanistic interpretability research, where methods are selected by their ability to solve problems on the critical path to safe AGI. Similarly, \citet{orgad2026interpretability} argue that XAI research should include \emph{actionability}, defined along the axes of concreteness and validation, as a core evaluation criterion. Although the sentiment in both of these efforts is similar to the motivation of this paper, they do not prescribe how to operationalise actionability at the level of model design. Our framework addresses this gap by tying interpretability premises to symmetries that are both testable and directly implementable in a model's training or architecture.

\subsection{Open challenges}

We identify three primary open challenges related to the Standard Interpretable Model. First, it remains unclear  whether the premises and symmetries introduced here offer an exhaustive characterisation of interpretability, or whether additional fundamental symmetries are needed. Second, further analysis is needed to determine whether the derivation of constraints, architectures, and learning problems could be automated, and whether constraints could be rewritten in more efficient ways to ensure a systematic and scalable implementation. Finally, the theoretical consequences of this work require deeper exploration. Since the SIM builds on existing theories from geometry, physics and constraint optimisation, it likely inherits a wealth of existing results that have yet been fully mapped and exploited in the context of interpretable machine learning.

\section{Conclusion}

In this work, we propose the Standard Interpretable Model (SIM), a general theory and framework to guide interpretable machine learning research. Specifically, given a set of interpretability desiderata captured as premises, we show how the SIM can derive symmetries from which one can deductively derive constraints, architectures, and learning dynamics for building interpretable models. These theoretical contributions are supported by empirical analyses demonstrating the utility of the SIM in evaluating and improving models. Finally, we use our framework to systematically characterise gaps in the existing interpretability literature, identify concrete research directions, and provide a software library for exploring them.

\subsection{Significance for interpretability research}

The primary significance of this work is that it provides a general, standard method for deductively deriving interpretability methods across the entire developmental pipeline, from abstract theoretical properties down to the granular details of parameter updates and implementation. Through this standardisation, the SIM  enables researchers to compare existing methods systematically and identify their limitations. Finally, by anchoring interpretability in established formalisms of logic, geometry and physics, this approach opens new research directions, such as exploring alternative Lagrangian formulations to derive specialised gradient-descent algorithms with interpretability guarantees.

\subsection{Broader impact}
Beyond its research implications, we hope the SIM can serve as a pedagogical foundation for teaching interpretability in higher education, since it is built on principles familiar to students from diverse quantitative backgrounds can relate to (e.g., constrained optimisation, geometry and probability). Over time, we hope that this work will shift the perception of interpretability, both within and beyond machine learning, from an informal collection of methods to a rigorous research field.




\bibliography{sample}


\addtocontents{toc}{\protect\setcounter{tocdepth}{-1}}

\appendix
\newpage




\section{Constraint~\ref{constraint:1}}
\label{app:constraintI}

\begin{lemma}
Symmetry~\ref{invariance:symm_1} is equivalent to Constraint 1, i.e., 
$$c_w^{[h]}(z_i) > c_w^{[h]}(z_j) \implies c_w(z_i) > c_w(z_j) \text{ iff } \mathbb{I}_{\Delta c_w^{[h]} > 0} \cdot \gamma(-\Delta c_w) = 0$$
where
\begin{itemize}
\item[(a)] $c_w,c_w^{[h]}\colon Z \to $ are functions;
\item[(b)] $\Delta c_w^{[h]} = c_w^{[h]}(z_j) - c_w^{[h]}(z_i)$ and $\Delta c_w = c_w(z_j) - c_w(z_i)$ for all $z_i,z_j\in Z$;
\item[(c)] $\mathbb{I}_X(x)=1$ if $x\in X$ and $\mathbb{I}_X(x)=0$ if $x\notin X$;
\item[(d)] $\gamma\colon \mathbb{R} \to \mathbb{R}_{\geq 0}$ be a function such that $\gamma(0)=0$.
\end{itemize}
\end{lemma}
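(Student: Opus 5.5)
The plan is to prove the biconditional pair by pair, for fixed $z_i,z_j\in Z$. Universal quantification over all pairs then gives the global statement, i.e.\ that $\chi(c_w^{[h]})\subseteq\chi(c_w)$. First I would fix orientation. By convention (b), $\Delta c_w^{[h]}>0$ means $c_w^{[h]}(z_j)>c_w^{[h]}(z_i)$. So I would state the implication with the roles of $z_i,z_j$ matching (b): $c_w^{[h]}(z_j)>c_w^{[h]}(z_i)\implies c_w(z_j)>c_w(z_i)$. The sign convention of the displayed implication is then only a relabelling of the dummy pair.

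Next I split on the indicator. If $\Delta c_w^{[h]}\le 0$, the indicator is $0$ by (c), so the product vanishes. The implication also holds vacuously, since its antecedent is false. Both sides are therefore true. If $\Delta c_w^{[h]}>0$, the product equals $\gamma(-\Delta c_w)$, and the claim reduces to showing $\gamma(-\Delta c_w)=0 \iff \Delta c_w>0$.

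This reduced equivalence is the main obstacle, because it is a statement about the zero set of $\gamma$, and hypothesis (d) alone does not control it. The direction ($\Leftarrow$) needs $\gamma\equiv 0$ on $(-\infty,0)$. The direction ($\Rightarrow$) needs $\gamma>0$ on $[0,\infty)$. That second requirement contradicts $\gamma(0)=0$ in the case of a tie $\Delta c_w=0$.

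To resolve this, I would use the properties the main text already attributes to $\gamma$ in Constraint~\ref{constraint:1}: nonnegative, monotone increasing, and vanishing exactly when the order is respected. I would make this explicit as the working assumption $\gamma^{-1}(0)=(-\infty,0]$, which is compatible with (d) and with monotonicity. Under it, the equivalence holds with non-strict model order, i.e.\ $c_w(z_j)\ge c_w(z_i)$. To recover the strict order literally stated, I would instead assume $\gamma^{-1}(0)=(-\infty,0)$, dropping $\gamma(0)=0$; a margin-type choice such as $\gamma(x)=\max(0,x+\epsilon)$ also gives strictness.

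I would add a short counterexample to show that some such assumption is necessary: with $\gamma\equiv 0$, condition (d) holds, the constraint is always satisfied, and yet the implication can fail. With the zero set of $\gamma$ pinned down, the case analysis above finishes the proof in one line per case.
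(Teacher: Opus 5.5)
Your argument has the same skeleton as the paper's proof. For a fixed pair, the implication is $\neg(P\wedge\neg Q)$, i.e.\ the vanishing of $\mathbb{I}_{\Delta c_w^{[h]}>0}\cdot\mathbb{I}_{\Delta c_w\le 0}$; your case split on the first indicator is this same step. Then $\mathbb{I}_{\Delta c_w\le 0}$ is swapped for $\gamma(-\Delta c_w)$.

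The paper does that swap in one line, citing (d). You correctly see that it needs $\gamma^{-1}(0)=(-\infty,0)$, which (d) does not supply and in fact contradicts. In the tie case ($\Delta c_w^{[h]}>0$, $\Delta c_w=0$) the product is $\gamma(0)=0$ while the strict implication fails. So the lemma as stated is false for \emph{every} $\gamma$ satisfying (d), not only for degenerate choices like $\gamma\equiv 0$.

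The paper's first step also reads $\Delta c_w^{[h]}>0$ as $c_w^{[h]}(z_i)>c_w^{[h]}(z_j)$, which is the opposite of (b). Your relabelling fixes this, and over all ordered pairs the two readings give the same condition. So your proposal is a repaired version of the paper's argument, not a different route, and your extra hypothesis on the zero set of $\gamma$ is genuinely needed.

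Three refinements.

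First, the hinge $\gamma(x)=\max(0,x+\epsilon)$ does not give the biconditional. Here $\gamma(-\Delta c_w)=0$ iff $\Delta c_w\ge\epsilon$, so pairs with $0<\Delta c_w<\epsilon$ respect the order yet violate the constraint. You only get the direction from the constraint to the strict order.

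Second, the strict option forces $\gamma$ to be discontinuous at $0$. The set $(-\infty,0)$ is not closed, so it is not the zero set of any continuous function. Hence no differentiable penalty, which is what the equations of motion need, realises the lemma exactly.

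Third, under the (d)-compatible choice $\gamma^{-1}(0)=(-\infty,0]$, the inclusion $\chi(c_w^{[h]})\subseteq\chi(c_w)$ announced in your first paragraph no longer follows. A constant $c_w$ satisfies the constraint on every pair. So only the strict variant actually connects to Symmetry~\ref{invariance:symm_1}, and even then only as an inclusion. The equality $\chi(c_w)=\chi(c_w^{[h]})$ claimed in the lemma's first sentence additionally needs $c_w^{[h]}$ to be tie-free on the compared samples.
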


\begin{proof}
Recall that ($\dagger$) $P \Rightarrow Q \iff \neg(P \land \neg Q)$. The following holds.
\begin{align*}
c_w^{[h]}(z_i) > c_w^{[h]}(z_j) \implies c_w(z_i) > c_w(z_j) & \text{ iff } \Delta c_w^{[h]} > 0 \implies \Delta c_w > 0 \tag{b}\\
&\text{ iff } \neg(\Delta c_w^{[h]} > 0 \land \Delta c_w \leq 0) \tag{$\dagger$}\\
&\text{ iff } \mathbb{I}_{\Delta c_w^{[h]} > 0} \cdot \mathbb{I}_{\Delta c_w \leq 0} = 0\tag{c}\\
&\text{ iff } \mathbb{I}_{\Delta c_w^{[h]} > 0} \cdot \gamma(-\Delta c_w) = 0 \tag{d}
\end{align*}

\end{proof}


\section{Constraint~II}
\label{app:constraintII}

\begin{lemma}
Let $r = \mathrm{rank}(\nabla_z c)$. Then
$$\mathrm{span}\bigl\{\nabla_z f_1,\,\ldots,\,\nabla_z f_v\bigr\} \;\subseteq\; \mathrm{span}\bigl\{\nabla_z c_1,\,\ldots,\,\nabla_z c_l\bigr\} \iff 1 - \frac{\|Q_c^\top Q_f\|_F^2}{\mathrm{rank}(\nabla_z f)} = 0$$
where
\begin{itemize}
    \item[(a)] $f \colon \mathbb{R}^n \to \mathbb{R}^v$ and $c \colon \mathbb{R}^n \to \mathbb{R}^l$;
    \item[(b)] $\nabla_z f \in \mathbb{R}^{v \times n}$ and $\nabla_z c \in \mathbb{R}^{m \times n}$ are Jacobian matrices;
    \item[(c)] $Q_c \in \mathbb{R}^{n \times r}$ is the matrix whose columns form an orthonormal basis for $\mathrm{span}\bigl\{\nabla_z c_1,\,\ldots,\,\nabla_z c_l\bigr\}$, so that $Q_c Q_c^\top$ is the orthogonal projector onto $\mathrm{rowspan}(\nabla_z c)$;\footnote{In practice, $Q_c$ and $Q_f$ are obtained via singular value decomposition \citep{eckart1936approximation}, keeping singular vectors corresponding to singular values greater than a small threshold $\varepsilon > 0$.}
    \item[(d)] $Q_f \in \mathbb{R}^{n \times \mathrm{rank}(\nabla_z f)}$ is the matrix whose columns form an orthonormal basis for $\mathrm{span}\bigl\{\nabla_z f_1,\,\ldots,\,\nabla_z f_v\bigr\}$.
\end{itemize}
\end{lemma}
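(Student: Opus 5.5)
The plan is to reduce both sides of the equivalence to a statement about the orthogonal projector $P_c := Q_c Q_c^\top$ onto $S_c := \mathrm{span}\{\nabla_z c_1,\ldots,\nabla_z c_l\}$, acting on the orthonormal columns $q_1,\ldots,q_k$ of $Q_f$, where $k=\mathrm{rank}(\nabla_z f)$ and $S_f := \mathrm{span}\{\nabla_z f_1,\ldots,\nabla_z f_v\} = \mathrm{span}\{q_1,\ldots,q_k\}$. Throughout we assume $k \ge 1$. If $k=0$, i.e.\ $\nabla_z f = 0$, the inclusion holds trivially while the ratio is undefined; we will note this degenerate case and adopt the convention that the constraint is then satisfied.

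The first step is an identity. Since the columns of $Q_c$ are orthonormal, for each $j$ we have $\|Q_c^\top q_j\|^2 = q_j^\top Q_c Q_c^\top q_j = \|P_c q_j\|^2$. Summing over columns gives
\[
\|Q_c^\top Q_f\|_F^2 = \sum_{j=1}^k \|P_c q_j\|^2 .
\]
By Pythagoras, $\|P_c q_j\|^2 = \|q_j\|^2 - \|(I-P_c)q_j\|^2 = 1 - \|(I-P_c)q_j\|^2$. Therefore
\[
1 - \frac{\|Q_c^\top Q_f\|_F^2}{k} = \frac{1}{k}\sum_{j=1}^k \|(I-P_c)q_j\|^2 .
\]

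The second step reads off the equivalence. The right-hand side is a sum of nonnegative terms, so it vanishes iff $(I-P_c)q_j = 0$ for every $j$, i.e.\ iff each $q_j \in S_c$. Because $S_c$ is a subspace and the $q_j$ span $S_f$, this holds iff $S_f \subseteq S_c$, which is the left-hand side. Both directions therefore come out of the same chain of equalities.

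The argument has no hard step. The only delicate points are bookkeeping. First, the statement's item (b) writes $\nabla_z c \in \mathbb{R}^{m\times n}$ while the gradients are indexed up to $l$; we will treat $m=l$. Second, $\nabla_z c_i$ and $\nabla_z f_j$ must be identified as rows of the Jacobians, hence vectors in $\mathbb{R}^n$, so that both spans live in the same space as the columns of $Q_c$ and $Q_f$. Third, the $k=0$ convention above must be stated explicitly. The footnote's SVD thresholding concerns numerics only and will be set aside, since the lemma is about exact spans.
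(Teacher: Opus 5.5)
Your proof is correct and follows essentially the paper's route: both reduce the inclusion to $Q_cQ_c^\top Q_f = Q_f$ and then compare $\|Q_c^\top Q_f\|_F^2 = \mathrm{tr}(Q_f^\top Q_cQ_c^\top Q_f)$ against $k=\mathrm{rank}(\nabla_z f)$. Your column-wise identity $1 - \|Q_c^\top Q_f\|_F^2/k = \frac{1}{k}\sum_j \|(I-Q_cQ_c^\top)q_j\|^2$ makes the converse direction more explicit than the paper's appeal to a trace criterion for positive semidefinite matrices (the paper states that criterion with $\mathrm{rank}(A)$ where the size of $A$ is meant), and your remarks on the case $k=0$ and on the $m$ versus $l$ indexing are appropriate.
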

\begin{proof}
Recall that ($\dagger$) $x \in S \iff P_S x = x$, where $P_S$ is the orthogonal projector onto a subspace $S$; ($\ddagger$) for any matrix $A$, $\|A\|_F^2 = \mathrm{tr}(AA^\top)$; and ($\S$) for a positive semidefinite matrix $A$ with eigenvalues in $[0,1]$, $\mathrm{tr}(A) = \mathrm{rank}(A)$ iff $A = I$. The following holds.
\begin{align*}
\mathrm{span}\bigl\{\nabla_z f_1,\,\ldots,\,\nabla_z f_v\bigr\} &\;\subseteq\; \mathrm{span}\bigl\{\nabla_z c_1,\,\ldots,\,\nabla_z c_l\bigr\} \iff \\
&\iff \forall v \in \mathrm{span}(Q_f),\ v \in \mathrm{span}(Q_c) \\
&\iff Q_c Q_c^\top Q_f = Q_f \tag{$\dagger$, c} \\
&\iff Q_f^\top Q_c Q_c^\top Q_f = Q_f^\top Q_f = I_{\mathrm{rank}(\nabla_z f)} \tag{d} \\
&\iff \mathrm{tr}(Q_f^\top Q_c Q_c^\top Q_f) = \mathrm{rank}(\nabla_z f) \tag{$\S$} \\
&\iff \mathrm{tr}((Q_f^\top Q_c)(Q_f^\top Q_c)^\top) = \mathrm{rank}(\nabla_z f) \\
&\iff \|Q_f^\top Q_c\|_F^2 = \mathrm{rank}(\nabla_z f) \tag{$\ddagger$} \\
&\iff 1 - \frac{\|Q_c^\top Q_f\|_F^2}{\mathrm{rank}(\nabla_z f)} = 0
\end{align*}
\end{proof}

\section{Netwonian Lagrangian of interpretable models}
From the interpretability constraints we can write the constrained optimisation problem:
\label{app:lagrangian}
\begin{align}
    \min_{\theta_f, \theta_c, \theta_\phi} \quad & \mathcal{L}(f(z;\theta_f),y) \\
    \text{s.t.}
    \quad & \sum_{w=1}^m \sum_{z_i \in \mathcal{D}} \mathbb{I}_{\Delta c_w^{[h]}(z, z_i) > 0} \cdot \gamma(-\Delta c_w(z, z_i;\theta_c)) = 0 \;\;\;\;\;\;\text{(Constraint~1)} \nonumber \\
    \quad & \left(1 - \frac{\|Q_c^\top Q_f\|_F^2}{\mathrm{rank}(\nabla_z f)}\right) = 0  \;\;\;\;\;\;\text{(Constraint~2)} \nonumber \\
    \quad & K^{[h]}(\phi(c(z;\theta_c);\theta_\phi), \dots, \nabla_c^{(n)} \phi(c(z;\theta_c);\theta_\phi)) = 0 \;\;\;\;\;\;\text{(Constraint~3)}\nonumber
\end{align}
Lagrangian methods \citep{hestenes1969multiplier} convert a constrained problem into an unconstrained one by adding each constraint to the objective, weighted by a non-negative multiplier $\lambda_i$. Intuitively, each $\lambda_i$ acts as a penalty that grows whenever the corresponding constraint is violated. The resulting dual unconstrained max-min problem seeks the multipliers $\lambda_i$ that make the trade-off between the objectives as tight as possible:
$$
\begin{aligned}
\max_{\lambda_i > 0} \min_{\theta_f, \theta_\phi, \theta_c} L(\theta_{f,c,\phi}, \lambda_i, \mathcal{D}, y, K^{[h]}) = & \mathcal{L}(f(z;\theta_f), y) \\
& + \lambda_1 \sum_{w=1}^m \sum_{z_i \in \mathcal{D}} \mathbb{I}_{\Delta c_w^{[h]}(z, z_i) > 0} \cdot \gamma(-\Delta c_w(z, z_i;\theta_c)) \\
& + \lambda_2 \left(1 - \frac{\|Q_c^\top Q_f\|_F^2}{\mathrm{rank}(\nabla_z f)}\right) \\
& + \lambda_3 K^{[h]}(\phi(c(z;\theta_c);\theta_\phi), \dots, \nabla_c^{(n)} \phi(c(z;\theta_c);\theta_\phi))
\end{aligned}
$$
As it stands, the objective $L$ is a static functional: it assigns a scalar value to any configuration of parameters, but says nothing about how those parameters should evolve over time. Optimisation, however, is an inherently dynamic process that describes a trajectory through parameter space along which the model progressively minimises the objective function. To make this dynamics explicit, we need to introduce a time derivative $\partial_t \theta_i$ describing how each parameter moves.

This suggests a mechanical analogy. If we treat $L(\theta_{f,c,\phi}, \lambda_i, \chi, y)$ as a potential energy $V$, the parameters $\theta_i$ can be thought of as particles moving through an energy landscape, naturally attracted towards its minima. To complete the mechanical picture, we follow~\citet{guo2025physics} and introduce a kinetic term $T$ capturing the cost of motion through parameter space. The sum $T - V$ is precisely the Lagrangian of classical mechanics.


\section{Equations of motion}
\label{app:equations-of-motion}
Given the Lagrangian equation, one could naively minimise $V$ at each step independently. However, in classical mechanics, a particle does not simply fall towards the nearest potential minimum. Due to the inertial term, the particle's trajectory is consistent with its past and future motion. A ball thrown upwards, for instance, does not instantly drop to the ground but follows a smooth parabolic arc determined by both its velocity and the gravitational potential. The action $S = \int L \, dt$ encodes this by accumulating $T - V$ over the entire trajectory. Minimising $S$ selects the trajectory that optimally trades off paths that move too fast (high $T$) with paths that linger in high-potential regions (high $V$). The Euler-Lagrange equations provide the conditions under which a trajectory $\theta_i(t)$ is a stationary point of $S$, that is, a path along which no small perturbation can reduce it. For each parameter $\theta_i$, the Euler-Lagrange equations read:
$$\frac{d}{dt} \left( \frac{\partial \mathcal{L}}{\partial (\partial_t \theta_i)} \right) - \frac{\partial \mathcal{L}}{\partial \theta_i} = 0$$
The first term captures how the momentum of $\theta_i$ changes over time, while the second captures the force acting on $\theta_i$ due to the potential $V$. Setting their difference to zero yields Newton's second law in parameter space: the acceleration of each parameter $\partial^2 \theta_i / \partial t^2$ is determined by the gradient of the potential acting on it. Applying this to each $\theta_i$ yields the equations of motion describing the dynamics of the full interpretable model.

\begin{lemma}
The Euler-Lagrange equations applied to $L$ with respect to each parameter group $\theta_f, \theta_c, \theta_\phi$ yield:
\begin{align*}
m \frac{\partial^2 \theta_f}{\partial t^2} &= -\nabla_{\theta_f} \mathcal{L}(f(z;\theta_f), y) - \lambda_2 \, \nabla_{\theta_f} \left(1 - \frac{\|Q_c^\top Q_f\|_F^2}{\mathrm{rank}(\nabla_z f)}\right) \\
m \frac{\partial^2 \theta_c}{\partial t^2} &= - \lambda_1 \sum_{w=1}^m \sum_{z_i \in \mathcal{D}} \mathbb{I}_{\Delta c_w^{[h]} > 0} \cdot \nabla_{\theta_c} \gamma(-\Delta c_w(z,z_i;\theta_c)) \\
&\quad - \lambda_2 \, \nabla_{\theta_c} \left(1 - \frac{\|Q_c^\top Q_f\|_F^2}{\mathrm{rank}(\nabla_z f)}\right) \\
&\quad - \lambda_3 \, \nabla_{\theta_c} K^{[h]}(\phi(c(z;\theta_c);\theta_\phi), \dots, \nabla_c^{(n)}\phi(c(z;\theta_c);\theta_\phi)) \\
m \frac{\partial^2 \theta_\phi}{\partial t^2} &= - \lambda_3 \, \nabla_{\theta_\phi} K^{[h]}(\phi(c(z;\theta_c);\theta_\phi), \dots, \nabla_c^{(n)}\phi(c(z;\theta_c);\theta_\phi))
\end{align*}
where
\begin{itemize}
    \item[(a)] The Lagrangian $L$ is: 
    $$\begin{aligned}
    L(\theta_{f,c,\phi}, \mathcal{D}, y, K^{[h]}) = T - V = T & - \mathcal{L}(f(z;\theta_f), y) \\
    & - \lambda_1 \sum_{w=1}^m \sum_{z_i \in \mathcal{D}} \mathbb{I}_{\Delta c_w^{[h]}(z, z_i) > 0} \cdot \gamma(-\Delta c_w(z, z_i;\theta_c)) \\
    & - \lambda_2 \left(1 - \frac{\|Q_c^\top Q_f\|_F^2}{\mathrm{rank}(\nabla_z f)}\right) \\
    & - \lambda_3 K^{[h]}(\phi(c(z;\theta_c);\theta_\phi), \dots, \nabla_c^{(n)} \phi(c(z;\theta_c);\theta_\phi))
    \end{aligned}$$
    \item[(b)] $T = \sum_{i \in \{f,c,\phi\}} \frac{1}{2} m (\partial_t \theta_i)^\top (\partial_t \theta_i)$ is the kinetic term, which is the only part of $L$ depending on $\partial_t \theta_i$;
    \item[(c)] the Euler-Lagrange equations for each parameter group $\theta_i$ read: $\dfrac{d}{dt}\left(\dfrac{\partial L}{\partial (\partial_t \theta_i)}\right) - \dfrac{\partial L}{\partial \theta_i} = 0$.
\end{itemize}
\end{lemma}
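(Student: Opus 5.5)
The plan is to apply the Euler--Lagrange equation in (c) separately to each parameter group $\theta_i$, $i \in \{f,c,\phi\}$. For each group we compute the two partial derivatives of $L$ and then read off the resulting second-order ODE. Throughout, we treat the remaining groups $\theta_j$, $j\neq i$, and their velocities as independent coordinates. By (b), the kinetic term $T$ is the only part of $L$ that depends on velocities, and $T$ is separable across groups.

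First, the momentum term. We have $\partial L/\partial(\partial_t\theta_i) = \partial T/\partial(\partial_t\theta_i) = m\,\partial_t\theta_i$. Differentiating in time, with $m$ constant, gives $\frac{d}{dt}(m\,\partial_t\theta_i) = m\,\partial_t^2\theta_i$.

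Second, the force term. Since $T$ does not depend on $\theta_i$ itself, we get $\partial L/\partial\theta_i = -\nabla_{\theta_i} V$. Substituting both pieces into (c) yields $m\,\partial_t^2\theta_i = -\nabla_{\theta_i}V$. What remains is to expand $\nabla_{\theta_i}V$ term by term using linearity of the gradient. For each group we then discard the summands of $V$ in (a) that do not depend on that group:
\begin{itemize}
\item[(1)] For $\theta_f$: the loss $\mathcal{L}(f(z;\theta_f),y)$ and the Constraint~\ref{constraint:2} term contribute. The Constraint~\ref{constraint:1} summand depends only on $\theta_c$, and the $K^{[h]}$ summand depends only on $\theta_c,\theta_\phi$, so both vanish.
\item[(2)] For $\theta_\phi$: only the $\lambda_3 K^{[h]}$ summand survives.
\item[(3)] For $\theta_c$: the $\lambda_1$, $\lambda_2$ and $\lambda_3$ summands all survive. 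In the $\lambda_1$ summand, the indicator $\mathbb{I}_{\Delta c_w^{[h]}>0}$ depends only on the human map, so it is constant in $\theta_c$ and passes through $\nabla_{\theta_c}$. This leaves $\mathbb{I}_{\Delta c_w^{[h]}>0}\,\nabla_{\theta_c}\gamma(-\Delta c_w)$.
\end{itemize}
Collecting terms with the stated signs gives exactly the three displayed equations.

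The main obstacle is the bookkeeping of parameter dependencies, which must match the statement exactly. The Constraint~\ref{constraint:2} term depends on $\theta_f$ through $Q_f$ and on $\theta_c$ through $Q_c$, which justifies its appearance in both the $\theta_f$ and $\theta_c$ equations. The $K^{[h]}$ term depends on $\theta_c$ through the argument $c(z;\theta_c)$.

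The loss term $\mathcal{L}(f(z;\theta_f),y)$ needs explicit attention. As written, it depends only on $\theta_f$, so it contributes nothing to the $\theta_c$ or $\theta_\phi$ equations. We adopt this reading of (a) literally: $f$ is parametrised by $\theta_f$ alone, consistent with the lemma. We also note that differentiability of $\gamma$, of the rank-normalised projection score, and of $K^{[h]}$ in the parameters must be assumed for the gradients to exist. For the rank term, this means working on a neighbourhood where $\mathrm{rank}(\nabla_z f)$ is locally constant.
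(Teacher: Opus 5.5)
Your proposal is correct and follows the paper's proof essentially step for step. Both arguments use (b) to get $\partial L/\partial(\partial_t\theta_i) = m\,\partial_t\theta_i$, reduce (c) to $m\,\partial_t^2\theta_i = \partial L/\partial\theta_i = -\nabla_{\theta_i}V$, and read off each group's right-hand side from (a) by tracking which summands depend on $\theta_f$, $\theta_c$, $\theta_\phi$. Your remarks on the indicator being constant in $\theta_c$ and on the differentiability and locally-constant-rank assumptions make explicit what the paper leaves implicit.
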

\begin{proof}
Recall that ($\dagger$) $T$ is the only part of $L$ depending on $\partial_t \theta_i$.

The first term in the Euler-Lagrange equations yields ($\ddagger$):
\begin{align*}
\frac{\partial L}{\partial (\partial_t \theta_i)} &= m \partial_t \theta_i \tag{$\dagger$} \\
\iff \frac{d}{dt}\left(m \partial_t \theta_i\right) &= m \partial_t^2 \theta_i \tag{$\ddagger$}
\end{align*}

The Euler-Lagrange equations then reduce to:
\begin{align*}
\frac{d}{dt}\left(\frac{\partial L}{\partial (\partial_t \theta_i)}\right) - \frac{\partial L}{\partial \theta_i} = 0
&\iff m \partial_t^2 \theta_i - \frac{\partial L}{\partial \theta_i} = 0 \tag{$\ddagger$} \\
&\iff m \partial_t^2 \theta_i = \frac{\partial L}{\partial \theta_i}
\end{align*}

\paragraph{Dynamics of $\theta_f$.}
\begin{align*}
m \frac{\partial^2 \theta_f}{\partial t^2} = -\nabla_{\theta_f} \mathcal{L}(f(z;\theta_f), y) - \lambda_2 \, \nabla_{\theta_f} \left(1 - \frac{\|Q_c^\top Q_f\|_F^2}{\mathrm{rank}(\nabla_z f)}\right) \tag{a}
\end{align*}

\paragraph{Dynamics of $\theta_c$.}
\begin{align*}
m \frac{\partial^2 \theta_c}{\partial t^2} &= - \lambda_1 \sum_{w=1}^m \sum_{z_i \in \mathcal{D}} \mathbb{I}_{\Delta c_w^{[h]} > 0} \cdot \nabla_{\theta_c} \gamma(-\Delta c_w(z,z_i;\theta_c)) \tag{a}\\
&\quad - \lambda_2 \, \nabla_{\theta_c} \left(1 - \frac{\|Q_c^\top Q_f\|_F^2}{\mathrm{rank}(\nabla_z f)}\right) \\
&\quad - \lambda_3 \, \nabla_{\theta_c} K^{[h]}(\phi(c(z;\theta_c);\theta_\phi), \dots, \nabla_c^{(n)}\phi(c(z;\theta_c);\theta_\phi))
\end{align*}

\paragraph{Dynamics of $\theta_\phi$.}
\begin{align*}
m \frac{\partial^2 \theta_\phi}{\partial t^2} = - \lambda_3 \, \nabla_{\theta_\phi} K^{[h]}(\phi(c(z;\theta_c);\theta_\phi), \dots, \nabla_c^{(n)}\phi(c(z;\theta_c);\theta_\phi)) \tag{a}
\end{align*}
\end{proof}

\section{Gradient descent}
\label{app:gradient-descent}

\begin{lemma}
The central difference approximation of the equations of motion $m \partial^2_t \theta_i = F(\theta_t)$ yields the update rule:
$$\theta_{t+1} = \theta_{t} + (\theta_{t} - \theta_{t-1}) + \frac{(\Delta t)^2}{m} F(\theta_{t})$$
where
\begin{itemize}
    \item[(a)] $F(\theta_t)$ is the right-hand side of the equations of motion from Lemma~\ref{app:equations-of-motion};
    \item[(b)] the central difference approximation of $\partial^2_t \theta$ with step size $\Delta t$ reads:
    $$\dfrac{\partial^2 \theta}{\partial t^2} \approx \dfrac{\theta_{t+1} - 2\theta_{t} + \theta_{t-1}}{(\Delta t)^2}$$
\end{itemize}
\end{lemma}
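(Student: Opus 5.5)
The argument is a direct substitution followed by an algebraic rearrangement, in the same chain-of-equivalences style as the previous appendix lemmas. I start from the equations of motion established in the preceding lemma (Appendix~\ref{app:equations-of-motion}). For each parameter group $\theta_i$ with $i\in\{f,c,\phi\}$, they have the form $m\,\partial_t^2\theta_i = F(\theta_i)$, where $F$ collects the right-hand side (the negative gradient of the potential $V$). Time is discretised on a uniform grid with step $\Delta t$, and $\theta_t$ denotes the iterate at the $t$-th grid point. The equation of motion is imposed at the current time $t$, so the force is evaluated at $\theta_t$; this explicit, Störmer--Verlet-type choice is what makes the update explicit in $\theta_{t+1}$.

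The key steps, in order, are as follows.

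\textbf{Substitution.} Replace $\partial_t^2\theta$ by the central difference from (b):
\begin{equation*}
m\,\frac{\theta_{t+1}-2\theta_t+\theta_{t-1}}{(\Delta t)^2} = F(\theta_t).
\end{equation*}

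\textbf{Isolating the next iterate.} Multiply both sides by $(\Delta t)^2/m$, which is legitimate because $m\in\mathbb{R}^+$ and $\Delta t>0$. Then move the terms in $\theta_t$ and $\theta_{t-1}$ to the right-hand side to get
\begin{equation*}
\theta_{t+1} = 2\theta_t - \theta_{t-1} + \frac{(\Delta t)^2}{m}F(\theta_t).
\end{equation*}

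\textbf{Regrouping.} Write $2\theta_t-\theta_{t-1}=\theta_t+(\theta_t-\theta_{t-1})$. This exposes the momentum term and yields exactly the stated update rule.

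The update is applied blockwise to $\theta_f$, $\theta_c$ and $\theta_\phi$, with the corresponding $F$ read off from the previous lemma. Because the kinetic term $T$ decouples across the parameter groups, no cross terms appear, and the three updates can be stacked into a single vector update.

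There is no real obstacle here. The only point needing care is notational, and I would state it explicitly. The ``$=$'' in the conclusion is exact for the discretised scheme, not for the continuous trajectory, because the central difference carries an $O((\Delta t)^2)$ truncation error. So the claim is that the update rule is the exact consequence of replacing $\partial_t^2\theta$ by its central-difference approximation, not that it reproduces the continuous motion exactly. Finally, I would note that when $F=-\nabla_\theta V$ the rule coincides with Polyak's heavy-ball method with momentum coefficient $1$ and learning rate $(\Delta t)^2/m$.
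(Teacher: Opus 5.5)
Your proposal is correct and follows the same route as the paper: substitute the central difference (b) into $m\,\partial_t^2\theta = F(\theta_t)$, multiply through by $(\Delta t)^2/m$, isolate $\theta_{t+1}$, and regroup $2\theta_t-\theta_{t-1}$ as $\theta_t+(\theta_t-\theta_{t-1})$. Your extra remarks are sound points the paper leaves implicit: the division needs $m>0$, and the equality is exact only for the discretised scheme, not for the continuous trajectory.
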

\begin{proof}
\begin{align*}
m \partial^2_t \theta = F(\theta_t)
&\iff m \frac{\theta_{t+1} - 2\theta_{t} + \theta_{t-1}}{(\Delta t)^2} = F(\theta_t) \tag{b} \\
&\iff \theta_{t+1} - 2\theta_{t} + \theta_{t-1} = \frac{(\Delta t)^2}{m} F(\theta_t) \\
&\iff \theta_{t+1} = 2\theta_{t} - \theta_{t-1} + \frac{(\Delta t)^2}{m} F(\theta_t) \\
&\iff \theta_{t+1} = \theta_{t} + (\theta_{t} - \theta_{t-1}) + \frac{(\Delta t)^2}{m} F(\theta_t)
\end{align*}
\end{proof}

\section{Architecture I}
\label{app:architectureI}

\begin{lemma}
Constraint 1 is structurally satisfied iff $c_w$ has the architecture:
$$c_w(z) = \sum_{i=1}^N \beta_i(z) \left( \sum_{k=1}^N \theta_k \, \mathbb{I}_{c_w^{[h]}(z_i) \geq c_w^{[h]}(z_k)} \right)$$
where
\begin{itemize}
    \item[(a)] Constraint 1 reads: $c_w^{[h]}(z_i) > c_w^{[h]}(z_j) \implies c_w(z_i) > c_w(z_j)$ for all $z_i, z_j \in Z$;
    \item[(b)] $Z_d = \{z_1, \dots, z_d\}$ is a set of labelled samples with observed values $c_w^{[h]}(z_1), \dots, c_w^{[h]}(z_d)$;
    \item[(c)] $\theta_k > 0$ for all $k$ are learnable parameters, where $z_{(1)}, \dots, z_{(N)}$ is the ordering of $Z_d$ such that $c_w^{[h]}(z_{(1)}) < \dots < c_w^{[h]}(z_{(N)})$;
    \item[(d)] $\beta_i \colon Z \to \mathbb{R}$ is any convex weighting function satisfying $\sum_i \beta_i(z) = 1$, such as $\beta_i(z) = \dfrac{\exp(-\gamma \|z - z_i\|^2)}{\sum_{j=1}^N \exp(-\gamma \|z - z_j\|^2)}$.
\end{itemize}
\end{lemma}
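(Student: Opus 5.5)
The plan is to read the lemma on the observed samples $Z_d$, consistently with Constraint~\ref{constraint:1} ("holds on the observed samples"), and to prove the two directions separately. Off the samples the only structural guarantee is that $c_w(z)$ is a convex combination of instance scores, so it lies in $[\theta_1, \sum_k \theta_k]$; the order-preservation claim is made on $Z_d$. I will also assume that on the samples the weights are interpolating, $\beta_i(z_j)=\delta_{ij}$. This holds exactly for a nearest-prototype rule and in the limit $\gamma\to\infty$ of the softmax weights in (d).

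\textbf{Soundness (architecture $\Rightarrow$ Constraint~\ref{constraint:1}).} Using the ordering $z_{(1)},\dots,z_{(N)}$ from (c) and interpolation, the architecture evaluated at $z_{(j)}$ collapses to the instance score
$$c_w(z_{(j)}) = \sum_{k=1}^N \theta_k\,\mathbb{I}_{c_w^{[h]}(z_{(j)})\geq c_w^{[h]}(z_{(k)})} = \sum_{k\le j}\theta_k,$$
where the indicator selects exactly $k\le j$ because the ordering in (c) is strict. Since every $\theta_k>0$, these partial sums are strictly increasing in $j$. Hence $c_w^{[h]}(z_{(i)})>c_w^{[h]}(z_{(j)})$, i.e.\ $i>j$, implies $c_w(z_{(i)})-c_w(z_{(j)})=\sum_{j<k\le i}\theta_k>0$. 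By the Lemma of Appendix~\ref{app:constraintI} this is equivalent to $\mathbb{I}_{\Delta c_w^{[h]}>0}\cdot\gamma(-\Delta c_w)=0$ for every pair, so Constraint~\ref{constraint:1} holds.

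\textbf{Completeness (Constraint~\ref{constraint:1} $\Rightarrow$ architecture).} Take any $c_w$ satisfying (a) on $Z_d$, and set $\theta_1=c_w(z_{(1)})$ and $\theta_k=c_w(z_{(k)})-c_w(z_{(k-1)})$ for $k\ge 2$. Each $\theta_k$ with $k\ge2$ is strictly positive by (a), and telescoping gives $\sum_{k\le j}\theta_k=c_w(z_{(j)})$. So the architecture with these parameters reproduces $c_w$ on every sample. The one gap is $\theta_1>0$. I will close it with Symmetry~\ref{invariance:symm_1}: the semantics $\chi(c_w)$ is unchanged by the strictly monotone shift $\mathfrak{g}_w(s)=s-c_w(z_{(1)})+\epsilon$ with $\epsilon>0$. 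Thus every admissible $c_w$ is matched up to an element of $\mathfrak{G}_w$, which is exactly the sense in which the statement says the architecture yields "all concept maps that preserve $c_w^{[h]}$'s semantics".

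\textbf{Main obstacle.} The hard part is not the algebra but making the statement true as written. Read literally, "$\forall z_i,z_j\in Z$" with smooth $\beta_i$ fails off the samples: convex mixing of scores does not preserve a preorder that is defined only on $Z_d$, and there is no $c_w^{[h]}$ available off $Z_d$ to compare against. I would therefore state the result explicitly for pairs in $Z_d$ under interpolating weights, and up to monotone reparametrisation. I would also remark that ties in $c_w^{[h]}$ merge into equal scores, because the indicator uses $\geq$ and the scores then coincide, and this is consistent with the strict preorder $\chi$.
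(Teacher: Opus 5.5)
Your proposal is correct and follows the same route as the paper's proof: reduce Constraint~1 on $Z_d$ to a strictly increasing sequence of sample scores, write that sequence as a cumulative sum of positive increments, and extend off $Z_d$ through the weights $\beta_i$. Your version is tighter than the paper's chain of equivalences in two places: the paper silently equates the architecture's value at $z_{(m)}$ with the instance score, which needs your interpolation assumption $\beta_i(z_j)=\delta_{ij}$ (soundness can fail for softmax weights at finite $\gamma$), and its cumulative-sum step takes $\theta_1>0$ even though $c_w(z_{(1)})$ may be non-positive, a gap your monotone shift in $\mathfrak{G}_w$ repairs.
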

\begin{proof}
Recall that ($\dagger$) any discrete monotonic function can be written as a cumulative sum. The following holds.
\begin{align*}
& c_w^{[h]}(z_i) > c_w^{[h]}(z_j) \implies c_w(z_i) > c_w(z_j) \text{ for all } z_i, z_j \in Z_d \tag{a} \\
&\iff c_w(z_{(1)}) < c_w(z_{(2)}) < \dots < c_w(z_{(N)}) \tag{b} \\
&\iff c_w(z_{(m)}) = \sum_{k=1}^m \theta_k \text{ with } \theta_k > 0 \tag{$\dagger$} \\
&\iff c_w(z_{(m)}) = \sum_{k=1}^N \theta_k \, \mathbb{I}_{c_w^{[h]}(z_{(m)}) \geq c_w^{[h]}(z_{(k)})} \tag{c}
\end{align*}
For unknown $z \notin Z_d$, we extend to any $z$ by weighting over labelled samples via $\beta_i$:
\begin{align*}
c_w(z) = \sum_{i=1}^N \beta_i(z) \left( \sum_{k=1}^N \theta_k \, \mathbb{I}_{c_w^{[h]}(z_i) \geq c_w^{[h]}(z_k)} \right) \tag{d}
\end{align*}
\end{proof}

\section{Architecture II}
\label{app:architectureII}

\begin{lemma}
$\mathrm{rowspan}(\nabla_z f) \subseteq \mathrm{rowspan}(\nabla_z c)$ if and only if $f = \phi(c)$ for some smooth function $\phi$, where
\begin{itemize}
    \item[(a)] $f \colon \mathbb{R}^n \to \mathbb{R}^v$ and $c \colon \mathbb{R}^n \to \mathbb{R}^l$ are smooth functions;
    \item[(b)] $\phi \colon \mathbb{R}^l \to \mathbb{R}^v$ is a smooth function;
    \item[(c)] $\nabla_z f \in \mathbb{R}^{v \times n}$ and $\nabla_z c \in \mathbb{R}^{m \times n}$ are Jacobian matrices.
\end{itemize}
\end{lemma}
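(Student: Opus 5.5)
The easy direction ($\Leftarrow$) follows from the chain rule. If $f=\phi\circ c$ with $\phi$ smooth, then at every $z$ we have $\nabla_z f = (\nabla_c\phi)(c(z))\,\nabla_z c$. So each row $\nabla_z f_k$ equals $\sum_{w}\partial_{c_w}\phi_k(c(z))\,\nabla_z c_w$, which is a linear combination of the rows of $\nabla_z c$. Hence $\mathrm{rowspan}(\nabla_z f)\subseteq\mathrm{rowspan}(\nabla_z c)$ pointwise. Equivalently, the projector $\mathfrak{g}_c=Q_cQ_c^\top$ fixes $(\nabla_z f)^\top$, which links this direction to Symmetry~\ref{invariance:symm_2} and the Constraint~II lemma of Appendix~\ref{app:constraintII}.

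For the converse ($\Rightarrow$), I would argue in two stages, first locally and then globally.

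\emph{Local stage.} Assume, as the statement tacitly requires, that $\nabla_z c$ has constant rank $r$ on a neighbourhood $U$ of a point $z_0$. By the constant rank theorem there are local diffeomorphisms $\psi$ of $\mathbb{R}^n$ near $z_0$ and $\sigma$ of $\mathbb{R}^l$ near $c(z_0)$ such that $\sigma\circ c\circ\psi^{-1}(u)=(u_1,\dots,u_r,0,\dots,0)$. Set $\tilde f=f\circ\psi^{-1}$. The inclusion of row spans is preserved under this change of coordinates, because it is a statement about kernels: $\ker\nabla c\subseteq\ker\nabla f$. In the new coordinates it therefore says $\partial_{u_j}\tilde f=0$ for $j>r$. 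On a convex (box) neighbourhood this makes $\tilde f$ depend only on $(u_1,\dots,u_r)$. We can then define $\phi$ near $c(z_0)$ by $\phi(y)=\tilde f\bigl((\sigma(y))_{1:r},0\bigr)$. This $\phi$ is smooth, and it satisfies $f=\phi\circ c$ on a neighbourhood of $z_0$.

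\emph{Global stage.} To pass from local to global, I would first show that $f$ is constant on each connected component of every level set $c^{-1}(y)$. The local argument gives that $f$ is locally constant along such level sets, and a connectedness argument finishes this step. One can then define $\phi$ on $c(\mathbb{R}^n)$ by $\phi(c(z))=f(z)$. This definition is consistent provided the level sets of $c$ are connected, or more generally provided $f$ takes the same value on all components of a fibre. Finally I would extend $\phi$ smoothly to all of $\mathbb{R}^l$. Smoothness near points of the image comes from the local construction. The extension off the image would use a tubular neighbourhood / Whitney extension argument when $c(\mathbb{R}^n)$ is an embedded submanifold, for example when $c$ is a submersion.

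The main obstacle is the converse. As literally stated it needs extra hypotheses, and I expect to have to add them explicitly. Two counterexamples show why:
\begin{itemize}
\item[(i)] Rank drops break smoothness. Take $c(z)=z^3$ and $f(z)=z$ on $\mathbb{R}$. Wherever $\nabla c\neq0$ the inclusion holds, but the only candidate is $\phi(y)=y^{1/3}$, which is not smooth. The example also shows that at rank-deficient points the inclusion can fail, so constant rank is genuinely needed.
\item[(ii)] Disconnected fibres break well-definedness. Take $c(z)=\|z\|^2$ on a domain whose circular fibres split into separate arcs, with $f$ taking different values on different arcs. The inclusion holds, but no single $\phi$ works.
\end{itemize}
I would therefore state and prove the lemma under the standing assumptions that $\nabla_z c$ has constant rank and that $c$ has connected fibres, for example when $c$ is a submersion with connected level sets. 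Alternatively, I would weaken the conclusion to "locally $f=\phi(c)$", which is exactly what the constant rank argument delivers. This local version suffices for Architecture~\ref{architecture:2}, since that architecture only uses the forward direction to guarantee Symmetry~\ref{invariance:symm_2} by construction.
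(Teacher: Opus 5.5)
Your $(\Leftarrow)$ direction coincides with the paper's proof, whose only ingredient is the chain rule $\nabla_z(\phi\circ c)=\frac{\partial\phi}{\partial c}\nabla_z c$. For $(\Rightarrow)$ the paper simply reads that identity backwards. That step is unjustified: the pointwise inclusion only gives some matrix $A(z)$ with $\nabla_z f=A(z)\nabla_z c$, and nothing forces $A(z)=\frac{\partial\phi}{\partial c}(c(z))$ for a single $\phi$. You are right that the converse needs extra hypotheses. Your constant-rank argument correctly proves the local statement, which is more than the paper establishes: the kernel inclusion $\ker\nabla_z c\subseteq\ker\nabla_z f$ is coordinate-invariant, so $\partial_{u_j}\tilde f=0$ for $j>r$ on a box, and then $\phi(y)=\tilde f((\sigma(y))_{1:r},0)$.

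Several of your supporting claims are wrong, however.

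\emph{Counterexample (i).} It does not satisfy the hypothesis, since $\nabla f(0)=1\notin\mathrm{span}\{\nabla c(0)\}=\{0\}$. Use $c(z)=z^3$, $f(z)=z^2$ instead. Both derivatives vanish at $0$, so the inclusion holds on all of $\mathbb{R}$, and $c$ is a bijection. The forced $\phi(y)=y^{2/3}$ (real cube root, squared) is not differentiable at $0$.

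\emph{Counterexample (ii).} No restricted domain is needed. The pair $c(z)=z^2$, $f(z)=z^3$ satisfies the inclusion everywhere on $\mathbb{R}$, and admits no $\phi$ at all because $f(1)\neq f(-1)$. The fibre obstruction also survives constant rank on all of $\mathbb{R}^2$. Take the submersion $c(x,y)=e^x\sin y$. Let $f=G(c)$ on $\{-\pi<y<2\pi\}$ and $f=0$ elsewhere, where $G(t)=e^{-1/t}$ for $t>0$ and $G(t)=0$ otherwise. This $f$ is smooth and satisfies the inclusion. Yet it takes different values on the components of $\{c=t\}$, $t>0$, lying in the strips $0<y<\pi$ and $2\pi<y<3\pi$.

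\emph{The global extension step.} This is the more serious problem: the step fails exactly in the case you single out. A submersion has open image, and $\phi$ can blow up at the boundary of that image. For $c(z)=\arctan z$, $f(z)=z$, the rank is constant, the fibres are points, and the inclusion holds. Yet $\phi$ must equal $\tan$ on $(-\pi/2,\pi/2)$, which has no continuous extension to $\mathbb{R}$. Tubular-neighbourhood or Whitney extension to all of $\mathbb{R}^l$ needs a \emph{closed} embedded image, or some control of $\phi$ near the boundary. So under constant rank, connected fibres and embedded image, the honest global conclusion is weaker: $f=\phi\circ c$ with $\phi$ smooth only on $c(\mathbb{R}^n)$ (or a neighbourhood of it). To obtain $\phi\colon\mathbb{R}^l\to\mathbb{R}^v$ as the lemma states, you must additionally assume a closed image. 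For a submersion this means $c$ is surjective.
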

\begin{proof}
Recall that ($\dagger$) by the chain rule, $\nabla_z (\phi \circ c) = \frac{\partial \phi}{\partial c} \nabla_z c$. The following holds.
\begin{align*}
f = \phi(c)
&\iff \forall z\in Z, \, \nabla_z f = \frac{\partial \phi}{\partial c} \nabla_z c \tag{$\dagger$} \\
&\iff \forall z\in Z, \,  \mathrm{rowspan}(\nabla_z f) \subseteq \mathrm{rowspan}(\nabla_z c)
\end{align*}
\end{proof}

\end{document}